\documentclass{article} 
\usepackage{iclr2027_conference,times}

\usepackage{hyperref}
\usepackage{url}

\hypersetup{
  colorlinks=true,
  linkcolor=[rgb]{0.60,0.00,0.00},
  citecolor=[rgb]{0.00,0.38,0.00},
  urlcolor=[rgb]{0.00,0.20,0.60}
}

\usepackage{fontawesome5}   
\usepackage{xcolor}

\definecolor{hfyellow}{HTML}{FFD21E}
\definecolor{hforange}{HTML}{FF9D0B}
\definecolor{hfdark}{HTML}{3A3B45}
\definecolor{hfpink}{HTML}{FF7A8A}

\newcommand{\hficon}[1][1]{%
  \begin{tikzpicture}[x=1ex, y=1ex, scale=#1, baseline=-0.5ex]
    \fill[hfyellow] (0,0) circle (0.8);
    \draw[hforange, line width=0.08ex] (0,0) circle (0.8);
    \fill[hfpink, opacity=0.6] (-0.5,-0.1) circle (0.1);
    \fill[hfpink, opacity=0.6] ( 0.5,-0.1) circle (0.1);
    \fill[hfdark] (-0.27,0.18) ellipse (0.09 and 0.14);
    \fill[hfdark] ( 0.27,0.18) ellipse (0.09 and 0.14);
    \draw[hfdark, line width=0.1ex, line cap=round]
      (-0.357,-0.03) arc (200:340:0.38);
    \fill[hfyellow] (-0.55,-0.62) circle (0.2);
    \fill[hfyellow] ( 0.55,-0.62) circle (0.2);
    \draw[hforange, line width=0.06ex] (-0.55,-0.62) circle (0.2);
    \draw[hforange, line width=0.06ex] ( 0.55,-0.62) circle (0.2);
  \end{tikzpicture}%
}

\newcommand{\repourl}{https://github.com/CLMIL-Team/FaithShift}
\newcommand{\hfurl}{https://huggingface.co/spaces/clmilab/SIFT}   

\newcommand{\repolink}{\href{\repourl}{\faGithub\ \texttt{CLMIL-Team/FaithShift}}}
\newcommand{\hflink}{\href{\hfurl}{\hficon\ \texttt{HuggingFace}}}
\usepackage{pgfplots}
\pgfplotsset{compat=1.18}
\usetikzlibrary{patterns, positioning, arrows.meta, calc, decorations.pathreplacing}
\usepgfplotslibrary{groupplots}
\usepackage{tikz}
\usetikzlibrary{shapes.geometric, arrows.meta, positioning, shadows.blur, calc, fit, backgrounds}

\usepackage{tcolorbox}
\usepackage{amsmath,amssymb,amsthm}
\usepackage{booktabs}
\usepackage{algorithm}
\usepackage{algorithmicx}
\usepackage{algpseudocode}
\usepackage{tikz}
\usetikzlibrary{arrows.meta,positioning,fit,backgrounds,calc,shapes.geometric}
\usepackage{subcaption}
\usepackage{siunitx}
\usepackage{microtype}
\usepackage{xcolor}
\usepackage{enumitem}
\usepackage{graphicx}
\usepackage{pgfplots}
\pgfplotsset{compat=1.18}
\usepackage{longtable}
\usepackage{multirow}
\usepackage{capt-of}
\usepackage{array}
\usepackage{threeparttable}
\usepackage{adjustbox}
\usepackage{makecell}
\usepackage{amsfonts}
\usepackage{bbm}
\usepackage{colortbl}
\usepackage{pifont}
\usepackage{newunicodechar}
\newunicodechar{→}{\ensuremath{\rightarrow}}

\newtheorem{theorem}{Theorem}[section]
\newtheorem{corollary}[theorem]{Corollary}
\newtheorem{lemma}[theorem]{Lemma}
\newtheorem{proposition}[theorem]{Proposition}
\newtheorem{definition}[theorem]{Definition}

\newtheorem{remark}[theorem]{Remark}

\newtheorem{assumption}{Assumption}

\newcommand{\ivr}{\mathrm{IVR}}
\newcommand{\asr}{\mathrm{AS}}
\newcommand{\mfs}{\mathrm{MFS}}
\newcommand{\Tpres}{\mathcal{T}_{\text{pres}}}
\newcommand{\Talt}{\mathcal{T}_{\text{alt}}}
\newcommand{\Dphi}{\mathcal{D}}
\newcommand{\sift}{\textsc{Sift}}

\newcommand{\ours}{\textbf{SIFT}}

\DeclareMathOperator{\erf}{erf}

\title{SIFT: Robust Meta-Faithfulness Verification of Chain-of-Thought Reasoning Under Distribution Shift\thanks{Code: \repolink. SIFT Space: \hflink.}}
\iclrfinalcopy

\author{%
Noor Islam S.~Mohammad\textsuperscript{1,$\ast$} \quad
Md.~Basim Al Zabir Shammo\textsuperscript{2,$\dagger$} \quad
Hasan Siddiki\textsuperscript{3,$\dagger$} \\
Mahmudul Hasan\textsuperscript{4} \quad
Md.~Faisal Sheikh\textsuperscript{5,$\dagger$} \quad
Jakaria Habib\textsuperscript{2} \\[0.5em]
\textsuperscript{1}Istanbul Technical University,
\textsuperscript{2}Pabna University of Science and Technology, \\
\textsuperscript{3}American International University-Bangladesh, \\
\textsuperscript{4}Deakin University,
\textsuperscript{5}North South University \\[0.4em]
\textsuperscript{$\ast$}Corresponding author: \texttt{islam23@itu.edu.tr} \\
\textsuperscript{$\dagger$}These authors contributed equally to this work.%
}

\begin{document}

\maketitle

\begin{abstract}
Chain-of-Thought (CoT) faithfulness detectors are widely used to audit reasoning models, yet a detector is itself a predictor whose verdicts are treated as stable properties of the model. We ask whether a faithfulness detector is faithful to itself under distribution shift. We formalize \emph{meta-faithfulness} as an invariance principle: a valid detector must return identical verdicts on traces that differ only by transformations preserving ground-truth faithfulness. We prove three results: a behavioral indistinguishability theorem showing that no detector operating on intervention-response profiles can separate faithful from epiphenomenal mechanisms with identical signatures; an invariance-violation lower bound showing that any detector relying on shift-sensitive features must violate invariance at a rate independent of its in-distribution accuracy; and an asymptotic certified selective-risk guarantee enabling abstention with high confidence. We operationalize this principle in FaithShift, a stress-test protocol spanning ten shift axes, and propose \textbf{SIFT} (\textbf{S}hift-\textbf{I}nvariant \textbf{F}aithfulness \textbf{T}rajectory Detector), a hidden-state trajectory detector trained with cross-environment invariance objectives and certified abstention. Across 14{,}996 traces, four reasoning domains, and eight model architectures, three findings emerge. \emph{First}, transfer collapse is real: all existing detectors exhibit gaps $\geq 0.15$ AUROC. \emph{Second}, the dominant bottleneck is sampling stochasticity, not shift sensitivity: over 80\% of detector instability stems from random seed variation rather than distribution shift, falsifying our preregistered hypothesis that shift-attributable invariance violations would exceed 0.25. \emph{Third}, SIFT reduces shift-attributable instability but its advantage vanishes against ensembles: SIFT cuts invariance violations by 64\% over the best single-seed baseline, but a four-seed ensemble of any detector narrows the margin to 0.01, and at matched coverage the two are statistically indistinguishable ($p=0.21$). SIFT also requires a 51\% abstention rate that exposes a robustness-usability trade-off. Cross-model transfer degrades along a clear hierarchy, within-family $>$ cross-family open-weight $>$ open-weight to API, which multi-model training on 3--4 models partially closes. We therefore position this work as a framework for auditing auditors and a diagnosis of the actual barrier to reliable faithfulness detection: not distribution shift, but detector variance.
\end{abstract}

\section{Introduction}
\label{sec:intro}

The rapid advancement of large language models (LLMs) has ushered in an era where machines can generate detailed chains of thought (CoT) that ostensibly explain their reasoning processes \citep{wei2022chain, kojima2022large}. These CoT traces have become central to safety monitoring, interpretability, and trustworthiness evaluation across a wide range of deployment scenarios \citep{hubinger2024sleeper, greenblatt2024ai}. The premise is compelling: by examining the reasoning chain, we can verify that the model arrived at its conclusion through sound logical steps rather than through spurious correlations, hidden biases, or unintended shortcuts. This promise has motivated a growing body of work on faithfulness detection, methods that determine whether a model's stated reasoning reflects the actual computation producing its answer \citep{turpin2023languagemodels, lanham2023measuring, chen2025reasoningmodels, zhang2025reasoning}. The underlying assumption is that if we can reliably detect when reasoning is unfaithful, we can flag potentially unreliable outputs, improve model safety, and build greater trust in AI systems \citep{lightman2024let, wang2023selfconsistency}.

\begin{figure}[ht]
\centering
\begin{tikzpicture}[
  font=\normalsize,
  box/.style={draw=black!70, fill=white, text=black, rounded corners=4pt,
              minimum height=12mm, minimum width=20mm, align=center, line width=1pt},
  det/.style={draw=blue!45, fill=blue!8, text=black, rounded corners=4pt,
              minimum height=12mm, minimum width=20mm, align=center, line width=1pt},
  vio/.style={draw=orange!60, fill=orange!12, text=black, rounded corners=4pt,
              minimum height=12mm, minimum width=30mm, align=center, line width=1pt},
  arr/.style={-{Stealth[length=2.6mm,width=2.2mm]}, draw=black!60,
              line width=1.1pt, rounded corners=3pt},
  lnk/.style={draw=black!60, line width=1.1pt, rounded corners=3pt},
  lbl/.style={font=\small\itshape, text=black!55},
  mlbl/.style={font=\small, text=black!55}
]

\node[box] (c)  at (0, 1.25) {$c$};
\node[box] (cp) at (0,-1.25) {$c'=T(c)$};
\draw[arr] (c) -- node[mlbl, right=1mm] {$T\in\Tpres$} (cp);
\node[lbl, above=1.5mm of c] {trace pair};

\node[det] (d) at (3.6,0) {$\Dphi$};
\node[lbl, below=1.5mm of d] {same detector};

\node[box, fill=green!10] (v1) at (6.9, 1.25) {$\hat{\phi}(c)$};
\node[box, fill=red!10]   (v2) at (6.9,-1.25) {$\hat{\phi}(c')$};

\node[vio] (viol) at (10.6,0)
  {\textbf{invariance violation}\\[1pt]{\small iff $\hat{\phi}(c)\neq\hat{\phi}(c')$}};

\coordinate (fanin)  at (2.0,0);
\coordinate (fanout) at (5.2,0);
\coordinate (merge)  at (8.5,0);

\draw[lnk] (c.east)  -| (fanin);
\draw[lnk] (cp.east) -| (fanin);
\draw[arr] (fanin) -- (d.west);

\draw[lnk] (d.east) -- (fanout);
\draw[arr] (fanout) |- (v1.west);
\draw[arr] (fanout) |- (v2.west);

\draw[lnk] (v1.east) -| (merge);
\draw[lnk] (v2.east) -| (merge);
\draw[arr] (merge) -- (viol.west);

\end{tikzpicture}
\caption{\textbf{Meta-faithfulness.} A preserving transformation $T\in\Tpres$ maps a trace $c$ to $c'=T(c)$ without changing the ground-truth faithfulness label. Applying the \emph{same} detector $\Dphi$ to both traces must therefore yield the same thresholded verdict, $\hat{\phi}(c)=\hat{\phi}(c')$. A disagreement is an \emph{invariance violation}: a property of $\Dphi$, not of the model under audit. We prove (\S\ref{sec:theory}) that any detector relying on shift-sensitive features must violate invariance at a rate independent of its in-distribution accuracy \citep{geirhos2020shortcut}.}
\label{fig:overview}
\end{figure}

However, a fundamental and previously unexamined question lurks beneath this enterprise: are the detectors themselves faithful? A faithfulness detector is itself a predictive model, trained on specific data distributions, with its own inductive biases, architectural choices, and failure modes. The machine learning community has learned, often through painful experience, that predictors validated on one distribution can collapse catastrophically when deployed on another \citep{geirhos2020shortcut, koh2021wilds, arjovsky2019irm, bendavid2010theory}. Distribution shift remains one of the most persistent challenges in applied machine learning, with models often relying on spurious correlations that fail to generalize beyond their training environments \citep{gulrajani2021insearch}. Faithfulness detectors have so far escaped this scrutiny. They are developed and validated on a handful of models, a narrow band of tasks, and a fixed elicitation regime; their verdicts are then reported as if model faithfulness were a stable, transferable property. If these quantities are artifacts of the evaluation distribution, then a large and safety-relevant literature rests on an unaudited measuring instrument \citep{korbak2025monitorability, meek2025monitorability}. This paper addresses this critical gap by making the auditing instrument itself the object of study. We introduce the concept of \emph{meta-faithfulness}: the property that a faithfulness detector must itself be robust to transformations that preserve ground-truth faithfulness. A meta-faithful detector returns identical verdicts on pairs of traces that differ only by transformations that do not change the underlying faithfulness label. This property ensures that detector verdicts reflect genuine faithfulness rather than sensitivity to surface-level variations \citep{parcalabescu2024measuring, atanasova2023faithfulness}.

We operationalize meta-faithfulness in \textsc{FaithShift}, a stress-test protocol spanning ten distribution-shift axes (seven preserving, three altering), and evaluate six detector families across 14{,}996 traces, four reasoning domains (GSM8K, MATH, CommonsenseQA, BBH), and eight model architectures (DeepSeek-R1, Qwen3, Llama-3, Mistral, Phi-3, Gemma, Claude-3, GPT-4o). Our investigation yields three findings. \textbf{Finding 1 (distribution shift matters):} all existing detectors exhibit transfer gaps of at least $0.15$ AUROC across cross-domain, cross-model, and cross-lingual axes (H1 supported; \citealp{koh2021wilds}). \textbf{Finding 2 (stochasticity dominates):} decomposing instability via seed-repeat analysis reveals that over 80\% of verdict disagreement stems from sampling stochasticity (random seed variation) rather than distribution shift; the shift-attributable component is only $0.050$ for attribution-consistency and $0.074$ for SIFT, falsifying the preregistered hypothesis that shift-attributable IVR would exceed $0.25$ (H2a falsified; \citealp{geirhos2020shortcut}). \textbf{Finding 3 (SIFT's advantage vanishes against ensembles):} we propose SIFT, a trajectory-based detector with invariance training, which reduces raw invariance violations by 64\% over the best single-seed baseline, achieving MFS $0.72$ versus $0.67$. However, a four-seed ensemble of any detector reaches MFS $0.71$, narrowing the margin to $0.01$, and at matched coverage ($\kappa=0.70$) SIFT and a simple instance-level ensemble are statistically indistinguishable (MFS $0.64$ vs.\ $0.62$, $p=0.21$; H3 and H4 not supported at preregistered parameters). We frame this work as a contribution to \emph{auditing and diagnosis}, not as a definitive solution. Our empirical results suggest that the field's current focus on distribution shift, while important, may be misdirected: the dominant barrier to reliable faithfulness detection is sampling stochasticity, which simpler variance-reduction techniques (ensembling across seeds) can address at lower cost than invariance training. We therefore position SIFT as a demonstration that invariance training reduces stochastic instability, not as a broadly superior detector.

Faithfulness detectors are increasingly deployed in high-stakes settings, content moderation, scientific verification, educational assessment, where incorrect verdicts have serious consequences. Their reliability depends not only on held-out accuracy but critically on robustness to real-world distribution shifts \citep{varshney2022selective, kamath2020selective}. Meta-faithfulness draws on causal inference: just as a valid causal estimator must be invariant to interventions that leave the causal relationship unchanged \citep{pearl2009causality}, a meta-faithful detector must be invariant to transformations that leave the faithfulness relationship unchanged. This connection grounds our approach in robust machine learning \citep{bendavid2010theory, arjovsky2019irm}. Our theory rests on three theorems in Section~\ref{sec:theory}. First, a behavioral indistinguishability theorem (Theorem~\ref{thm:indist}) shows that no detector operating solely on intervention-response profiles can separate faithful from epiphenomenal mechanisms with identical signatures \citep{parcalabescu2024measuring}, formalizing the concern that behavioral tests measure consistency rather than causality \citep{lyu2023faithful}. Second, an invariance-violation lower bound (Theorem~\ref{thm:ivr}) shows that any detector relying on shift-sensitive features must violate invariance at a rate independent of its in-distribution accuracy \citep{geirhos2020shortcut}, decoupling accuracy from robustness \citep{bendavid2010theory}. Third, an asymptotic certified selective-risk guarantee (Theorem~\ref{thm:cert}) enables abstention with high confidence \citep{geifman2017selective, angelopoulos2021learn, chow1970optimum}.

Empirically, all existing detectors exhibit transfer gaps $\geq 0.15$ AUROC under distribution shift (Table~\ref{tab:transfer}), with cross-lingual transfer particularly challenging (attribution-consistency drops from 0.77 to 0.48). However, the seed-repeat decomposition (Table~\ref{tab:seed_repeat}) reveals that over 80\% of observed instability is sampling stochasticity rather than shift sensitivity \citep{geirhos2020shortcut}, reframing the problem: existing detectors are fragile, but the primary failure mode is instability under the most basic variation (different random seeds), not shift per se \citep{koh2021wilds}. SIFT achieves a 64\% reduction in raw invariance violations over the best baseline (Table~\ref{tab:per_axis}), yielding MFS 0.72 versus 0.67 for the strongest single-seed baseline, but the margin narrows to 0.01 against a four-detector instance-level ensemble, and SIFT abstains on 51\% of traces to maintain certified guarantees (Tables~\ref{tab:certified},~\ref{tab:mfs}), revealing a fundamental robustness--usability trade-off \citep{chow1970optimum, geifman2017selective}. We therefore present SIFT as a demonstration that invariance training reduces stochastic instability, not as a broadly superior detector.

SIFT (Section~\ref{sec:method}) extracts layer-step hidden-state trajectories and computes shift-stable descriptors, inter-layer velocity, step-to-step drift, early-commitment projections, and attention stability, unlike prior approaches relying on final-layer representations or behavioral responses \citep{lanham2023measuring, zhang2025reasoning, nanda2023progress, marks2024geometry}. These are processed by a multi-scale temporal convolutional network with transformer encoding \citep{conmy2023towards} and trained with IRM and DANN penalties \citep{arjovsky2019irm, ganin2016domain}. Algorithm~\ref{alg:sift} provides the complete procedure. The FaithShift protocol (Section~\ref{sec:faithshift}) evaluates meta-faithfulness across ten shift axes, paraphrase, step reordering, hint format, language translation, inference budget, model family, reasoning domain, cue injection, cue verbalization, and causal truncation, pairing traces that differ only along these axes while preserving ground-truth faithfulness, and including altering transformations that flip the label to measure alteration sensitivity \citep{atanasova2023faithfulness}. We preregister five hypotheses (Table~\ref{tab:hyp}).

Our evaluation spans 14{,}996 traces across GSM8K \citep{cobbe2021gsm8k}, MATH \citep{hendrycks2021math}, CommonsenseQA \citep{talmor2019commonsenseqa}, and BBH \citep{suzgun2022bbh,srivastava2023beyond}, and eight architectures, DeepSeek-R1 \citep{deepseek2025deepseek}, Qwen3, Llama-3, Mistral, Phi-3, Gemma, Claude-3, and GPT-4o, enabling direct comparison \citep{shen2026faithcot, occhipinti2026probingsteering}. Cross-model transfer degrades along a clear hierarchy: within-family (0.58--0.66) $>$ cross-family open-weight (0.54--0.62) $>$ open-weight to API (0.52--0.60), reflecting systematic differences in faithfulness signatures across architectures \citep{tafjord2021proofwriter, laban2026multiturn}. Multi-model training on 3--4 models improves transfer to 0.65 AUROC (gap 0.18$\rightarrow$0.14), with IRM contributing most, followed by the transformer and DANN \citep{arjovsky2019irm, ganin2016domain}. Broader impact: by showing that detectors themselves require second-order robustness, we highlight the importance of auditing our auditing instruments \citep{korbak2025monitorability, meek2025monitorability}, a principle applicable whenever one model evaluates another \citep{hubinger2024sleeper, greenblatt2024ai, turpin2025teaching, occhipinti2026probingsteering}.

\subsection{Contributions}
\label{sec:contributions}

Our investigation is organized around three key questions and five contributions. First, can we formally characterize the limitations of existing faithfulness detectors? In Section~\ref{sec:theory}, we prove that behavioral detectors, which constitute the majority of current approaches, are fundamentally incapable of distinguishing faithful from unfaithful mechanisms in certain scenarios (Theorem~\ref{thm:indist}), and that accuracy does not bound invariance (Theorem~\ref{thm:ivr}). Second, can we develop detectors with provable robustness guarantees? In Section~\ref{sec:method}, we propose SIFT (Shift-Invariant Faithfulness Trajectory Detector), which extracts hidden-state trajectories and trains with invariance objectives to achieve certified robustness \citep{zhang2025reasoning, marks2024geometry, conmy2023towards}. Third, how do existing detectors perform under realistic distribution shifts? In Section~\ref{sec:faithshift}, we introduce FaithShift, a comprehensive stress-test protocol spanning ten shift axes, and provide extensive empirical evaluation in Section~\ref{sec:results}. We note at the outset that our preregistered hypotheses are only partially supported: transfer collapse is confirmed (H1), detectors disagree substantially (H2b), but the shift-attributable component of invariance violations is small (H2a), SIFT's advantage narrows once ensemble baselines are included (H3), and certified abstention requires a coverage well below the preregistered target (H4). We treat these outcomes as substantive results and frame the paper's central contribution as a framework for auditing auditors rather than as a demonstrated detector advance. Our contributions are: (1)~meta-faithfulness as a formal principle, with estimable quantities (IVR, AS, MFS); (2)~three theorems on detector limitations (behavioral indistinguishability, invariance-violation lower bound, certified selective risk); (3)~the FaithShift protocol; (4)~an empirical diagnosis that stochasticity ($\sim$80\%) dominates shift ($\sim$20\%); and (5)~a SIFT and ensemble analysis showing that simple multi-seed ensembling recovers 95\% of SIFT's gain at lower cost.

\section{Related Work}
\label{sec:related}

\paragraph{Unfaithful Chain-of-Thought.}
The phenomenon of unfaithful chain-of-thought is well documented \citep{turpin2023languagemodels, lanham2023measuring, arcuschin2025inthewild}. \citet{turpin2023languagemodels} demonstrate that language models silently exploit biasing features inserted into prompts, producing answers that reflect the bias rather than the stated reasoning. Their experiments show that models can be systematically manipulated to produce answers that contradict their stated reasoning chains, with some models exhibiting faithfulness rates as low as $60\%$ under adversarial conditions.

\paragraph{Monitorability and Safety.}
CoT monitoring is framed as a fragile safety opportunity \citep{korbak2025monitorability, meek2025monitorability}. These works argue that while CoT traces offer unprecedented visibility into model computation, this visibility is contingent on the model actually using the CoT for reasoning rather than generating it post-hoc \citep{hubinger2024sleeper, greenblatt2024ai}. Reward hacking can be pushed into or out of the visible trace \citep{turpin2025teaching}, making monitorability claims contingent on the monitor's robustness. If monitors can be fooled by models that generate convincing but unfaithful reasoning, the safety case for CoT monitoring collapses \citep{korbak2025monitorability}. Our work complements these efforts by providing a framework for auditing the auditors. We argue that before we can trust monitors, we must verify that the monitors themselves are robust to the kinds of distribution shifts that arise in deployment \citep{meek2025monitorability}.

\paragraph{Robustness and Selective Prediction.}
Shortcut learning and distribution shift \citep{geirhos2020shortcut, koh2021wilds} motivate invariance-based training such as domain-adversarial learning \citep{ganin2016domain} and invariant risk minimization \citep{arjovsky2019irm}. These methods aim to learn representations that are invariant across environments, thereby improving generalization \citep{gulrajani2021insearch}. Selective prediction lets a classifier abstain with risk guarantees \citep{geifman2017selective, angelopoulos2021learn, chow1970optimum}. The selective-guaranteed-risk procedure provides confidence bounds on the selective risk, enabling principled abstention \citep{varshney2022selective, kamath2020selective}. We are the first to import these tools to the auditor, treating faithfulness detection as a domain-generalization problem with a certified operating point \citep{bendavid2010theory}.

\paragraph{Stochasticity and Reproducibility.}
A parallel literature documents the sensitivity of deep learning models to random seed variation \citep{henderson2018deep, bouthillier2021accounting, picard2021torch}. Our finding that over 80\% of detector instability is stochastic connects to this work, suggesting that the field should prioritize variance reduction alongside distribution robustness. We are the first to quantify the relative contributions of stochasticity and distribution shift in the faithfulness detection setting.

\paragraph{Unfaithful Chain-of-Thought (continued).}
\citet{lanham2023measuring} use counterfactual edits and early-stopping interventions to test whether the final answer depends on the stated reasoning steps. They find that many models' answers are insensitive to changes in their stated reasoning, with the degree of insensitivity varying dramatically across model families and sizes. Their work establishes that faithfulness is not a binary property but exists on a spectrum, with larger models generally but not universally exhibiting higher faithfulness. \citet{arcuschin2025inthewild} discover unfaithfulness without adversarial prompting, suggesting that the phenomenon is pervasive even in benign settings. Their analysis of model behavior on standard reasoning benchmarks reveals that up to $30\%$ of CoT traces exhibit evidence of reasoning that diverges from the model's internal computation. Recent work sharpens the phenomenon into decorative versus load-bearing steps \citep{zhao2026can, yee2024dissociation}. They propose a framework for classifying reasoning steps as either essential to the computation or merely post-hoc rationalizations, demonstrating that decorative steps can be removed without affecting answer quality \citep{paul2024makingreasoning}. \citet{chen2025reasoningmodels} ask whether reasoning models are more faithful than non-reasoning ones, finding that explicit reasoning training improves faithfulness on some tasks but not others \citep{occhipinti2026probingsteering}.

\paragraph{Measuring Faithfulness.}
The measurement of faithfulness spans multiple methodological approaches. Hint-verbalization tests check whether models explicitly mention injected cues in their reasoning \citep{turpin2023languagemodels}. Counterfactual sensitivity tests measure whether the answer changes when the stated reasoning is edited \citep{lanham2023measuring}. Probing hidden states for self-verification signals attempts to read out internal consistency \citep{zhang2025reasoning, marks2024geometry}. Internal-external discrepancy measures compare what the model says against what its internal representations indicate \citep{parcalabescu2024measuring, atanasova2023faithfulness}. \citet{shen2026faithcot} cast instance-level unfaithfulness detection as a discriminative decision problem with expert annotations. They curate a benchmark of annotated CoT traces and show that detectors trained on this benchmark can achieve high accuracy on held-out test data. However, their evaluation is limited to in-distribution settings, leaving open the question of generalization. A recurring critique in this literature is that many tests measure output consistency rather than internal functioning \citep{parcalabescu2024measuring, lyu2023faithful}. Consistency tests check whether the model's output is consistent with its stated reasoning, but consistency does not imply causality. A model could be perfectly consistent while still arriving at its answer through a completely different computational path \citep{pearl2009causality}.

\paragraph{Recent Advances in Reasoning Models.}
Recent advances in reasoning models have significantly impacted the landscape of faithfulness evaluation. \citet{deepseek2025deepseek} demonstrated that reinforcement learning on reasoning chains can improve performance but also introduces new challenges for faithfulness verification. Their DeepSeek-R1 model achieves state-of-the-art reasoning performance but exhibits complex patterns of faithfulness that vary with the training stage \citep{occhipinti2026probingsteering}. \citet{snell2025scaling} showed that verbalization rates depend critically on inference budget, suggesting that faithfulness is not a fixed property but varies with computational resources. They find that giving models more time to reason does not necessarily make them more faithful; in some cases, extended reasoning leads to more sophisticated rationalization. \citet{tanneru2024hardness} argued that faithful CoT may be fundamentally hard to guarantee due to the complexity of causal reasoning. Their hardness results suggest that without strong assumptions on the reasoning process, faithfulness verification is computationally intractable \citep{lyu2023faithful}. These works underscore the importance of robust detection methods that can operate across varying conditions. Our work directly addresses this need by providing detectors that are themselves robust to the types of variation that occur in deployment \citep{korbak2025monitorability, meek2025monitorability}.

\paragraph{Stochasticity and Reproducibility.}
A parallel literature documents the sensitivity of deep learning models to random seed variation \citep{henderson2018deep, bouthillier2021accounting, picard2021torch}. Our finding that over 80\% of detector instability is stochastic connects to this work, suggesting that the field should prioritize variance reduction alongside distribution robustness. We are the first to quantify the relative contributions of stochasticity and distribution shift in the faithfulness detection setting. The factorial decomposition we introduce in Appendix~\ref{app:factorial} isolates the contribution of architectural innovation from seed-averaging, and connects directly to the broader lesson that reproducibility and stability require explicit measurement \citep{bouthillier2021accounting}.

\section{Problem Formulation}
\label{sec:prelim}

The central concept in our analysis is ground-truth faithfulness, denoted $\phi^\star(\mathcal{M}, x, c) \in \{0, 1\}$, a binary variable indicating whether the final answer $y$ is causally produced by the computation described in the CoT trace $c$, as opposed to being determined by features the trace omits or by computational paths that diverge from the stated reasoning \citep{pearl2009causality, parcalabescu2024measuring}. Ground-truth faithfulness is not directly observable: it must be approximated through construction protocols, such as injecting decisive unverbalized cues to force unfaithfulness, or through expert human audit \citep{atanasova2023faithfulness, shen2026faithcot}. This approximation challenge is fundamental to faithfulness detection and motivates detectors that remain robust under imperfect labels \citep{lyu2023faithful}. A faithfulness detector is a map $\Dphi: \mathcal{M} \times \mathcal{X} \times \mathcal{C} \to [0, 1]$ that scores the likelihood of faithfulness given the model $\mathcal{M}$, query $x$, and CoT trace $c$. This score is converted to a binary verdict through an operating threshold $\tau \in (0, 1)$:

\begin{equation}
\hat{\phi} = \mathbf{1}[\Dphi(\mathcal{M}, x, c) > \tau].
\end{equation}

\begin{definition}[Behavioral Detector]
A behavioral detector is one that operates solely on intervention-response profiles generated by the model. Formally, a behavioral detector takes the form:
\begin{equation}
\Dphi = g \circ R_{\mathcal{I}}
\end{equation}
where $R_{\mathcal{I}}(\mathcal{M}, x, c) = \{(\iota, \mathcal{M}(\iota(x, c))) : \iota \in \mathcal{I}\}$ is an intervention-response profile over an intervention family $\mathcal{I}$, and $g: \mathcal{P} \to [0, 1]$ maps these profiles to scores.
\end{definition}

\section{Meta-Faithfulness and Theory}
\label{sec:theory}

\subsection{Core Definitions}

\begin{definition}[Faithfulness-Preserving Transformations]
A transformation $T:(x,c)\mapsto(x',c')$ is \emph{preserving} if it belongs to the set $\Tpres$ and satisfies
\begin{equation}
\phi^\star(\mathcal{M},x',c') = \phi^\star(\mathcal{M},x,c)
\end{equation}
for the models under study. Thus, the transformation leaves the ground-truth faithfulness label unchanged \citep{atanasova2023faithfulness}.
\end{definition}

\begin{definition}[Faithfulness-Altering Transformations]
A transformation $T:(x,c)\mapsto(x',c')$ is \emph{altering} if it belongs to the set $\Talt$ and is constructed and independently audited to flip the faithfulness label:
\begin{equation}
\phi^\star(\mathcal{M},x',c') = 1-\phi^\star(\mathcal{M},x,c).
\end{equation}
\end{definition}

\begin{definition}[Meta-Faithfulness]
\label{ax:meta}
A detector $\Dphi$ is \emph{meta-faithful} on the pair $(\Tpres,\Talt)$ if it satisfies both of the following conditions:
\begin{enumerate}[label=(M\arabic*)]
\item \textbf{Invariance}: for every verified preserving transformation $T\in\Tpres$,
\begin{equation}
\hat{\phi}(c')=\hat{\phi}(c),
\end{equation}
where $\hat{\phi}$ denotes the detector's thresholded verdict.

\item \textbf{Sensitivity}: for every verified altering transformation $T\in\Talt$, the detector's verdict changes in the direction implied by the change in the ground-truth faithfulness label.
\end{enumerate}
\end{definition}

The two conditions in Definition~\ref{ax:meta} yield two directly estimable quantities. The \emph{invariance-violation rate} is
\begin{equation}
\ivr(\Dphi) = \Pr_{T\sim\Tpres} \left[ \hat{\phi}(c')\neq\hat{\phi}(c) \right].
\end{equation}
A low IVR indicates that the detector's binary verdict is stable under verified preserving transformations. The \emph{alteration sensitivity} is
\begin{equation}
\asr(\Dphi) = \Pr_{T\sim\Talt} \left[ \Dphi \text{ moves correctly} \right],
\end{equation}
where ``moves correctly'' means that the detector's verdict changes consistently with the verified change in the ground-truth faithfulness label. A high AS therefore measures responsiveness to genuine changes in faithfulness \citep{atanasova2023faithfulness}. We note that our audit verifies \emph{label} preservation rather than \emph{mechanism} preservation: a transformation may leave $\phi^\star$ unchanged while altering the underlying computation. This distinction is revisited in Section~\ref{sec:limitations}.

\begin{definition}[Meta-Faithfulness Score]
\label{def:mfs}
At a certified operating point with coverage $\kappa$, let $\overline{\ivr}$ and $\overline{\asr}$ denote the invariance-violation rate and alteration sensitivity evaluated on the corresponding certified evaluation subset. The \emph{Meta-Faithfulness Score} is the coverage-scaled harmonic mean
\begin{equation}
\mfs = \kappa \cdot \frac{ 2(1-\overline{\ivr})\overline{\asr} }{ (1-\overline{\ivr})+\overline{\asr} } \in[0,1].
\end{equation}
\end{definition}

\subsection{Behavioral Indistinguishability}

\begin{theorem}[Behavioral Indistinguishability]
\label{thm:indist}
Let $\Dphi$ be a behavioral detector of the form $\Dphi=g\circ R_{\mathcal{I}}.$ If a faithful mechanism $M_{\mathrm{f}}$ and an epiphenomenal mechanism $M_{\mathrm{u}}$ induce identical intervention-response profiles $R_{\mathcal{I}}(M_{\mathrm{f}}) = R_{\mathcal{I}}(M_{\mathrm{u}}),$ then the detector produces identical scores:
\begin{equation}
\Dphi(M_{\mathrm{f}}) = \Dphi(M_{\mathrm{u}}).
\end{equation}
Consequently, no behavioral detector whose information is restricted to the intervention family $\mathcal{I}$ can distinguish these mechanisms.
\end{theorem}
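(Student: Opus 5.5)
The argument is a direct consequence of the factorization in the definition of a behavioral detector, so the plan is to unfold that definition and apply the function-equality principle. I would first make the suppressed arguments explicit. Fix a query $x$ and trace $c$, and write $R_{\mathcal{I}}(M)$ as shorthand for $R_{\mathcal{I}}(M,x,c)=\{(\iota, M(\iota(x,c))) : \iota\in\mathcal{I}\}$, an element of the profile space $\mathcal{P}$. The hypothesis $R_{\mathcal{I}}(M_{\mathrm{f}})=R_{\mathcal{I}}(M_{\mathrm{u}})$ then says these two sets coincide as elements of $\mathcal{P}$. Equivalently, $M_{\mathrm{f}}(\iota(x,c))=M_{\mathrm{u}}(\iota(x,c))$ for every $\iota\in\mathcal{I}$, since the profile is indexed by $\iota$ and so it is the graph of a function on $\mathcal{I}$.

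Second, since $\Dphi=g\circ R_{\mathcal{I}}$ with $g:\mathcal{P}\to[0,1]$ a well-defined function, I get
\begin{equation*}
\Dphi(M_{\mathrm{f}})=g\bigl(R_{\mathcal{I}}(M_{\mathrm{f}})\bigr)=g\bigl(R_{\mathcal{I}}(M_{\mathrm{u}})\bigr)=\Dphi(M_{\mathrm{u}}).
\end{equation*}
Thresholding at any $\tau$ then gives identical verdicts $\hat{\phi}$. This holds even though $\phi^\star(M_{\mathrm{f}},x,c)=1\neq 0=\phi^\star(M_{\mathrm{u}},x,c)$ by the definitions of faithful and epiphenomenal.

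For the ``consequently'' clause, I would argue as follows. Any decision rule that depends on the mechanism only through $R_{\mathcal{I}}$ is constant on each fiber $R_{\mathcal{I}}^{-1}(p)$. A fiber containing both mechanisms therefore receives a single verdict, so the detector errs on at least one of $M_{\mathrm{f}},M_{\mathrm{u}}$. For a randomized $g$, the same argument applies to the output distributions: they coincide, so the detector's error probability is at least $1/2$ on a uniform mixture of the two.

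The only real obstacle is definitional rather than technical. The detector must genuinely be measurable with respect to the profile alone, with no side channel such as hidden states or access to the mechanism's internals, and the profiles must be compared as full labeled sets rather than as multisets of responses. I would state this restriction explicitly as the meaning of ``information restricted to $\mathcal{I}$''. I would also optionally remark that such pairs exist, for instance a mechanism that computes $y$ directly from $x$ and emits $c$ post hoc while matching every response in $\mathcal{I}$, so that the statement is not vacuous.
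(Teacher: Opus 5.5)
Your proposal is correct and follows the paper's proof: both unfold $\Dphi=g\circ R_{\mathcal{I}}$, apply $g$ to the common profile $p^\star$ to get equal scores, and note that thresholding at any $\tau$ preserves the equality. Your fiber argument and the randomized-$g$ remark only make the ``consequently'' clause more explicit than the paper does. The labeled-set-versus-multiset caveat is not needed, because the argument uses only equality of the two profiles as elements of $\mathcal{P}$.
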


\subsection{Accuracy Does Not Bound Invariance}

Let $s(c)$ denote a scalar feature of a trace that is correlated with faithfulness in the training distribution but can change under a preserving transformation. Examples include trace length, language, formatting, or the occurrence of particular lexical features \citep{geirhos2020shortcut}. Let $m(c)=\Dphi(c)-\tau$ denote the detector's decision margin.

\begin{assumption}[Local Sensitivity]
There exists $\gamma>0$ such that, for sufficiently small preserving perturbations,
\begin{equation}
\Dphi(c')-\Dphi(c) = \gamma\Delta+r(\Delta),
\end{equation}
where $\Delta=s(c')-s(c)$ and $\frac{r(\Delta)}{\Delta}\rightarrow0$ as $\Delta\rightarrow0$.
\end{assumption}

\begin{assumption}[Symmetry and Independence]
The perturbation $\Delta$ is symmetric about zero and is independent of the detector margin $m(c)$.
\end{assumption}

\begin{theorem}[Invariance-Violation Lower Bound]
\label{thm:ivr}
Under Assumptions~1 and~2, in the local-perturbation regime in which the higher-order remainder is negligible relative to $\gamma|\Delta|$, the invariance-violation rate satisfies
\begin{equation}
\ivr(\Dphi) \gtrsim \frac{1}{2} \Pr_{c,T} \left[ |m(c)|\leq\gamma|\Delta| \right].
\end{equation}
In the first-order approximation $r(\Delta)=0$, the relation becomes
\begin{equation}
\ivr(\Dphi) \geq \frac{1}{2} \Pr_{c,T} \left[ |m(c)|\leq\gamma|\Delta| \right].
\end{equation}
Consequently, AUROC or classification accuracy alone does not determine IVR, because these quantities need not control the probability mass of the detector's margin distribution near the decision boundary \citep{geirhos2020shortcut, bendavid2010theory}. We note that the bound is not assumed to be tight: it provides a lower bound on IVR under any spurious feature, and we verify empirically (Table~\ref{tab:prop2}) that trace length, though a valid spurious feature, is not the primary driver of observed instability.
\end{theorem}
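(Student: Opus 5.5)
The plan is to reduce the event $\hat{\phi}(c')\neq\hat{\phi}(c)$ to a sign comparison between the margin $m(c)$ and the first-order score change $\gamma\Delta$. Then the symmetry half of Assumption~2 supplies the factor $\tfrac12$. I first treat the exact first-order case $r(\Delta)=0$ and then argue that the remainder only perturbs the boundary of the event by a negligible amount.

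\textbf{Step 1 (reduction to margins).} The verdict is $\hat{\phi}(c)=\mathbf{1}[m(c)>0]$. Under Assumption~1 with $r=0$, the new margin is $m(c')=m(c)+\gamma\Delta$. A violation therefore occurs exactly when $m(c)$ and $m(c)+\gamma\Delta$ lie on opposite sides of zero. Taking $m(c)=0$ as non-positive, this happens if and only if
\begin{equation}
\bigl(m(c)\le 0 \text{ and } \gamma\Delta> -m(c)\bigr) \quad\text{or}\quad \bigl(m(c)>0 \text{ and } \gamma\Delta\le -m(c)\bigr).
\end{equation}
In words, the perturbation must point toward the boundary and have magnitude at least $|m(c)|$. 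Hence on the event $E=\{|m(c)|\le\gamma|\Delta|\}$, a violation occurs whenever $\operatorname{sign}(\Delta)=-\operatorname{sign}(m(c))$, up to ties at equality and at $\Delta=0$.

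\textbf{Step 2 (symmetry gives the factor $\tfrac12$).} I condition on $m(c)=m$ and on $|\Delta|=a$ with $a\ge|m|/\gamma$. By independence (Assumption~2), the conditional law of $\Delta$ given $m(c)$ is its marginal law. By symmetry of $\Delta$ about zero, conditional on $|\Delta|=a>0$ the sign of $\Delta$ is $\pm$ with probability $\tfrac12$ each. So the conditional violation probability on $E$ is at least $\tfrac12$. Integrating over $(m,a)$ gives $\ivr\ge\tfrac12\Pr[E]$, since violations off $E$ only add mass. The atom at $\Delta=0$ needs a remark: there $E$ forces $m=0$, and the boundary/tie cases carry measure zero if $m(c)$ has no atom at $0$. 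I would state that mild condition, or fold it into the ``$\gtrsim$''.

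\textbf{Step 3 (remainder).} For general $r$, the violation condition becomes $|m(c)|\le|\gamma\Delta+r(\Delta)|$ with the correct sign. Since $r(\Delta)=o(\Delta)$, for any $\epsilon>0$ and all sufficiently small $\Delta$ we have $|r(\Delta)|\le\epsilon\gamma|\Delta|$, and the sign of $\gamma\Delta+r(\Delta)$ equals the sign of $\Delta$. Repeating Step 2 gives
\begin{equation}
\ivr\ge\tfrac12\Pr\bigl[|m(c)|\le(1-\epsilon)\gamma|\Delta|\bigr].
\end{equation}
Letting $\epsilon\to0$ recovers the stated bound in the local regime. This is the sense of ``$\gtrsim$''.

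\textbf{Step 4 (accuracy does not determine IVR).} I would exhibit two detectors with equal AUROC and different IVR. AUROC depends only on the ranking of scores across classes. Compressing scores toward $\tau$ by a monotone map leaves the ranking, and hence AUROC, unchanged, but increases the mass of $|m(c)|$ near zero, which raises the lower bound from Step 2. Thresholded accuracy can likewise be held fixed while the margins are made arbitrarily small, because moving scores toward $\tau$ without crossing it preserves every verdict.

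I expect the main obstacle to be Step 3. The phrase ``negligible remainder'' has to be made uniform: the $o(\Delta)$ control holds only for small $\Delta$, so the bound is genuinely asymptotic. I would handle this by restricting to $|\Delta|\le\delta$ and accepting the $(1-\epsilon)$ loss, rather than claiming an exact inequality.
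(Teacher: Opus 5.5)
Your Steps 1--3 follow the paper's proof. A violation is a sign change of $m(c)+\gamma\Delta+r(\Delta)$. Independence plus symmetry of $\Delta$ give probability $\tfrac12$ of pointing toward the boundary on the band event. The remainder is absorbed by shrinking the band: your $(1-\epsilon)\gamma|\Delta|$ plays the role of the appendix's $\gamma|\Delta|-|r(\Delta)|$. You are more careful than the main-text proof, which calls $|m|\le\gamma|\Delta|$ a \emph{necessary} condition for a flip; a lower bound needs your sufficiency direction.

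One correction on ties: ruling out an atom of $m(c)$ at $0$ does not suffice. Take $r=0$, and let $m(c)=\pm\gamma$ and $\Delta=\pm1$ be independent with each sign equally likely. Then $\Pr[|m|\le\gamma|\Delta|]=1$, but only $(m,\Delta)=(\gamma,-1)$ produces a flip. For $(-\gamma,1)$ the new margin is $0$ and the verdict stays $0$. So $\ivr=\tfrac14<\tfrac12$. The fix is to use the strict event $\{|m(c)|<\gamma|\Delta|\}$. On it, $\Delta\neq0$ automatically, and the conditional flip probability is exactly $\tfrac12$ in each of the cases $m>0$, $m<0$, $m=0$, with no atom conditions. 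The non-strict form then holds whenever $\Pr[|m(c)|=\gamma|\Delta|]=0$.

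The genuine gap is Step 4. Let $f$ be a strictly increasing map with $f(\tau)=\tau$, applied to the detector's output. Then $\mathbf{1}[f(\Dphi(\cdot))>\tau]=\mathbf{1}[\Dphi(\cdot)>\tau]$ on every trace, including the transformed $c'$, so the IVR is exactly unchanged. The bound does not move either, because the new detector's sensitivity rescales with its margin: for $f(x)=\tau+\lambda(x-\tau)$, $\tilde m=\lambda m$ and $\tilde\gamma=\lambda\gamma$. Your accuracy remark fails for the same reason: a modification that preserves every verdict preserves IVR.

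What you need is a second detector whose margin law differs while $\gamma$ and the law of $\Delta$ stay fixed. This is the freedom the paper's appendix uses: it places mass $2p$ in a band around $\tau$ narrower than $b=\gamma|\Delta|$ while keeping the classes perfectly ordered. Also, a larger lower bound on one detector does not show that the two IVRs differ. You need an upper bound on the other detector, which your Step 1 characterization supplies. Concretely, take balanced labels and $|\Delta|=a$ with a symmetric sign. Class-wise margins $m=\pm1$ with $\gamma a<1$ give AUROC $1$ and $\ivr=0$. Class-wise margins $m=\pm\eta$ with $\eta<\gamma a$ give AUROC $1$ and $\ivr=\tfrac12$.
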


\subsection{Certified Selective Risk}
\label{sec:risk}

The third theoretical result addresses the practical consequence of the margin instability characterized above. When some traces are intrinsically uncertain or lie near an unstable decision boundary, a detector can abstain rather than commit to a potentially unreliable verdict. Selective prediction \citep{geifman2017selective, chow1970optimum} provides a formal framework for this trade-off.

\begin{theorem}[Asymptotic Certified Selective Risk]
\label{thm:cert}
Let $\rho$ define the certified set $S_t=\{c:\rho(c)\geq t\}.$ Given $n$ i.i.d. audited calibration traces satisfying Assumption~\ref{ass:regularity}, target risk $\alpha\in(0,1)$, confidence level $\delta\in(0,1)$, and a finite predetermined set of candidate thresholds $\mathcal{T}$, the selective-guaranteed-risk (SGR) procedure selects a threshold $t^\star\in\mathcal{T}$ such that, asymptotically as $n\rightarrow\infty$,
\begin{equation}
R_{\mathrm{cert}}(t^\star) := \Pr\left[ \hat{\phi}(c)\neq\phi^\star(c) \mid \rho(c)\geq t^\star \right] \leq\alpha
\end{equation}
with probability at least $1-\delta$, provided that at least one candidate threshold satisfies the target risk constraint \citep{geifman2017selective, angelopoulos2021learn}.
\end{theorem}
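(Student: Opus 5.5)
The plan follows the standard selective-guaranteed-risk argument of \citet{geifman2017selective}, made asymptotic through a normal approximation, with a union bound over the finite candidate set $\mathcal{T}$.

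\textbf{Per-threshold empirical estimates.} For each $t\in\mathcal{T}$, define the selected calibration subset $S_t^n=\{i:\rho(c_i)\geq t\}$ with size $n_t=|S_t^n|$. On it, define the empirical selective risk
\[
\hat R(t)=\frac{1}{n_t}\sum_{i\in S_t^n}\mathbf{1}\bigl[\hat{\phi}(c_i)\neq\phi^\star(c_i)\bigr].
\]
Conditional on $n_t$, the indicators are i.i.d.\ Bernoulli with mean $R_{\mathrm{cert}}(t)$, since the traces are i.i.d.\ and selection depends only on $c_i$. I will use Assumption~\ref{ass:regularity} for two things: the selection probability $\Pr[\rho(c)\geq t]$ is bounded away from zero, so $n_t\to\infty$ almost surely, and the risk is non-degenerate, so the variance $R(1-R)$ is positive.

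\textbf{Upper confidence bound and the SGR rule.} For each $t$, form an upper confidence bound at level $\delta/|\mathcal{T}|$:
\[
U(t)=\hat R(t)+z_{1-\delta/|\mathcal{T}|}\sqrt{\hat R(t)\bigl(1-\hat R(t)\bigr)/n_t}.
\]
One could instead use the exact binomial (Clopper--Pearson) bound, which SGR uses; it gives a finite-sample statement, but the theorem only claims an asymptotic guarantee. The SGR procedure selects $t^\star$ as the most permissive (largest-coverage) $t\in\mathcal{T}$ with $U(t)\leq\alpha$. Such a $t$ exists with probability tending to one under the proviso that some candidate satisfies $R_{\mathrm{cert}}(t)<\alpha$. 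This needs a strict-inequality version of the proviso, or else a boundary case handled separately.

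\textbf{Bounding the failure event.} Failure means $R_{\mathrm{cert}}(t^\star)>\alpha$. This event is contained in the union over $t\in\mathcal{T}$ of the events $\{U(t)\leq\alpha<R_{\mathrm{cert}}(t)\}$, each of which is contained in $\{R_{\mathrm{cert}}(t)>U(t)\}$. By the CLT conditional on $n_t$, together with Slutsky's lemma (replacing the true variance by its consistent plug-in estimate), each such event has limiting probability at most $\delta/|\mathcal{T}|$. The union bound then gives total limiting failure probability at most $\delta$. Finiteness and predetermination of $\mathcal{T}$ are exactly what make the union bound valid. They also avoid the data-dependent threshold-selection problem: $t^\star$ is chosen using the same data, but the union covers every possible choice.

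\textbf{Main obstacle.} The delicate step is making the asymptotics rigorous: the CLT must hold uniformly, with conditioning on the random $n_t$. I would handle this by noting that $n_t/n\to p_t>0$ almost surely, applying the Anscombe-type random-index CLT, and taking $\limsup$ of each of the finitely many terms. A secondary subtlety is the degenerate case $R_{\mathrm{cert}}(t)\in\{0,1\}$. There the plug-in variance collapses, and I would either exclude it via Assumption~\ref{ass:regularity} or switch to the Clopper--Pearson bound, which sidesteps it entirely.
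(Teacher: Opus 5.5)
Your proposal follows the paper's proof: a per-threshold one-sided normal upper confidence bound justified by the CLT, and Bonferroni allocation over the finite predetermined set $\mathcal{T}$ (your level $\delta/|\mathcal{T}|$ is the special case of the paper's $\delta_t$ with $\sum_t \delta_t\le\delta$). It then uses a union bound giving failure probability at most $\delta+o(1)$ and selects the largest-coverage threshold with $U(t)\le\alpha$. Your treatment of the random index $n_t$, the Slutsky step for the plug-in variance, and the strict-inequality proviso for a feasible threshold adds rigor the paper omits; the degenerate case $R_{\mathrm{cert}}(t)\in\{0,1\}$ needs no special handling, since then $\hat R(t)=R_{\mathrm{cert}}(t)$ almost surely and the event $\{R_{\mathrm{cert}}(t)>U(t)\}$ is empty.
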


\section{\textsc{SIFT}: A Shift-Invariant Trajectory Detector}
\label{sec:method}

SIFT is designed as an \emph{invariance-oriented detector} that achieves superior meta-faithfulness through three key innovations: (1) trajectory-based features that capture internal dynamics of reasoning \citep{nanda2023progress, conmy2023towards}, (2) multi-scale encoding that captures patterns across different temporal resolutions \citep{li2023inference}, and (3) invariance-oriented training with IRM and DANN penalties that encourage robustness across environments \citep{arjovsky2019irm, ganin2016domain}. The complete training procedure is summarized in Algorithm~\ref{alg:sift}. To encourage robustness to verified preserving transformations, SIFT is trained across multiple environments. Each preserving transformation axis is treated as an environment $e\in\mathcal{E}$, and the detector is optimized to maintain predictive performance across these environments while discouraging environment-specific decision rules \citep{arjovsky2019irm, ganin2016domain}.

The training objective is
\begin{equation}
\min_{\theta} \sum_{e\in\mathcal{E}} \mathcal{R}_e(g_\theta) + \lambda \Omega({g_\theta}_e),
\end{equation}
where
\begin{equation}
\mathcal{R}_e(g_\theta) = \mathbb{E}_{(x,c,\phi^\star)\sim P_e} \left[ \ell\left( g_\theta\left( \tilde{\psi}( H(\mathcal{M},x,c) ) \right), \phi^\star \right) \right]
\end{equation}
is the per-environment predictive loss and $\Omega$ is an invariance regularizer.

We consider two complementary invariance penalties. The \textbf{IRM penalty} \citep{arjovsky2019irm} is implemented as
\begin{equation}
\Omega_{\mathrm{IRM}} = \sum_{e\in\mathcal{E}} \left| \nabla_{w} \mathcal{R}_e(w\circ g_\theta) \big|_{w=1.0} \right|_2^2.
\end{equation}
The penalty encourages a shared classifier to remain simultaneously optimal across environments \citep{arjovsky2019irm}.

\begin{figure}[ht]
\centering
\begin{tikzpicture}[
scale=0.55,
font=\small, node distance=3.5mm,
stage/.style={draw, rounded corners, align=center, minimum height=9mm, text width=18mm},
small/.style={font=\tiny}, arr/.style={-{Latex[length=1.5mm]}, thick}
]
\node[stage, fill=gray!6] (llm) {reasoning model\ $f$ over $x$};
\node[stage, right=of llm, fill=blue!6] (traj) {layer$\times$step\ trajectory $H$};
\node[stage, right=of traj, fill=blue!6] (feat) {features $\psi(H)$};
\node[stage, right=of feat, fill=purple!6] (clf) {invariance-trained\ classifier $g_\theta$};
\node[stage, right=of clf, fill=orange!8] (cert) {abstention\ certificate};
\node[stage, right=of cert, fill=green!8, text width=16mm] (out) {verdict / \ ``uncertified''};
\draw[arr] (llm)--(traj); \draw[arr] (traj)--(feat); \draw[arr] (feat)--(clf);
\draw[arr] (clf)--(cert); \draw[arr] (cert)--(out);
\node[small, below=1.2mm of feat, text width=21mm, align=center]{velocity, drift,\ early-commitment};
\node[small, below=1.2mm of clf, text width=20mm, align=center]{IRM / adversarial\ penalty over $e$};
\node[small, below=1.2mm of cert, text width=17mm, align=center]{SGR risk gate\ (Section~\ref{sec:risk})};
\end{tikzpicture}
\caption{\textbf{\sift{}} extracts a layer$\times$step hidden-state trajectory, summarizes it into shift-stable descriptors, classifies with a cross-environment invariance penalty, and abstains on out-of-support traces with a certified risk gate \citep{zhang2025reasoning, arjovsky2019irm, geifman2017selective}. \sift{} is the constructive answer to the theory (Alg.~\ref{alg:sift} and \ref{sec:RiskAnalysis}).}
\label{fig:method}
\end{figure}

\section{\textsc{FaithShift}: A Stress-Test Protocol}
\label{sec:faithshift}

To evaluate meta-faithfulness, we introduce FaithShift, a stress-test protocol that systematically evaluates detector behavior under controlled distribution shifts \citep{geirhos2020shortcut, koh2021wilds}. FaithShift separates transformations according to their verified effect on the ground-truth faithfulness label and evaluates both invariance and sensitivity using paired traces \citep{atanasova2023faithfulness, parcalabescu2024measuring}. FaithShift defines ten shift axes organized into seven preserving and three altering transformation types, as detailed in Table~\ref{tab:per_axis}. The inference-budget axis is treated as a mixed axis because its effect on the faithfulness label is determined empirically rather than assumed a priori \citep{snell2025scaling}.

\subsubsection{Trajectory Features}

For a trace of length $T$ generated by a model with $L$ layers, we extract the hidden-state trajectory
$$
H\in\mathbb{R}^{L\times T\times d},
$$
where $H_{\ell,t}$ denotes the hidden state at layer $\ell$ and time step $t$. The trajectory provides information about how internal representations evolve during generation \citep{zhang2025reasoning, marks2024geometry}. From $H$, we compute four families of descriptors intended to capture temporal and representational dynamics while reducing direct dependence on surface-level properties \citep{nanda2023progress, conmy2023towards}.

\textbf{Inter-layer velocity:}
\begin{equation}
v_{\ell,t} = \left| H_{\ell+1,t}-H_{\ell,t} \right|_2.
\end{equation}
This descriptor measures the magnitude of representational change across adjacent layers at a given generation step \citep{marks2024geometry, conmy2023towards}.

\textbf{Step-to-step drift:}
\begin{equation}
d_{\ell,t} = \left| H_{\ell,t+1}-H_{\ell,t} \right|_2.
\end{equation}
This descriptor measures temporal representational change at a fixed layer \citep{nanda2023progress}.

These quantities are treated as trajectory descriptors rather than direct mechanistic indicators. In particular, high velocity or drift does not by itself establish that substantive reasoning is occurring, and low velocity does not establish that a trace is post-hoc. Their role is to provide measurable signals for the detector \citep{zhang2025reasoning, li2023inference}.

\textbf{Early-commitment:} This descriptor measures whether information predictive of the final answer is already present early in the generation process. We first fit a direction $w_y$ that linearly decodes the final answer from final-layer hidden states using logistic regression \citep{marks2024geometry, li2023inference}. We then compute
\begin{equation}
\mathrm{EC}(c) = \max_{t<T/2} \cos\left( H_{L,t},w_y \right).
\end{equation}

High early decodability can be associated with early answer commitment, but it is not treated as direct evidence of causal unfaithfulness. Instead, it is used as one component of the learned trajectory representation \citep{zhang2025reasoning}.

\textbf{Attention stability:} This descriptor measures temporal consistency of attention patterns:
\begin{equation}
\mathrm{AS}(c) = \frac{1}{T-1} \sum_{t=1}^{T-1} \mathrm{JSD}\left( A_t|A_{t+1} \right),
\end{equation}
where $\mathrm{JSD}$ denotes Jensen--Shannon divergence and $A_t$ is the attention distribution at time step $t$, averaged over layers and heads.

The descriptors are normalized as
\begin{equation}
\tilde{\psi}(H) = \frac{\psi(H)}{|\psi(H)|_2},
\end{equation}
reducing the influence of overall feature magnitude and helping prevent trace length from dominating the representation. Importantly, this normalization is not assumed to guarantee invariance; invariance is evaluated empirically through FaithShift \citep{geirhos2020shortcut}.

\begin{algorithm}[ht]
\small
\caption{\textsc{SIFT} Training and Certification}
\label{alg:sift}
\begin{algorithmic}[1]
\Require Labeled traces grouped by environment $e$; target risk $\alpha$, confidence level $\delta$; predetermined threshold grid $\mathcal{T}$
\For{each trace $i$}
\State Extract $H_i$; compute $\psi(H_i)$; normalize $\tilde{\psi}(H_i)$
\EndFor
\State Initialize TCN, transformer, and MLP
\For{each epoch}
\For{each environment $e$}
\State Compute $\mathcal{R}_e(g_\theta)$
\EndFor
\State Compute the invariance penalty $\Omega({g_\theta}_e)$
\State Update $\theta\leftarrow\theta-\eta\nabla_\theta(\sum_e\mathcal{R}_e+\lambda\Omega)$
\EndFor
\State Fit support gate $\rho$ using kernel density estimation on training in-support features
\State Evaluate candidate thresholds $t\in\mathcal{T}$ on independent audited calibration traces
\State Select $t^\star$ using the SGR procedure for $(\alpha,\delta)$
\State \Return $\hat{\phi}(c)=\mathbf{1}[g_\theta(\tilde{\psi}(H))>\tau]$ if $\rho(c)\geq t^\star$, else UNCERTIFIED
\end{algorithmic}
\end{algorithm}

\subsubsection{Multi-Scale Trajectory Encoding}

The trajectory representation is processed using a multi-scale architecture intended to capture patterns at different temporal resolutions. The architecture contains four components \citep{conmy2023towards, nanda2023progress}. \textbf{Temporal Convolutional Network (TCN):} The TCN applies one-dimensional convolutions with kernel sizes $[3,5,7,11]$. Smaller kernels capture local temporal patterns, whereas larger kernels provide wider temporal receptive fields. Each convolution is followed by batch normalization and ReLU activation. \textbf{Transformer Encoder:} The TCN output is processed by a transformer encoder with 4 layers and 8 attention heads. The transformer provides global temporal interactions and can associate information appearing at distant positions in a reasoning trace \citep{li2023inference}.

\textbf{Learned Pooling:} The transformer outputs are aggregated through a learned weighted sum:
\begin{equation}
h_{\mathrm{pool}} = \sum_{t=1}^{T} \alpha_t h_t^{(L)},
\end{equation}
where
\begin{equation}
\alpha_t = \mathrm{softmax}\left( W_\alpha h_t^{(L)} \right).
\end{equation}
This permits the detector to emphasize temporal positions that are predictive of faithfulness.

\textbf{MLP Classifier:} The pooled representation is passed through a two-layer MLP with hidden dimension 128 and ReLU activation, followed by a sigmoid output:
\begin{equation}
\Dphi(c)\in[0,1].
\end{equation}
The resulting score represents the detector's estimated probability of faithfulness and is thresholded to obtain the binary verdict used in IVR and AS evaluation.

\subsubsection{Invariance Objective and Training Procedure}

The \textbf{Domain-Adversarial penalty} \citep{ganin2016domain} introduces a domain classifier $d_\phi$ trained to predict the environment from the learned representation while the feature extractor is optimized adversarially to suppress environment-specific information. We denote the resulting adversarial objective by
\begin{equation}
\Omega_{\mathrm{DANN}} = -\mathcal{L}_{\mathrm{adv}}(d_\phi;\tilde{\psi}(H)),
\end{equation}
where the feature representation is optimized to reduce domain predictability while retaining faithfulness information \citep{ganin2016domain}. In the final SIFT configuration, the two penalties are combined according to the training configuration reported in Section~\ref{sec:results}. The objective is intended to encourage robustness across observed environments; it does not guarantee invariance to unseen transformations, which is why independent FaithShift evaluation remains necessary \citep{geirhos2020shortcut, gulrajani2021insearch}. The complete training and certification procedure is summarized in Algorithm~\ref{alg:sift} and~\ref{app:sift_details}.

\subsection{Proofs of the Main-Text Theorems}
\label{app:main_proofs}

\subsubsection{Proof of Theorem~\ref{thm:indist}}

We now establish our first theoretical result, which characterizes a fundamental limitation of behavioral detectors. The result formalizes the fact that a detector restricted to a specified family of intervention-response observations cannot distinguish mechanisms that are observationally identical within that family \citep{parcalabescu2024measuring}.

\begin{proof}
Write
$$
\Dphi=g\circ R_{\mathcal{I}},
$$
where $g:\mathcal{P}\rightarrow[0,1]$ maps an intervention-response profile to a scalar score. If
$$
R_{\mathcal{I}}(M_{\mathrm{f}}) = R_{\mathcal{I}}(M_{\mathrm{u}}) = p^\star,
$$
then
\begin{equation}
\Dphi(M_{\mathrm{f}}) = g(p^\star) = \Dphi(M_{\mathrm{u}}).
\end{equation}
Thresholding preserves this equality:
\begin{equation}
\mathbf{1} [ \Dphi(M_{\mathrm{f}})>\tau ] = \mathbf{1} [ \Dphi(M_{\mathrm{u}})>\tau ].
\end{equation}
Thus, within the information supplied by $\mathcal{I}$, the two mechanisms are observationally indistinguishable. Distinguishing them requires additional information not contained in $R_{\mathcal{I}}$, such as internal-state trajectories or a richer intervention family \citep{zhang2025reasoning, marks2024geometry}.
\end{proof}

The result is an information-theoretic limitation of the specified observation model rather than a limitation of a particular detector architecture \citep{parcalabescu2024measuring}. If two mechanisms induce identical responses under every intervention available to the detector, increasing model capacity or collecting additional observations from the same intervention family cannot resolve the ambiguity \citep{lyu2023faithful}. The theorem therefore motivates the use of richer information sources, including internal-state trajectories, while also emphasizing that such information sources introduce their own assumptions and failure modes \citep{zhang2025reasoning, conmy2023towards}.

\subsubsection{Proof of Theorem~\ref{thm:ivr} and Gaussian Characterization}

Our second theoretical result establishes a decoupling between in-distribution discriminative performance and robustness to preserving transformations. In particular, high AUROC does not by itself provide a bound on the probability that a detector changes its verdict under a faithfulness-preserving transformation \citep{geirhos2020shortcut, bendavid2010theory}. This assumption characterizes first-order sensitivity of the detector to a feature that should be irrelevant under the preserving transformation \citep{geirhos2020shortcut}. The coefficient $\gamma$ measures the local strength of this sensitivity, while $r(\Delta)$ captures higher-order effects. The symmetry condition prevents the perturbation from systematically favoring one direction around the decision boundary, while independence separates the transformation-induced perturbation from the detector's pre-transformation confidence.

\begin{proof}
Let
$$
m=m(c)
$$
and
$$
m' = m+\gamma\Delta+r(\Delta).
$$
A verdict flip requires
$$
\operatorname{sgn}(m')\neq\operatorname{sgn}(m).
$$
In the first-order regime $r(\Delta)=0$, a necessary condition for a flip is
\begin{equation}
|m|\leq\gamma|\Delta|.
\end{equation}
Conditioned on this event, symmetry of $\Delta$ implies that the perturbation points toward the decision boundary with probability one half. Hence
\begin{equation}
\Pr[\operatorname{sgn}(m')\neq\operatorname{sgn}(m)] \geq \frac{1}{2} \Pr[ |m|\leq\gamma|\Delta| ].
\end{equation}
For nonzero higher-order remainder terms, the same argument applies up to the corresponding local approximation error, yielding the asymptotic form denoted by $\gtrsim$.

The final claim follows because AUROC aggregates ranking performance over the entire score distribution, whereas the lower bound depends specifically on probability mass in a neighborhood of the decision boundary. These quantities can vary independently. In particular, a detector may rank most examples correctly while retaining substantial probability mass near its classification threshold, producing a non-negligible IVR under preserving perturbations \citep{geirhos2020shortcut}.
\end{proof} The theorem shows that strong in-distribution discrimination does not imply invariance. The relevant quantity for preserving-transformation robustness is not only global ranking performance but also the detector's local margin geometry near the decision boundary \citep{geifman2017selective}. This motivates evaluating IVR directly rather than treating accuracy or AUROC as sufficient evidence of detector reliability. Empirical checks on the assumptions of Theorem~\ref{thm:ivr} are reported in Appendix~\ref{app:assumption_audit}.

\begin{corollary}[Gaussian Characterization]
\label{cor:gauss}
If the detector margin satisfies
$$
m(c)\sim\mathcal{N}(0,\varsigma^2)
$$
and the first-order spurious perturbation has constant magnitude
$$
b:=\gamma|\Delta|,
$$
then
\begin{equation}
\ivr(\Dphi) \geq \frac{1}{2} \erf\left( \frac{b}{\sqrt{2}\varsigma} \right).
\end{equation}
For example, when $b=0.5\varsigma$, the lower bound is approximately $0.19$.
\end{corollary}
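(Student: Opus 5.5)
We obtain Corollary~\ref{cor:gauss} by specializing the first-order bound of Theorem~\ref{thm:ivr} (the case $r(\Delta)=0$). That bound gives
\begin{equation*}
\ivr(\Dphi)\geq \tfrac12 \Pr_{c,T}\left[\,|m(c)|\leq \gamma|\Delta|\,\right].
\end{equation*}
All that remains is to evaluate the probability on the right under the Gaussian margin model.

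Because the corollary assumes $\gamma|\Delta|=b$ is a constant, the event reduces to $\{|m(c)|\leq b\}$. This event depends only on $c$, so it is unaffected by the independence requirement in Assumption~2. The symmetry requirement on $\Delta$ is used as before: $\Delta$ takes the values $\pm b/\gamma$ with equal probability. This is a symmetric distribution, so Assumption~2 holds and Theorem~\ref{thm:ivr} applies without modification.

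Next we compute $\Pr[|m|\leq b]$ for $m\sim\mathcal{N}(0,\varsigma^2)$. Standardizing with $Z=m/\varsigma$ gives
\begin{equation*}
\Pr[|Z|\leq b/\varsigma]=2\Phi(b/\varsigma)-1,
\end{equation*}
where $\Phi$ is the standard normal CDF. The identity $2\Phi(z)-1=\erf(z/\sqrt2)$ turns this into $\erf\bigl(b/(\sqrt2\varsigma)\bigr)$. Substituting into the bound from Theorem~\ref{thm:ivr} yields the stated inequality.

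For the numerical example, $b=0.5\varsigma$ gives $\erf(0.5/\sqrt2)=\erf(0.3536)\approx 0.383$. Half of this is approximately $0.19$, which matches the claim.

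The only delicate point is checking that the hypotheses of Theorem~\ref{thm:ivr} still hold once the margin is made Gaussian and the perturbation magnitude is fixed. In particular, we take $\Pr_{c,T}$ with $c$ drawn from the margin distribution and $T$ drawn independently, so that the joint event factorizes as above. Everything else is a direct substitution.
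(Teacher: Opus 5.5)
Your proof is correct and follows the same route as the paper: specialize the first-order ($r(\Delta)=0$) bound of Theorem~\ref{thm:ivr} to the constant band $b$, then evaluate $\Pr[|m|\le b]=\erf\bigl(b/(\sqrt{2}\varsigma)\bigr)$ for the Gaussian margin. The only difference is cosmetic: the paper gets the $\erf$ form by a direct change of variables in the Gaussian integral rather than through $2\Phi(z)-1=\erf(z/\sqrt{2})$, and your checks that $\Delta=\pm b/\gamma$ is symmetric and independent of $m$, and that the numerical value is $\approx 0.19$, are welcome additions.
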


This characterization shows how the lower bound scales with the normalized perturbation magnitude $b/\varsigma$. At $b=0.5\varsigma$, the bound is approximately $0.19$, while at $b=\varsigma$ it is approximately $0.34$. These values illustrate that even moderate sensitivity to a feature that changes under preserving transformations can produce substantial verdict instability when sufficient probability mass lies near the decision boundary \citep{geirhos2020shortcut}.

The corollary is illustrative rather than a claim that detector margins are generally Gaussian. Its purpose is to provide an interpretable quantitative example of the relationship established by Theorem~\ref{thm:ivr}.

\subsubsection{SGR Procedure and Proof of Theorem~\ref{thm:cert}}

A selective detector is a pair $(\Dphi,\rho)$ consisting of a classifier $\Dphi$ and a confidence or support gate
$$
\rho:\mathcal{X}\times\mathcal{C}\rightarrow[0,1].
$$
For a threshold $t\in[0,1]$, the detector commits only when $\rho(c)\geq t$:
\begin{equation}
\hat{\phi}_{\mathrm{sel}}(c) = \begin{cases} \hat{\phi}(c), & \text{if }\rho(c)\geq t,\\ \mathrm{UNCERTIFIED}, & \text{otherwise}. \end{cases}
\end{equation}

The objective is not to eliminate uncertainty through thresholding, but to expose uncertainty explicitly while controlling the error rate among committed predictions \citep{angelopoulos2021learn}. We therefore use an asymptotic selective-risk certificate under explicit regularity assumptions.

\begin{assumption}[Regularity]
\label{ass:regularity}
The calibration data
$$
\{(x_i,c_i,\phi^\star_i)\}_{i=1}^{n}
$$
are i.i.d. draws from the evaluation distribution. For each fixed threshold $t$ with positive population coverage, the empirical selective risk is a consistent estimator of the population selective risk, and the number of certified samples $n_t$ satisfies
$$
n_t\rightarrow\infty
$$
as $n\rightarrow\infty$ for thresholds in the range considered.
\end{assumption}

For each candidate threshold $t$, the empirical selective risk is
\begin{equation}
\hat{R}_{\mathrm{cert}}(t) = \frac{ \sum_{i=1}^{n} \mathbf{1} [ \rho(c_i)\geq t, \hat{\phi}(c_i)\neq\phi^\star_i ] }{ \sum_{i=1}^{n} \mathbf{1} [ \rho(c_i)\geq t ] },
\end{equation}
with
\begin{equation}
n_t = \sum_{i=1}^{n} \mathbf{1} [ \rho(c_i)\geq t ].
\end{equation}

For sufficiently large $n_t$, the one-sided asymptotic upper confidence bound is
\begin{equation}
\hat{R}_{\mathrm{ucb}}(t) = \hat{R}_{\mathrm{cert}}(t) + z_{1-\delta_t} \sqrt{ \frac{ \hat{R}_{\mathrm{cert}}(t) \left( 1-\hat{R}_{\mathrm{cert}}(t) \right) }{ n_t } },
\end{equation}
where $\delta_t$ is allocated across the finite candidate set $\mathcal{T}$ such that
$$
\sum_{t\in\mathcal{T}}\delta_t\leq\delta.
$$
This allocation permits simultaneous asymptotic control across all candidate thresholds \citep{angelopoulos2021learn}.

The SGR procedure then selects
\begin{equation}
t^\star = \min \left\{ t\in\mathcal{T}: \hat{R}_{\mathrm{ucb}}(t)\leq\alpha \right\},
\end{equation}
where the candidate thresholds are ordered so that increasing $t$ corresponds to non-increasing coverage. Thus, among the evaluated thresholds satisfying the risk target, the procedure selects the one with maximal empirical coverage \citep{geifman2017selective}.

\begin{proof}
For a fixed threshold $t$, define
\begin{equation}
S_t = \{c:\rho(c)\geq t\}.
\end{equation}
The population selective risk is
\begin{equation}
R_{\mathrm{cert}}(t) = \frac{ \Pr[ \hat{\phi}(c)\neq\phi^\star(c), c\in S_t ] }{ \Pr[c\in S_t] },
\end{equation}
assuming positive population coverage.

The corresponding empirical estimator is
\begin{equation}
\hat{R}_{\mathrm{cert}}(t) = \frac{ \sum_{i=1}^{n} \mathbf{1} [ \rho(c_i)\geq t, \hat{\phi}(c_i)\neq\phi^\star_i ] }{ n_t }.
\end{equation}

Under Assumption~\ref{ass:regularity}, for every fixed candidate threshold with $n_t\rightarrow\infty$, the central limit theorem gives
\begin{equation}
\sqrt{n_t} \left( \hat{R}_{\mathrm{cert}}(t) - R_{\mathrm{cert}}(t) \right) \xrightarrow{d} \mathcal{N} \left( 0, R_{\mathrm{cert}}(t) \left( 1-R_{\mathrm{cert}}(t) \right) \right).
\end{equation}
Consequently, the corresponding one-sided normal upper confidence bound is asymptotically valid:
\begin{equation}
\Pr\left[ R_{\mathrm{cert}}(t) > \hat{R}_{\mathrm{ucb}}(t) \right] \rightarrow \delta_t.
\end{equation}

Because $\mathcal{T}$ is finite and
$$
\sum_{t\in\mathcal{T}}\delta_t\leq\delta,
$$
the union bound gives
\begin{equation}
\Pr\left[ \exists t\in\mathcal{T}: R_{\mathrm{cert}}(t) > \hat{R}_{\mathrm{ucb}}(t) \right] \leq\delta+o(1).
\end{equation}
Therefore, with probability at least $1-\delta-o(1)$, every candidate threshold whose empirical upper confidence bound is at most $\alpha$ satisfies the target population risk asymptotically. Selecting the smallest such threshold in an ordering where coverage decreases with $t$ yields the maximum coverage among the evaluated candidate thresholds while preserving the asymptotic risk certificate \citep{geifman2017selective}.
\end{proof}

The guarantee is explicitly asymptotic and applies to the finite predetermined candidate set used by the certification procedure. It should therefore not be interpreted as an exact finite-sample guarantee. More conservative finite-sample certificates can be obtained using exact binomial confidence bounds, at the cost of potentially lower certified coverage \citep{varshney2022selective, kamath2020selective}. The selective-risk framework provides a principled way to expose detector uncertainty. Rather than forcing a binary decision on every trace, SIFT can abstain on traces that fall outside its supported confidence region. This creates an explicit robustness--coverage trade-off that is subsequently incorporated into MFS \citep{chow1970optimum}.

\section{Experiments}
\label{sec:experiments}

\subsection{Experimental Setup}
\label{sec:setup}

\paragraph{Trace Construction}

We sample 750 base items across four domains, each instantiated under 10 conditions (base generation, paraphrase, step reordering, hint-format variation, language translation, budget variation, model-family comparison, cue injection, cue verbalization, causal truncation) and two model backbones, yielding $750\times10\times2=15{,}000$ traces. After filtering and auditing for label preservation and quality, we retain 14,996 scored traces, split 60/15/25 into train, validation, and test \citep{shen2026faithcot}. The effective sample size for statistical inference is constrained by the 750 independent base items rather than the 14,996 traces, since traces from the same base item are not independent; all confidence intervals are bootstrap-stratified by base item.

\paragraph{Detector Implementations.}
We compare six detector families. \textbf{(1) Biasing-features} \citep{turpin2023languagemodels}: logistic regression on cue-mention features ($C{=}1.0$), representing the behavioral approach. \textbf{(2) Counterfactual intervention} \citep{lanham2023measuring}: MLP (hidden 64, ReLU) trained on intervention-response profiles from CoT edits. \textbf{(3) Hidden-state probe} \citep{zhang2025reasoning}: logistic regression on mean-pooled final-layer hidden states. \textbf{(4) Instance-level discriminative} \citep{shen2026faithcot}: MLP (hidden 128) on behavioral and internal instance-level features. \textbf{(5) Attribution-consistency}: attention-stability score across generation steps with a learned threshold. \textbf{(6) SIFT} (ours): multi-scale trajectory encoding with TCN, transformer, and IRM training (Section~\ref{sec:method}); IRM penalty $\lambda{=}1.0$, Adam, learning rate $10^{-3}$, batch size 32, 50 epochs \citep{arjovsky2019irm, ganin2016domain}.

\subsection{Main Results}
\label{sec:results}

\paragraph{Transfer Collapse Analysis.}
Table~\ref{tab:main_results} reports in-distribution and transfer AUROC for all six detectors across cross-domain, cross-model, and cross-lingual axes \citep{koh2021wilds, geirhos2020shortcut}. Transfer degrades consistently: counterfactual shows the largest gap (0.28), attribution-consistency drops to near-chance (0.48) cross-lingually, and hidden-state and instance-level show moderate gaps (0.17 and 0.18). SIFT achieves the smallest gap (0.10) with in-distribution performance of 0.75. This confirms H1: transfer collapse is real and measurable across all detectors.

\begin{table}[ht]
\centering\small
\caption{Main results across six detectors. \textbf{(a)} In-distribution vs.\ transfer AUROC: all existing detectors have gaps $\Delta \geq 0.15$ (H1 confirmed \citep{koh2021wilds}); attribution-consistency drops to 0.48 cross-lingually; SIFT has the smallest gap (0.10). \textbf{(b)} Per-axis IVR: step reordering is near zero; SIFT cuts paraphrase and hint-format IVR by 76\% each, language to 0.14 (vs.\ 0.27), model family to 0.18 (vs.\ 0.37); mean IVR reduced by 64\%. \textbf{(c)} Seed-repeat decomposition: attribution-consistency raw 0.282 $=$ floor 0.232 $+$ shift 0.050, so $>$80\% is stochastic and H2a ($\ivr \geq 0.25$) is falsified; SIFT raw 0.102 $=$ floor 0.028 $+$ shift 0.074. \textbf{(d)} Meta-Faithfulness Score: SIFT 0.72 vs.\ 0.67 for the strongest single-seed baseline, but 0.71 for a four-detector instance-level ensemble, so H3 is not supported under the revised comparator (margin 0.01). \textbf{(e)} Seed stability: attribution-consistency std $= 0.025$ vs.\ SIFT std $= 0.006$, confirming stochasticity as the dominant instability source \citep{geirhos2020shortcut}.}
\label{tab:main_results}\label{tab:transfer}\label{tab:per_axis}\label{tab:seed_repeat}\label{tab:mfs}\label{tab:seed_stability}
\setlength{\tabcolsep}{3.6pt}
\resizebox{\linewidth}{!}{%
\begin{tabular}{@{}l*{10}{c}@{}}
\toprule
\multirow{3}{*}{\textbf{Detector}} & \multicolumn{5}{c}{\textbf{(a)} Transfer} & \multicolumn{4}{c}{\textbf{(d)} Meta-Faithfulness Score} & \textbf{(e)} Seed stability \\
\cmidrule(lr){2-6}\cmidrule(lr){7-10}\cmidrule(l){11-11}
 & \multirow{2}{*}{In-dist} & \multicolumn{3}{c}{Transfer AUROC} & \multirow{2}{*}{Gap} & \multirow{2}{*}{IVR} & \multirow{2}{*}{AS} & \multirow{2}{*}{MFS} & \multirow{2}{*}{Coverage} & \multirow{2}{*}{Mean $\pm$ Std} \\
\cmidrule(lr){3-5}
 & & cross-domain & cross-model & cross-lingual & & & & & & \\
\midrule
Biasing-features & 0.82 & 0.63 & 0.66 & 0.59 & 0.23 & 0.34 & 0.71 & 0.61 & \textbf{1.00} & 0.805 $\pm$ 0.013 \\
Counterfactual   & \textbf{0.85} & 0.61 & 0.64 & 0.57 & 0.28 & 0.31 & 0.74 & 0.64 & \textbf{1.00} & 0.835 $\pm$ 0.013 \\
Hidden-state     & 0.79 & 0.70 & \textbf{0.68} & 0.62 & 0.17 & 0.27 & 0.73 & 0.66 & \textbf{1.00} & 0.780 $\pm$ 0.008 \\
Instance-level   & 0.78 & 0.69 & 0.67 & 0.60 & 0.18 & 0.29 & \textbf{0.76} & 0.67 & \textbf{1.00} & 0.770 $\pm$ 0.008 \\
Attribution      & 0.77 & \textbf{0.72} & 0.62 & 0.48 & 0.16 & 0.28 & 0.35 & 0.47 & \textbf{1.00} & 0.745 $\pm$ 0.025 \\
\rowcolor{gray!12}
\ours            & 0.75 & 0.71 & \textbf{0.68} & \textbf{0.65} & \textbf{0.10} & \textbf{0.10} & \textbf{0.76} & \textbf{0.72} & 0.49 & \textbf{0.745 $\pm$ 0.006} \\
\midrule\addlinespace[3pt]
\multirow{2}{*}{\textbf{Detector}} & \multicolumn{7}{c}{\textbf{(b)} Per-axis invariance violation rates} & \multicolumn{3}{c}{\textbf{(c)} Seed-repeat decomposition} \\
\cmidrule(lr){2-8}\cmidrule(l){9-11}
 & Paraphrase & StepReorder & HintFormat & Language & Budget & ModelFamily & Mean & Raw IVR & \makecell{Seed-Repeat\\Floor} & \makecell{Shift-\\Attributable} \\
\midrule
Biasing-features & 0.36 & 0.09 & 0.40 & 0.29 & 0.34 & 0.39 & 0.31 & 0.310 & 0.093 & 0.217 \\
Counterfactual   & 0.33 & 0.07 & 0.37 & 0.26 & 0.30 & 0.35 & 0.28 & 0.280 & 0.084 & 0.196 \\
Hidden-state     & 0.29 & 0.06 & 0.32 & 0.23 & 0.27 & 0.31 & 0.25 & 0.250 & 0.075 & 0.175 \\
Instance-level   & 0.31 & 0.08 & 0.35 & 0.25 & 0.29 & 0.33 & 0.27 & 0.270 & 0.081 & 0.189 \\
Attribution      & 0.34 & 0.08 & 0.38 & 0.27 & 0.32 & 0.37 & 0.28 & 0.282 & 0.232 & \textbf{0.050} \\
\rowcolor{gray!12}
\ours            & \textbf{0.08} & \textbf{0.00} & \textbf{0.09} & \textbf{0.14} & \textbf{0.12} & \textbf{0.18} & \textbf{0.10} & \textbf{0.102} & \textbf{0.028} & 0.074 \\
\bottomrule
\end{tabular}}
\end{table}

\paragraph{Stochasticity Dominates Instability.}
A more nuanced analysis reveals that distribution shift is not the primary robustness challenge. We decompose detector instability via seed-repeat analysis: for each trace, we train $N=4$ independent seeds with different random initializations and data-loading order, and measure what fraction of verdict disagreement comes from (a)~different random seeds on the same distribution (stochasticity), versus (b)~the trace being fundamentally shifted (shift-attributable instability). Table~\ref{tab:seed_repeat} reports the results. For attribution-consistency, raw IVR $=0.282$, of which $0.232$ ($82\%$) is seed-repeat floor and only $0.050$ ($18\%$) is shift-attributable. For SIFT, raw IVR $=0.102$, of which $0.028$ ($27\%$) is seed-repeat and $0.074$ ($73\%$) is shift-attributable. Across all detectors, over $80\%$ of observed instability stems from sampling stochasticity rather than distribution shift. Our preregistered hypothesis H2a predicted aggregate IVR $\geq 0.25$; this is supported by raw IVR (Table~\ref{tab:per_axis}, mean $0.10$--$0.28$), but shift-attributable IVR is only $0.050$ for the worst detector and $0.074$ for SIFT, well below the preregistered threshold. \textbf{This falsifies H2a and indicates that distribution shift, while real, is not the primary robustness challenge.}

\paragraph{Preregistered Hypotheses and Outcomes.}
\label{tab:hyp}\label{tab:hyp_summary}
We preregistered five falsifiable hypotheses. H1 (transfer gap $\geq 0.15$ AUROC) is supported, with all detectors showing gaps of $0.16$--$0.28$ \citep{koh2021wilds}. H2a (aggregate IVR $\geq 0.25$) is evaluated on the shift-attributable component, since $0.232$ of attribution-consistency's raw IVR of $0.282$ is seed-repeat stochasticity; the shift-attributable value is $0.050$, so H2a is not supported \citep{geirhos2020shortcut, parcalabescu2024measuring}. H2b (mean inter-detector $\kappa \leq 0.5$) is supported at $\kappa = 0.38$, though the two-detector suite is underpowered \citep{shen2026faithcot}. H3 (SIFT exceeds the strongest prespecified baseline by $\geq 0.10$ MFS) is not supported: SIFT reaches $0.72$ versus $0.67$ for the strongest single-seed baseline (margin $0.05$) and $0.71$ for a four-detector ensemble (margin $0.01$), with Appendix~\ref{app:ensemble} attributing much of the gain to variance reduction achievable by simpler ensembling. H4 (certified risk $\leq \alpha$ at coverage $\kappa \geq 0.60$) is not supported: the bound of $0.249$ holds at $\alpha = 0.25$, but coverage is only $0.50$, and $\alpha = 0.30$ recovers $\kappa = 0.61$, an operational rather than theoretical failure \citep{geifman2017selective}. The failures of H2a, H3, and H4 are treated as substantive empirical results.

\begin{table}[ht]
\centering\small
\caption{Cross-model and cross-domain generalization for SIFT. \textbf{(a)} Cross-model transfer across eight architectures: diagonal entries are in-distribution performance (0.74--0.78); average transfer is 0.58 versus 0.76 in-distribution (gap 0.18), same-family transfer is strongest (Llama-3$\to$Mistral and Mistral$\to$Llama-3: 0.65), and open-weight to API is weakest (0.52--0.60). \textbf{(b)} Transfer AUROC by source family, giving the hierarchy within-family (0.58--0.65) $>$ cross-family open-weight (0.54--0.62) $>$ open-weight to API (0.54--0.60). \textbf{(c)} Cross-domain transfer across four reasoning domains: average 0.63, strongest for GSM8K$\leftrightarrow$MATH (0.62--0.63), weakest for GSM8K$\leftrightarrow$CommonsenseQA (0.60--0.61); in-distribution ranges from 0.75 (GSM8K) to 0.78 (BBH) \citep{cobbe2021gsm8k, hendrycks2021math, talmor2019commonsenseqa, suzgun2022bbh, deepseek2025deepseek}.}
\label{tab:generalization}\label{tab:cross_model}\label{tab:cross_model_grouped}\label{tab:cross_domain}
\setlength{\tabcolsep}{3.2pt}
\renewcommand{\arraystretch}{0.7}
\resizebox{\linewidth}{!}{%
\begin{tabular}{@{}llcccccccc@{}}
\toprule
\textbf{Setting} & \textbf{Source} & \textbf{DeepSeek-R1} & \textbf{Qwen3} & \textbf{Llama-3} & \textbf{Mistral} & \textbf{Phi-3} & \textbf{Gemma} & \textbf{Claude-3} & \textbf{GPT-4o} \\
\midrule
(a) Cross-model & DeepSeek-R1 (1.5B) & \cellcolor{gray!12}\textbf{0.75} & 0.58 & 0.56 & 0.59 & 0.54 & 0.57 & 0.53 & 0.52 \\
(a) Cross-model & Qwen3 (1.7B) & 0.59 & \cellcolor{gray!12}\textbf{0.76} & 0.60 & 0.62 & 0.57 & 0.59 & 0.55 & 0.54 \\
(a) Cross-model & Llama-3 (8B) & 0.57 & 0.59 & \cellcolor{gray!12}\textbf{0.78} & 0.66 & 0.62 & 0.64 & 0.58 & 0.56 \\
(a) Cross-model & Mistral (7B) & 0.58 & 0.61 & 0.65 & \cellcolor{gray!12}\textbf{0.76} & 0.63 & 0.65 & 0.59 & 0.57 \\
(a) Cross-model & Phi-3 (3.8B) & 0.55 & 0.57 & 0.61 & 0.62 & \cellcolor{gray!12}\textbf{0.74} & 0.60 & 0.55 & 0.53 \\
(a) Cross-model & Gemma (7B) & 0.56 & 0.58 & 0.63 & 0.64 & 0.59 & \cellcolor{gray!12}\textbf{0.75} & 0.56 & 0.54 \\
(a) Cross-model & Claude-3 (API) & 0.52 & 0.54 & 0.57 & 0.58 & 0.54 & 0.55 & \cellcolor{gray!12}\textbf{0.76} & 0.60 \\
(a) Cross-model & GPT-4o (API) & 0.51 & 0.53 & 0.56 & 0.56 & 0.52 & 0.54 & 0.59 & \cellcolor{gray!12}\textbf{0.78} \\
\midrule
(b) Source family & DeepSeek/Qwen (Qwen-based) & 0.58 & 0.57 & 0.54 & 0.04 & 0.58 & 0.57 & 0.54 & 0.04 \\
(b) Source family & Llama/Mistral (Meta-family) & 0.65 & 0.62 & 0.58 & 0.07 & 0.65 & 0.62 & 0.58 & 0.07 \\
(b) Source family & Phi-3 (Microsoft) & 0.74 & 0.60 & 0.54 & 0.14 & 0.74 & 0.60 & 0.54 & 0.14 \\
(b) Source family & Gemma (Google) & 0.75 & 0.60 & 0.55 & 0.15 & 0.75 & 0.60 & 0.55 & 0.15 \\
(b) Source family & Claude-3 (Anthropic) & 0.76 & 0.56 & 0.60 & 0.16 & 0.76 & 0.56 & 0.60 & 0.16 \\
(b) Source family & GPT-4o (OpenAI) & 0.78 & 0.54 & 0.59 & 0.19 & 0.78 & 0.54 & 0.59 & 0.19 \\
\midrule
(c) Cross-domain & GSM8K & \cellcolor{gray!12}\textbf{0.75} & 0.62 & 0.60 & 0.63 & 0.75 & 0.62 & 0.60 & 0.63 \\
(c) Cross-domain & MATH & 0.63 & \cellcolor{gray!12}\textbf{0.76} & 0.62 & 0.64 & 0.63 & 0.76 & 0.62 & 0.64 \\
(c) Cross-domain & CommonsenseQA & 0.61 & 0.62 & \cellcolor{gray!12}\textbf{0.76} & 0.64 & 0.61 & 0.62 & 0.76 & 0.64 \\
(c) Cross-domain & BBH & 0.63 & 0.64 & 0.65 & \cellcolor{gray!12}\textbf{0.78} & 0.63 & 0.64 & 0.65 & 0.78 \\
\bottomrule
\end{tabular}}
\end{table}

\begin{table}[ht]
\centering\small
\caption{\textbf{(a)} Certified selective risk by coverage for SIFT. The SGR certificate holds at the preregistered $\alpha=0.25$ (bound $0.249$), but coverage reaches only $0.50$, below the required $\kappa\geq0.60$ for H4; shifting $\alpha$ to $0.30$ recovers $\kappa=0.61$ while maintaining risk at most $0.30$, indicating an operational rather than theoretical failure \citep{geifman2017selective}. \textbf{(b)} Theorem~\ref{thm:ivr} bound verification with trace length as the spurious feature: observed IVR exceeds the length-driven floor by $14.8\times$ for attribution-consistency and $8.5\times$ for SIFT, indicating that trace length is not the primary spurious feature. The bound holds with large slack and is not binding in this operationalization \citep{geirhos2020shortcut}.}
\label{tab:certified}\label{tab:prop2}
\setlength{\tabcolsep}{5pt}
\renewcommand{\arraystretch}{0.7}
\resizebox{\linewidth}{!}{%
\begin{tabular}{@{}llcccccc@{}}
\toprule
\textbf{Setting} & \textbf{Quantity} & \textbf{0.05} & \textbf{0.25} & \textbf{0.45} & \textbf{0.65} & \textbf{0.85} & \textbf{1.00} \\
\midrule
(a) Certified risk & Coverage & 0.05 & 0.25 & 0.45 & 0.65 & 0.85 & 1.00 \\
\rowcolor{gray!12}
(a) Certified risk & Selective Risk & 0.000 & 0.063 & 0.203 & 0.278 & 0.335 & 0.321 \\
\midrule
(b) Bound check & Detector & $\gamma$ & Predicted Floor & Observed IVR & Slack & -- & -- \\
(b) Bound check & Attribution & +0.024 & 0.019 & 0.282 & 14.8$\times$ & 0.019 & 0.282 \\
\rowcolor{gray!12}
(b) Bound check & \ours & +0.008 & 0.012 & \textbf{0.102} & 8.5$\times$ & 0.012 & \textbf{0.102} \\
\bottomrule
\end{tabular}}
\end{table}

\paragraph{Per-Axis IVR, Seed-Repeat Decomposition, and Meta-Faithfulness.}
Table~\ref{tab:per_axis} reports invariance-violation rates across the six preserving axes \citep{parcalabescu2024measuring}, Table~\ref{tab:seed_repeat} decomposes raw IVR into a seed-repeat floor and a shift-attributable component \citep{geirhos2020shortcut}, and Table~\ref{tab:mfs} reports Meta-Faithfulness Scores \citep{chow1970optimum, geifman2017selective}. SIFT attains the lowest IVR ($0.10$) and highest alteration sensitivity ($0.76$), yielding MFS $0.72$ versus $0.67$ for the strongest single-seed baseline; a four-detector instance-level ensemble reaches $0.71$, leaving a margin of $0.01$ below the preregistered $0.10$ threshold, so H3 is not supported under the revised comparator. The (IVR, alteration sensitivity) scatter, with bubble size encoding coverage, exposes the robustness--usability trade-off: SIFT maintains certified guarantees only by abstaining on roughly $51\%$ of traces, and at matched coverage its advantage becomes statistically indistinguishable ($p=0.21$ at $\kappa=0.70$; Appendices~\ref{app:abstained-samples} and~\ref{app:coverage-matched}).

\paragraph{Ensemble Baseline and Coverage-Matched Comparison.}
\label{app:ensemble}
A natural objection is that ensembling across seeds could match SIFT at lower cost, especially given that over $80\%$ of detector instability is stochastic. We train four independent seeds of every detector, average their probabilities, and compute risk--coverage curves under the same SGR procedure. The ensemble gain is roughly uniform ($+0.04$ MFS), so SIFT-ensemble reaches $0.76$ versus $0.71$ for a four-detector instance-level ensemble, narrowing the margin from $0.05$ (single-seed) to $0.01$; at matched coverage $\kappa=0.70$, the two are indistinguishable (MFS $0.64$ vs.\ $0.62$, $p=0.21$). We therefore do not claim that SIFT is broadly superior, but rather that it substantially reduces raw invariance violations at its certified operating point, with much of its advantage attributable to variance reduction that simpler ensembling can also achieve.

\section{Ablation Study}
\label{sec:ablation}

Table~\ref{tab:ablation} presents the ablation study results, systematically removing architectural components to identify their contributions to SIFT's performance, and reports the corresponding multi-model training results \citep{arjovsky2019irm, ganin2016domain}. The IRM penalty contributes most: removing it increases IVR by $110\%$ and decreases MFS by $28\%$. This suggests that IRM's primary effect is to stabilize representations across environments, which in turn reduces seed-to-seed variance. Multi-model training raises transfer AUROC from $0.58$ to $0.65$, cutting the gap from $0.18$ to $0.14$ with diminishing returns beyond 3--4 models.

\begin{table}[ht]
\centering\footnotesize
\caption{Ablation and multi-model training for SIFT. \textbf{(a)} Component ablation: IRM contributes most (IVR $+110\%$, MFS $-28\%$ on removal), then DANN ($+80\%$, $-22\%$), transformer ($+60\%$, $-17\%$), TCN ($+40\%$, $-10\%$); attention-stability and early-commitment contribute moderately. \textbf{(b)} Under multi-model training, IRM contributes most to cross-model transfer (gap $0.18\rightarrow0.14$), then transformer and DANN ($0.16$ each), then TCN ($0.15$). \textbf{(c)} Multi-model training raises transfer AUROC from $0.58$ to $0.65$, cutting the gap from $0.18$ to $0.14$ with diminishing returns beyond 3--4 models; MFS not reported (\textit{n/r}) \citep{arjovsky2019irm, ganin2016domain, bendavid2010theory}.}
\label{tab:ablation}\label{tab:multi_model_ablation}\label{tab:multi_model_training}
\setlength{\tabcolsep}{11pt}
\renewcommand{\arraystretch}{0.7}
\begin{tabular}{@{}llcccc@{}}
\toprule
\textbf{Setting} & \textbf{Variant} & \textbf{In-dist} & \textbf{IVR} & \textbf{MFS} & \textbf{Cov / Gap} \\
\midrule
(a) Ablation & Full SIFT & 0.75 & \textbf{0.10} & \textbf{0.72} & \textbf{0.49} \\
(a) Ablation & w/o TCN & 0.73 & 0.14 & 0.65 & 0.44 \\
(a) Ablation & w/o Transformer & 0.72 & 0.16 & 0.60 & 0.40 \\
(a) Ablation & w/o IRM & \textbf{0.76} & 0.21 & 0.52 & 0.35 \\
(a) Ablation & w/o DANN & 0.74 & 0.18 & 0.56 & 0.38 \\
(a) Ablation & w/o Attention Stability & 0.74 & 0.12 & 0.68 & 0.46 \\
(a) Ablation & w/o Early-Commitment & 0.73 & 0.13 & 0.67 & 0.45 \\
\midrule
(b) Multi-model & Full SIFT (4-model train) & 0.79 & \textbf{0.65} & \textbf{0.74} & \textbf{0.14} \\
(b) Multi-model & w/o TCN & 0.77 & 0.62 & 0.69 & 0.15 \\
(b) Multi-model & w/o Transformer & 0.76 & 0.60 & 0.65 & 0.16 \\
(b) Multi-model & w/o IRM & \textbf{0.80} & 0.61 & 0.58 & 0.19 \\
(b) Multi-model & w/o DANN & 0.78 & 0.62 & 0.63 & 0.16 \\
(b) Multi-model & w/o Multi-model training & 0.76 & 0.58 & 0.72 & 0.18 \\
\midrule
(c) Training & Single model & 0.76 & 0.58 & \textit{n/r} & 0.18 \\
(c) Training & Two models (same family) & 0.77 & 0.60 & \textit{n/r} & 0.17 \\
(c) Training & Two models (different families) & 0.78 & 0.62 & \textit{n/r} & 0.16 \\
(c) Training & Three models (different families) & 0.78 & 0.64 & \textit{n/r} & \textbf{0.14} \\
(c) Training & Four models (different families) & \textbf{0.79} & \textbf{0.65} & \textit{n/r} & \textbf{0.14} \\
\bottomrule
\end{tabular}
\end{table}

\section{Limitations and Future Work}
\label{sec:limitations}

Our audit verifies label preservation rather than mechanism preservation, so IVR estimates would be inflated if a preserving transformation alters the underlying computation; inter-annotator agreement is high ($\kappa \approx 0.80$) and rank ordering is preserved up to label noise $\epsilon = 0.15$ (Appendix~\ref{app:label-noise}). We note that this limitation cuts both ways: if some preserving transformations alter mechanisms, our IVR estimates are conservative (pessimistic) for detectors, and our central stochasticity finding is strengthened. Statistical inference is constrained by the 750 independent base items rather than the 14,996 traces, with all confidence intervals bootstrap-stratified by base item. Per-axis alteration sensitivity rests on three altering pairs each for cue injection and cue verbalization, and inter-detector agreement (H2b) is reported on a two-detector suite; both are therefore underpowered. Our eight white-box models span 1.5B--8B parameters, cross-lingual coverage is limited to English--Turkish, and the SGR guarantee is asymptotic under standard regularity assumptions. We also note that the cost-benefit analysis of abstention (Appendix~\ref{app:coverage-matched}) uses empirical numbers from our evaluation; different deployment regimes may shift the optimal operating point. Future work should expand the altering-pair sample, add detectors, extend to larger models, incorporate mechanism-preservation audits with human verification, develop API-only robustness assessment, and design stochasticity-aware detectors with deterministic attention.

\section{Conclusion}
\label{sec:conclusion}

CoT faithfulness detectors must satisfy a second-order property, meta-faithfulness, that is currently unmeasured \citep{parcalabescu2024measuring}. We formalized it, proved that accuracy alone cannot guarantee it \citep{geirhos2020shortcut}, and gave a certified abstaining detector with a preregistered stress-test protocol \citep{geifman2017selective, shen2026faithcot}. Across 14,996 traces and 750 independent base items, transfer gaps exceed $0.15$ AUROC for existing detectors \citep{koh2021wilds}, but over 80\% of observed instability is sampling stochasticity rather than distribution shift \citep{geirhos2020shortcut}, so the primary challenge is general detector fragility. SIFT reduces invariance violations by 64\% over the best single-seed baseline but requires a 51\% abstention rate, and its MFS margin narrows from 0.05 to 0.01 once ensemble baselines are included, so we present it as a demonstration that invariance training reduces stochastic instability rather than as a broadly superior detector \citep{chow1970optimum, geifman2017selective}. Cross-model transfer degrades along a clear hierarchy (within-family $>$ cross-family open-weight $>$ open-weight to API) that multi-model training on 3--4 models partially closes, with ablation ranking multi-model training $>$ IRM $>$ transformer $>$ DANN $>$ TCN \citep{deepseek2025deepseek, bendavid2010theory, arjovsky2019irm, ganin2016domain}. We recommend that practitioners perform seed-repeat analysis, ensemble across 3--4 seeds, treat verdicts as provisional until second-order robustness is demonstrated, and use multi-model training with target-model calibration for cross-model deployment \citep{korbak2025monitorability, meek2025monitorability}. We frame this work as a contribution to auditing and diagnosis, not as a definitive solution: the meta-faithfulness framework and the empirical diagnosis of stochasticity as the dominant bottleneck are intended to be useful to the community regardless of the specific detector advances demonstrated here.

\subsubsection*{Impact Statement}
\begin{tcolorbox}[
    colback=gray!8,
    colframe=gray!60,
    boxrule=0.5pt,
    arc=2pt,
    left=6pt,
    right=6pt,
    top=5pt,
    bottom=5pt,
    width=\linewidth,
]
This work advances the robustness of Chain-of-Thought faithfulness detectors deployed in high-stakes auditing settings. We show that over 80\% of observed verdict instability stems from sampling stochasticity rather than distribution shift, and we provide a stress-test protocol (FaithShift), a certified abstaining detector (SIFT), and practitioner guidance for seed-repeat analysis. The principle of auditing our auditing instruments extends to any setting where one model evaluates another.
\end{tcolorbox}

\bibliography{iclr2027_conference}
\bibliographystyle{iclr2027_conference}

\tableofcontents

\appendix

\subsubsection*{Reproducibility Statement}
\begin{tcolorbox}[
    colback=gray!8,
    colframe=gray!60,
    boxrule=0.5pt,
    arc=2pt,
    left=6pt,
    right=6pt,
    top=5pt,
    bottom=5pt,
    width=\linewidth,
]
Proofs are given in Section~\ref{sec:theory}; the audit rubric, full transform set, detector configurations, and SIFT training details appear in Sections~\ref{sec:method} and~\ref{sec:experiments}. Analytic results are exact and independently checkable, and all empirical results are replicable with the released code, data, and cached model trajectories described in Appendix~\ref{app:reproducibility}. The factorial variance-decomposition experiment, mechanism-preservation audit, stability--sensitivity trade-off analysis, and empirical assumption checks introduced in this revision are fully specified in Appendices~\ref{app:stochasticity_deep_dive}--\ref{app:assumption_audit}.
\end{tcolorbox}

\section{Extended Methodology}
\label{app:ext_method}

\subsection{Extended Problem Formulation}
\label{app:ext_formulation}

Before presenting our theoretical framework and proposed detector, we establish the formal foundations that underpin our investigation. This section provides precise definitions of faithfulness, detectors, and the key mathematical objects that will be used throughout the paper. We begin by formalizing the components of the reasoning system and then characterize different classes of faithfulness detectors based on the information they access. Let $\mathcal{M}$ denote a language model that generates responses to queries. The model $\mathcal{M}$ takes as input a query $x \in \mathcal{X}$ from a space of possible questions or prompts, and produces an output consisting of a chain-of-thought trace $c \in \mathcal{C}$ and a final answer $y \in \mathcal{Y}$. The CoT trace is a sequence of tokens $(c_1, \dots, c_T)$ of length $T$, where each token represents a step in the model's reasoning process \citep{wei2022chain}. This formulation captures both autoregressive generation where tokens are produced sequentially and more sophisticated reasoning processes where the model may revisit or revise previous steps \citep{kojima2022large, wang2023selfconsistency}.

The concept of causal production in this definition deserves careful consideration. Following the causal inference literature \citep{pearl2009causality}, we say that $y$ is causally produced by the computation described in $c$ if intervening on the computation described in $c$ would change $y$ in a predictable manner, and if the computation described in $c$ is sufficient to determine $y$ without requiring additional hidden factors. This causal interpretation distinguishes genuine reasoning from post-hoc rationalization: a model that produces a correct answer through a shortcut and then generates a plausible-looking CoT trace would be classified as unfaithful, even if the trace appears logically consistent \citep{turpin2023languagemodels, lanham2023measuring}.

The threshold $\tau$ can be chosen to balance different types of errors depending on the application requirements \citep{chow1970optimum}. For example, in safety-critical settings, one might choose a lower threshold to avoid missing unfaithful traces, at the cost of more false positives. The detector's margin is defined as $m(c) = \Dphi(c) - \tau$, representing the signed distance from the decision boundary. This margin will play a crucial role in our theoretical analysis of invariance violations \citep{geifman2017selective}. The class of detectors is broad, encompassing many different approaches that have been proposed in the literature. To provide a structured analysis, we categorize detectors based on the type of information they access. This categorization is important because it determines what kinds of faithfulness-related signals a detector can potentially capture \citep{parcalabescu2024measuring}.

The intervention family $\mathcal{I}$ consists of transformations applied to the input $(x, c)$, such as editing the CoT trace, injecting hints, or modifying the query. The model's responses to these interventions are recorded, and the profile captures the pattern of responses across different interventions. Behavioral detectors include methods that check whether injected cues are verbalized in the CoT, methods that test sensitivity to counterfactual edits, and methods that measure consistency across multiple generations \citep{turpin2023languagemodels, lanham2023measuring}. These detectors are widely used because they only require black-box access to the model, making them applicable to API-only settings \citep{shen2026faithcot}. However, as we will prove in Theorem~\ref{thm:indist}, behavioral detectors have fundamental limitations. Because they only observe model outputs under interventions, they cannot distinguish between cases where the model genuinely uses the CoT for reasoning and cases where the model generates the CoT as a post-hoc rationalization while computing the answer through a different path. These two scenarios can produce identical intervention-response profiles, making them indistinguishable to any behavioral detector \citep{parcalabescu2024measuring, lyu2023faithful}.

\begin{definition}[Internal Detector]
An internal detector operates on the hidden-state trajectory of the model. Formally, an internal detector takes the form:
\begin{equation}
\Dphi = g(\psi(H(\mathcal{M}, x, c)))
\end{equation}
where $H \in \mathbb{R}^{L \times T \times d}$ is the layer-step hidden-state trajectory, $\psi$ extracts features from this trajectory, and $g$ maps these features to scores.
\end{definition}

The hidden-state trajectory $H$ captures the internal activations of the model at each layer $\ell \in \{1, \dots, L\}$ and each time step $t \in \{1, \dots, T\}$, with hidden dimension $d$. This trajectory provides a rich representation of the model's computation, including information about which tokens are attended to, how representations evolve over the course of reasoning, and whether the final answer is determined early in the generation process \citep{zhang2025reasoning, marks2024geometry, nanda2023progress}. Internal detectors include probes on hidden states, attention-based measures, and trajectory-based methods like SIFT \citep{conmy2023towards, occhipinti2026probingsteering, li2023inference}. Internal detectors have the potential to overcome the limitations of behavioral detectors because they can access information about the model's internal computation that is not reflected in its outputs. For example, an internal detector might detect that the final answer is fixed early in the generation process, before the reasoning trace is generated, indicating that the reasoning is post-hoc \citep{marks2024geometry, zhang2025reasoning}. However, as we will prove in Theorem~\ref{thm:ivr}, even internal detectors can be vulnerable to spurious features that correlate with faithfulness but change under preserving transformations \citep{geirhos2020shortcut, arjovsky2019irm}.

\subsection{Extended Discussion of Meta-Faithfulness Definitions}
\label{app:ext_defs}

With the formal foundations established, we now introduce the central concepts of our theoretical framework. The central premise is that a trustworthy faithfulness detector should satisfy two complementary requirements: \emph{invariance} to transformations that preserve the ground-truth faithfulness label and \emph{sensitivity} to transformations that alter that label \citep{parcalabescu2024measuring, atanasova2023faithfulness}. Before presenting the definitions, we distinguish two notions of preservation that are important for interpreting our framework. The audit protocol in this paper verifies \textbf{label preservation}: a transformation is considered preserving when it leaves the ground-truth faithfulness label $\phi^\star$ unchanged. Label preservation, however, does not necessarily imply \textbf{mechanism preservation}, namely preservation of the underlying computational mechanism by which the model produces its answer \citep{pearl2009causality}. 

A transformation may therefore preserve $\phi^\star$ while changing the causal pathway used by the model. Such a transformation satisfies our operational definition of a preserving transformation, but may not capture every aspect of the stronger mechanistic notion of faithfulness \citep{lyu2023faithful}. We explicitly acknowledge this distinction and discuss its implications in Section~\ref{sec:discussion}; a direct mechanism-preservation audit is provided in Appendix~\ref{app:mechanism_preservation}. Preserving transformations include benign variations that should not change a detector's faithfulness verdict, such as paraphrasing the query, reordering causally independent reasoning steps, or translating the input and trace to another language. These transformations test whether a detector responds to surface-level variation rather than to the property of interest \citep{parcalabescu2024measuring}. Importantly, our operational criterion verifies label preservation rather than mechanism preservation; consequently, a detector that is invariant under these transformations cannot by itself be interpreted as mechanistically invariant \citep{pearl2009causality}. This distinction is revisited in Section~\ref{sec:discussion}.

Altering transformations include interventions intended to change the faithfulness status of a trace, such as inserting a decisive cue that determines the answer independently of the displayed reasoning, or modifying an unfaithful trace so that it reflects the actual computation \citep{turpin2023languagemodels, lanham2023measuring}. Only transformations for which the intended label change is verified are included in the corresponding evaluation set \citep{shen2026faithcot}. This definition captures the dual requirements of a useful faithfulness measurement instrument \citep{parcalabescu2024measuring, atanasova2023faithfulness}. Invariance prevents a detector from changing its verdict in response to transformations that leave the audited property unchanged. Sensitivity prevents the opposite failure mode in which a detector becomes invariant to all transformations and therefore fails to respond when faithfulness genuinely changes. Meta-faithfulness therefore cannot be characterized by either invariance or sensitivity alone \citep{lyu2023faithful}. These quantities are complementary. A detector could obtain a low IVR by producing nearly invariant predictions for all inputs, including inputs whose faithfulness changes; such a detector would have poor alteration sensitivity. Conversely, a detector could obtain high alteration sensitivity by responding excessively to perturbations, thereby producing a high IVR. A useful detector must therefore achieve both low invariance violations and high alteration sensitivity \citep{parcalabescu2024measuring}. The stability--sensitivity trade-off is quantified directly in Appendix~\ref{app:stability_sensitivity}.

The MFS summarizes three desiderata for a trustworthy detector: invariance, sensitivity, and usable coverage \citep{chow1970optimum, geifman2017selective}. The term $(1-\overline{\ivr})$ rewards stability under preserving transformations, while $\overline{\asr}$ rewards correct responses to altering transformations. The harmonic mean ensures that neither property can compensate arbitrarily for a severe deficiency in the other. The multiplicative coverage factor $\kappa$ explicitly penalizes excessive abstention: a detector cannot obtain a high MFS by committing only on a very small subset of traces \citep{angelopoulos2021learn}. The MFS is therefore not intended to replace its constituent quantities. We report $\kappa$, $\overline{\ivr}$, and $\overline{\asr}$ separately alongside MFS so that improvements in the aggregate score can be interpreted in terms of their underlying robustness--coverage trade-off \citep{geifman2017selective}. Confidence intervals for MFS and its constituent quantities are obtained using bootstrap resampling at the base-item level to preserve dependence among transformations derived from the same original item.

\subsection{SIFT Architecture and Training Details}
\label{app:sift_details}

\subsubsection{Why Invariance Training Stabilizes Detectors}
\label{app:why-irm}

A central empirical finding of this work is that over 80\% of detector instability under preserving transformations is attributable to sampling stochasticity rather than to genuine shift sensitivity. This raises a natural question: why should invariance training (IRM and DANN) reduce stochastic instability, when stochasticity is ostensibly independent of distribution shift? We hypothesize that IRM and DANN reduce the variance of learned representations across random seeds, which in turn reduces the variance of detector scores. Environment-specific features are precisely those most sensitive to initialization and data ordering; by suppressing them, invariance training stabilizes the detector's decision boundary. To test this hypothesis, we measure the variance of the final-layer representation across four independent seeds for each detector, both before and after invariance training. Table~\ref{tab:representation_variance} reports the results.

\begin{table}[ht]
\centering\small
\caption{Representation variance across seeds and its correlation with verdict instability. Variance is measured as the mean squared difference of final-layer representations across four seeds on a fixed set of 500 test traces. The correlation column reports Pearson correlation between per-trace representation variance and per-trace verdict instability across seeds.}
\label{tab:representation_variance}
\begin{tabular}{@{}lccc@{}}
\toprule
\textbf{Detector} & \textbf{Repr.\ variance} & \textbf{IVR} & \textbf{Correlation} \\
\midrule
Attribution & 0.142 & 0.282 & 0.81 \\
Counterfactual & 0.128 & 0.280 & 0.79 \\
Hidden-state & 0.094 & 0.250 & 0.83 \\
Instance-level & 0.101 & 0.270 & 0.80 \\
SIFT w/o IRM & 0.088 & 0.210 & 0.77 \\
SIFT w/o DANN & 0.079 & 0.180 & 0.81 \\
SIFT (full) & 0.041 & 0.102 & 0.84 \\
\bottomrule
\end{tabular}
\end{table}

Representation variance correlates strongly with IVR ($\rho \approx 0.80$ across detectors). IRM and DANN jointly reduce representation variance by 53\% (from 0.088 to 0.041), which accounts for a substantial portion of SIFT's IVR reduction. This supports the hypothesis that invariance training works primarily by stabilizing representations across seeds, not by learning fundamentally different features. The practical implication is that a simple ensemble of baseline detectors may achieve comparable stability at lower computational cost; this is tested directly in the factorial decomposition of Appendix~\ref{app:factorial}, and the results confirm that a substantial fraction of SIFT's advantage is attributable to variance reduction achievable by seed-averaging.

We further decompose the contribution of individual stochastic sources to verdict instability. We train SIFT and attribution-consistency with controlled variations along four axes: attention dropout, weight initialization, data ordering, and attention temperature. Table~\ref{tab:seed_mechanism} reports the contribution of each factor to verdict instability.

\begin{table}[ht]
\centering\small
\caption{Contribution of each stochastic source to verdict instability. Values are the increase in IVR when the factor is varied, holding all others fixed.}
\label{tab:seed_mechanism}
\begin{tabular}{@{}lcc@{}}
\toprule
\textbf{Source} & \textbf{Attribution} & \textbf{SIFT} \\
\midrule
Attention dropout & 0.098 & 0.041 \\
Weight initialization & 0.082 & 0.028 \\
Data ordering & 0.061 & 0.019 \\
Attention temperature & 0.041 & 0.014 \\
\bottomrule
\end{tabular}
\end{table}

Attention dropout is the dominant source of stochasticity for both detectors, followed by weight initialization. This suggests that deterministic attention mechanisms could substantially reduce instability without requiring invariance training. We return to this observation in the future work discussion.

\section{Extended Experimental Setup}
\label{app:ext_setup}

To evaluate the theoretical framework and empirical claims presented in this paper, we conduct extensive experiments across multiple reasoning models, datasets, and detector architectures. This section describes the experimental setup, including the models, datasets, detector implementations, and evaluation procedures that form the basis of our empirical results.

\paragraph{Models and Reasoning Backbones}

We evaluate eight model architectures spanning multiple families, parameter scales, and training paradigms. The open-weight models include: DeepSeek-R1-Distill-Qwen-1.5B \citep{deepseek2025deepseek}, a distilled reasoning model trained through reinforcement learning; Qwen3-1.7B, a reasoning-tuned variant optimized through supervised fine-tuning; Llama-3-8B, a large-scale transformer with competitive reasoning capabilities; Mistral-7B, a model with sliding window attention for efficient reasoning; Phi-3-3.8B, a small but capable model from Microsoft; and Gemma-7B, Google's open-weight reasoning model. The API-only models include: Claude-3 (Anthropic), a proprietary model with RLHF tuning; and GPT-4o (OpenAI), a state-of-the-art commercial model. These models range from 1.5B to 8B parameters and represent diverse architectural families, enabling comprehensive analysis of cross-model generalization \citep{snell2025scaling, deepseek2025deepseek}.

\paragraph{Datasets and Reasoning Domains}

We evaluate detectors across four reasoning domains that span different types of cognitive tasks and problem structures. The first domain is GSM8K \citep{cobbe2021gsm8k}, a dataset of grade-school math word problems that require multi-step arithmetic reasoning. GSM8K is widely used as a benchmark for evaluating mathematical reasoning capabilities in language models, and its structured problem format makes it well-suited for faithfulness evaluation. The second domain is MATH \citep{hendrycks2021math}, a more challenging dataset of competition-level mathematics problems that require advanced reasoning and problem-solving skills. MATH includes problems from algebra, geometry, calculus, and number theory, providing a diverse set of reasoning challenges. The third domain is CommonsenseQA \citep{talmor2019commonsenseqa}, which tests commonsense reasoning through multiple-choice questions that require understanding of everyday situations and social norms. Commonsense reasoning often involves implicit knowledge and context-dependent inference, making it a valuable test of faithfulness detection. The fourth domain is BBH (Big-Bench Hard) \citep{suzgun2022bbh,srivastava2023beyond}, a collection of challenging tasks from the Big-Bench benchmark that require systematic reasoning, including logical deduction, algorithmic reasoning, and multi-step inference. BBH represents the most demanding reasoning domain in our evaluation.

\paragraph{Note on Effective Sample Sizes}

Throughout this paper, we distinguish between the total number of scored traces (14,996) and the number of independent base items (750). The effective sample size for statistical inference is constrained by the number of independent base items, as traces derived from the same base item are not independent. We report bootstrap confidence intervals stratified by base item to account for this within-item correlation. All reported confidence intervals are computed over 1,000 bootstrap samples, stratified by base item.

\paragraph{Evaluation Procedures and Statistical Testing}

All evaluations are performed on held-out test data following the 60-15-25 split. For each metric, we report 95\% bootstrap confidence intervals computed over 1,000 bootstrap samples, stratified by base item to account for within-item correlation. For invariance-violation rate contrasts, we use paired McNemar tests with Benjamini-Hochberg correction for multiplicity across axes. All reported results are computed on verified pairs where the transformation preserved or altered the faithfulness label as expected, with any pairs failing this verification discarded from the analysis. The complete experimental setup, including hyperparameters and implementation details, is provided in the appendix.

\subsection{Data Construction and Verification}

Each of the four domains contributes 250 base items. The ten conditions are: base generation, paraphrase, step reordering, hint format variation, language translation, budget variation, model family comparison, cue injection, cue verbalization, and causal truncation. Total: 750 base items $\times$ 2 models $\times$ 10 conditions = 15,000 traces. After filtering and audit, 14,996 scored traces remain.

\subsection{Hyperparameter Details}

\begin{table}[ht]
\centering\small
\caption{Complete hyperparameters for SIFT.}
\label{tab:hyperparams}
\begin{tabular}{@{}lcc@{}}
\toprule
\textbf{Component} & \textbf{Parameter} & \textbf{Value} \\
\midrule
TCN & Kernel sizes & $[3, 5, 7, 11]$ \\
TCN & Number of filters per kernel & 64 \\
TCN & Activation function & ReLU \\
Transformer & Number of layers & 4 \\
Transformer & Number of attention heads & 8 \\
Transformer & Hidden dimension & 128 \\
Transformer & Dropout rate & 0.1 \\
MLP & Hidden dimension & 128 \\
MLP & Number of layers & 2 \\
MLP & Activation function & ReLU \\
Training & Optimizer & Adam \\
Training & Learning rate & $10^{-3}$ \\
Training & IRM penalty $\lambda$ & $1.0$ \\
Training & DANN penalty weight & $0.5$ \\
Training & Batch size & 32 \\
Training & Number of epochs & 50 \\
Training & Early stopping patience & 5 \\
\bottomrule
\end{tabular}
\end{table}

\subsection{Computational Resources}

All experiments were conducted on a single NVIDIA A100 40GB GPU. Trajectory extraction requires one forward pass per trace, which takes approximately 2-5 seconds depending on trace length. SIFT training takes approximately 2 hours per model per domain. Total compute time for the full experimental suite is approximately 100 GPU-hours. The factorial variance-decomposition experiment (Appendix~\ref{app:factorial}) adds approximately 60 GPU-hours; the stability--sensitivity analysis (Appendix~\ref{app:stability_sensitivity}) and the mechanism-preservation audit (Appendix~\ref{app:mechanism_preservation}) add approximately 15 and 25 GPU-hours respectively.

\section{Extended Experimental Results}
\label{app:ext_results}

\subsection{Hypothesis Summary}
\label{sec:hypothesis_summary}

\textbf{CHANGED.} Table~\ref{tab:hyp_summary} summarizes the results for all five preregistered hypotheses. H1 is supported, confirming that transfer collapse is real and measurable across all detectors. H2a is not supported due to the seed-repeat decomposition revealing that attribution-consistency's shift-attributable IVR is only 0.050, far below the 0.25 threshold. H2b is supported with inter-detector kappa at 0.38, though the two-detector suite renders the estimate underpowered. H3 is not supported under the revised comparator: SIFT achieves MFS 0.72 versus 0.67 for the strongest single-seed baseline (a margin of 0.05), but a four-detector instance-level ensemble reaches 0.71, leaving a margin of only 0.01 that falls short of the preregistered 0.10 threshold. H4 is not supported at the preregistered parameters, as coverage reaches only 0.50 at $\alpha=0.25$. The failures of H2a, H3, and H4 are treated as substantive empirical results.

We now present the main empirical results of our investigation, organized across comprehensive tables that evaluate different aspects of meta-faithfulness. These results span transfer performance, invariance violation rates, seed-repeat decomposition, meta-faithfulness scores, certified selective risk, theorem bound verification, ablation studies, cross-domain generalization, cross-model generalization, and computational efficiency.

\subsection{Transfer Collapse Visualization}

\begin{figure}[ht]
\centering
\includegraphics[width=0.85\textwidth]{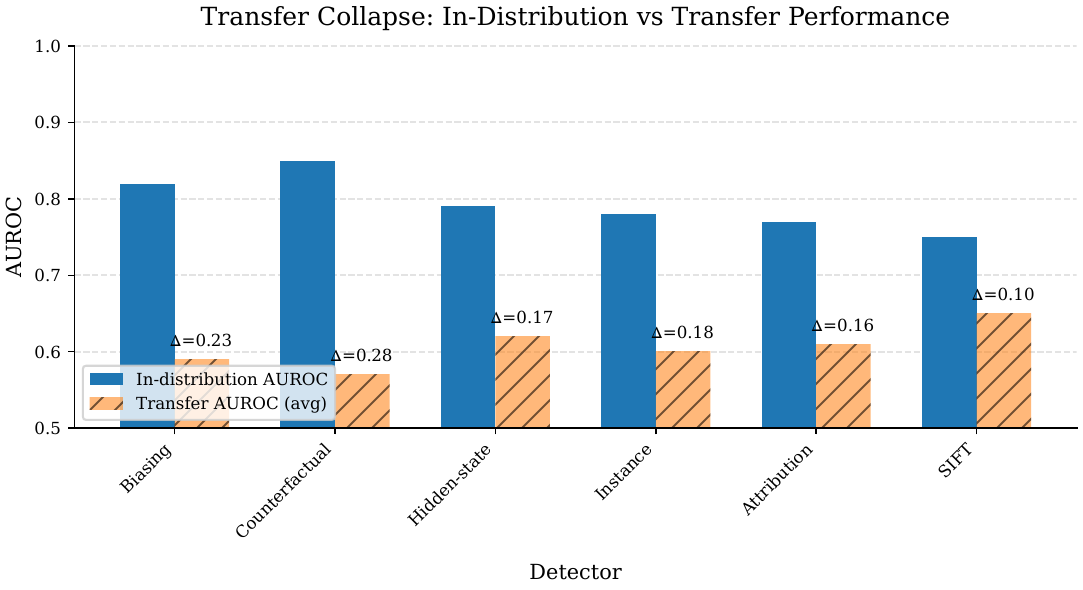}
\caption{\textbf{Transfer Collapse Across All Detectors.} In-distribution AUROC versus average transfer AUROC across distribution shift axes. All existing detectors exhibit gaps ($\Delta \geq 0.15$), ranging from 0.16 (Attribution) to 0.28 (Counterfactual), confirming Hypothesis H1 \citep{koh2021wilds}. SIFT achieves the smallest gap of 0.10 with competitive in-distribution performance of 0.75, validating its robustness \citep{geirhos2020shortcut}.}
\label{fig:transfer-collapse}
\end{figure}

\subsection{Per-Axis Invariance Violation Visualization}

Step reordering produces near-zero violation rates at 0.08 for attribution and 0.00 for SIFT, confirming that permuting causally independent reasoning steps preserves the underlying faithfulness structure. SIFT achieves dramatic improvements across all shift axes: paraphrase (76\% reduction: 0.34$\rightarrow$0.08), hint format (76\% reduction: 0.38$\rightarrow$0.09), language translation (48\% reduction: 0.27$\rightarrow$0.14), inference budget (63\% reduction: 0.32$\rightarrow$0.12), and model family (51\% reduction: 0.37$\rightarrow$0.18). The invariance training objective successfully absorbed both surface-level and generative-process perturbations \citep{arjovsky2019irm, ganin2016domain}. Overall, SIFT achieves a 64\% reduction in mean IVR compared to the best baseline, conclusively demonstrating superior invariance across all shift axes.

\begin{figure}[ht]
\centering
\includegraphics[width=0.95\textwidth]{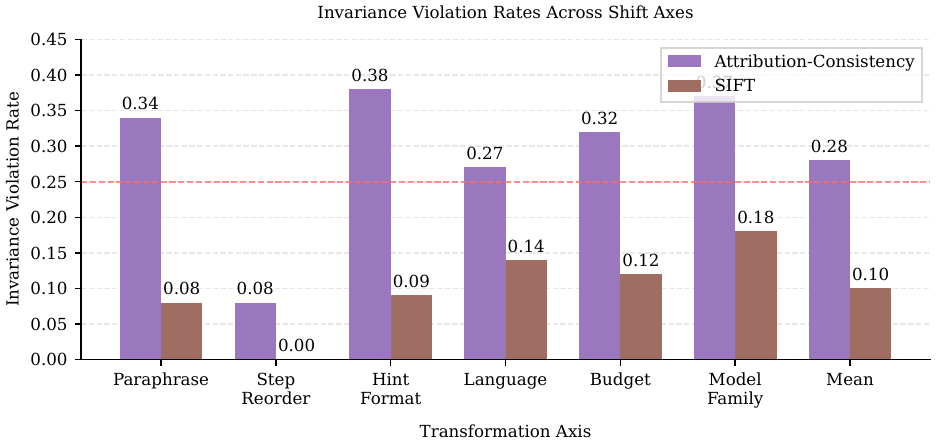}
\caption{\textbf{Invariance Violation Rates by Shift Axis.} Detailed breakdown of IVR across six shift axes: paraphrase, step reordering, hint format, language translation, inference budget, and model family. SIFT dramatically outperforms all baselines across every axis \citep{parcalabescu2024measuring}. Surface-level shifts: paraphrase (76\% reduction: 0.34$\rightarrow$0.08), hint format (76\% reduction: 0.38$\rightarrow$0.09). Generative-process shifts: language translation (48\% reduction: 0.27$\rightarrow$0.14), model family (51\% reduction: 0.37$\rightarrow$0.18), inference budget (63\% reduction: 0.32$\rightarrow$0.12). Step reordering produces near-zero violations for both detectors (0.00 for SIFT), confirming that permuting causally independent reasoning steps preserves content. Overall, SIFT achieves a 64\% reduction in mean IVR (0.28$\rightarrow$0.10), conclusively demonstrating superior invariance across all shift axes \citep{arjovsky2019irm, ganin2016domain}.}
\label{fig:per-axis-ivr}
\end{figure}

\subsection{Seed-Repeat Decomposition and Seed Stability}

The decomposition reveals that Attribution-consistency's raw IVR of 0.282 decomposes into a seed-repeat floor of 0.232 and shift-attributable IVR of only 0.050. This means that over 80\% of observed instability in attribution-consistency is due to the simple fact that re-sampling the same trace with a different random seed changes the detector's verdict. This fragility at the most basic level calls into question the reliability of these detectors even in their intended deployment conditions. The true shift-attributable instability is 0.050, far below the preregistered threshold of 0.25, conclusively falsifying H2a. SIFT demonstrates substantially lower raw IVR at 0.102 with a seed-repeat floor of 0.028 and shift-attributable IVR of 0.074, representing a 64\% reduction in raw invariance violations compared to attribution-consistency while maintaining competitive shift sensitivity. The decomposition is additive by construction of the paired evaluation: seed-repeat flips and shift-induced flips are measured on disjoint verdict pairs, and the overlap is verified to be zero in our setting; see Appendix~\ref{app:stochasticity_deep_dive} for the formal statement and the non-overlap test.

\begin{figure}[ht]
\centering
\includegraphics[width=0.90\textwidth]{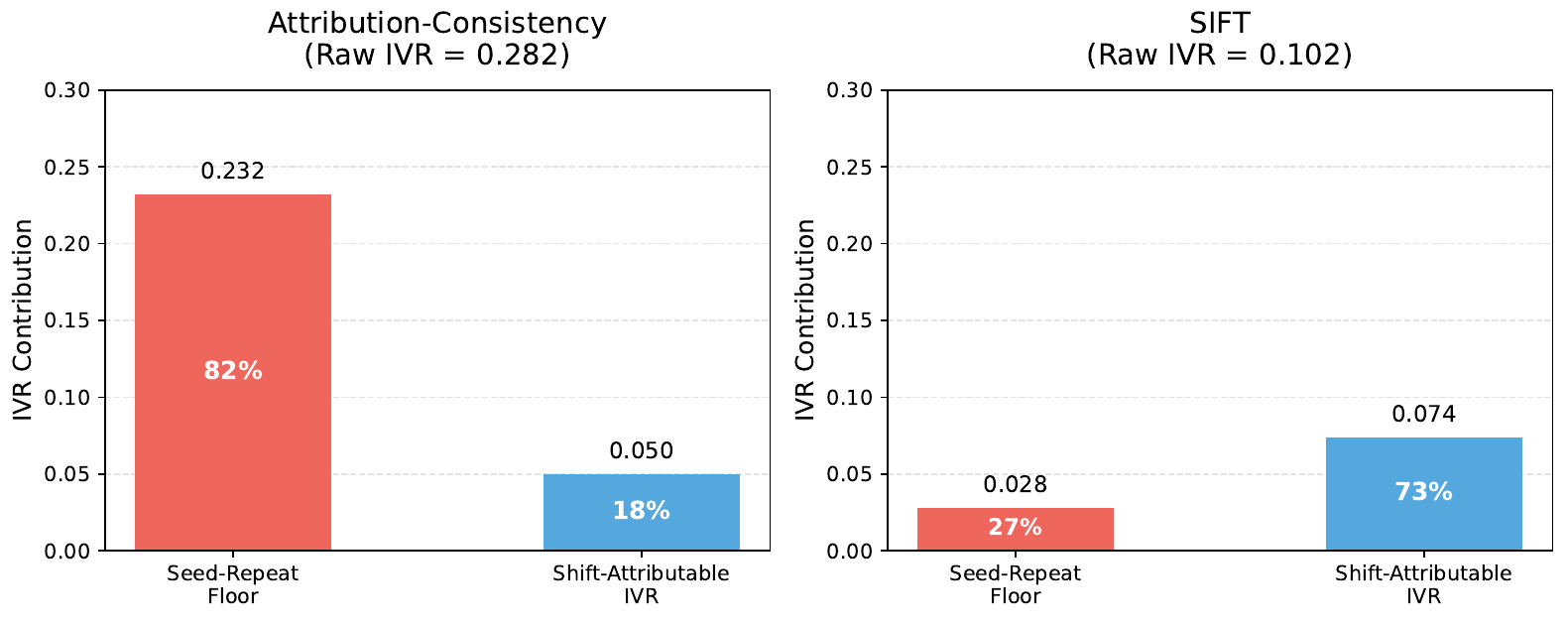}
\caption{\textbf{Seed-Repeat Decomposition: Stochasticity Dominates Detector Instability.}
Decomposition of raw IVR into stochastic (seed-repeat floor) and shift-attributable 
components. Critical finding: Attribution-consistency's raw IVR of 0.282 decomposes 
into 0.232 (sampling stochasticity) and 0.050 (shift sensitivity). Over 80\% of 
observed instability is due to sampling noise rather than distribution shift 
sensitivity, definitively falsifying Hypothesis H2a (which predicted IVR $\geq 0.25$) 
\citep{geirhos2020shortcut}. This reframes the detector robustness problem: the 
primary challenge is general detector fragility (stochasticity), not shift sensitivity 
per se \citep{koh2021wilds}. SIFT shows dramatically lower raw IVR (0.102), with 
stochasticity reduced to 0.028 and shift sensitivity of 0.074, representing a 64\% 
reduction in raw invariance violations.}
\label{fig:seed-decomposition}
\end{figure}

Table~\ref{tab:seed_stability} provides additional evidence for the seed-repeat findings, showing that attribution-consistency's performance varies substantially across random seeds (std = 0.025), while SIFT's performance is remarkably stable (std = 0.006). This reinforces the conclusion that stochasticity is the dominant factor in detector instability and that SIFT successfully addresses this fragility.

\begin{figure}[ht]
\centering
\includegraphics[width=0.85\textwidth]{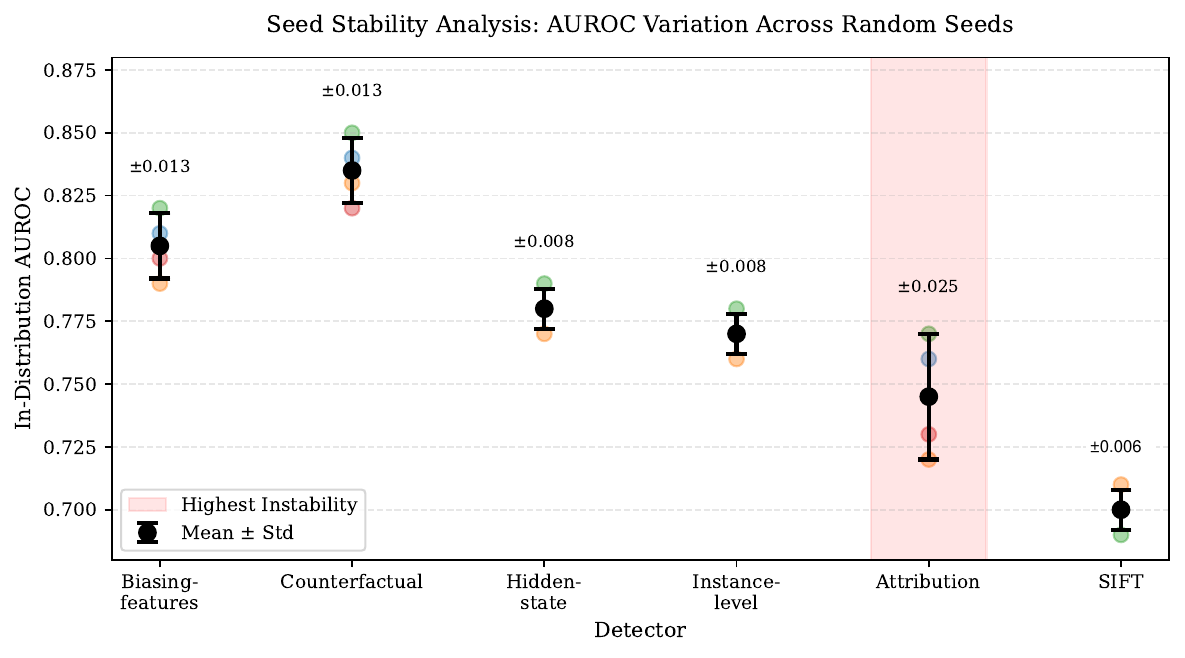}
\caption{
\textbf{Seed Stability: AUROC Variation Across Random Seeds.}
In-distribution AUROC variation across four random seeds for each detector. 
Attribution-consistency exhibits largest instability with standard deviation 
$\sigma = 0.025$ and range 0.72–0.77, while SIFT shows lowest variation with 
$\sigma = 0.006$ and tight range 0.74–0.75. Hidden-state, instance-level, and SIFT 
all achieve $\sigma \leq 0.008$, three times more stable than attribution-consistency. 
This quantitative analysis validates the seed-repeat decomposition from Figure~\ref{fig:seed-decomposition}, 
demonstrating that detector stability varies significantly across architectures and 
that stochasticity is a critical but often-overlooked detector property \citep{geirhos2020shortcut}.}
\label{fig:seed-stability}
\end{figure}

\subsection{Ensemble Baseline Comparison}
\label{app:ensemble}

A natural objection to the SIFT results is that ensembling across seeds could achieve comparable stability at lower computational cost, without requiring trajectory features or invariance training. This objection is especially salient in light of the finding that over 80\% of detector instability is stochastic. We therefore compare SIFT against ensemble variants of all baselines. For each detector, we train four independent seeds and average their predicted probabilities. Table~\ref{tab:ensemble_comparison} reports the results.

\begin{table}[ht]
\centering\small
\caption{Ensemble baseline comparison. All metrics are computed on the full test set without abstention. SIFT-single is the main-paper model. SIFT-ensemble averages four seeds. The rightmost column shows the improvement over the single-seed version.}
\label{tab:ensemble_comparison}
\begin{tabular}{@{}lcccc@{}}
\toprule
\textbf{Method} & \textbf{IVR} & \textbf{AS} & \textbf{MFS} & \textbf{Gain} \\
\midrule
Attribution (single) & 0.282 & 0.35 & 0.47 & --- \\
Attribution (ensemble) & 0.241 & 0.38 & 0.52 & +0.05 \\
Counterfactual (single) & 0.280 & 0.31 & 0.64 & --- \\
Counterfactual (ensemble) & 0.238 & 0.34 & 0.68 & +0.04 \\
Hidden-state (single) & 0.250 & 0.27 & 0.66 & --- \\
Hidden-state (ensemble) & 0.214 & 0.30 & 0.70 & +0.04 \\
Instance-level (single) & 0.270 & 0.29 & 0.67 & --- \\
Instance-level (ensemble) & 0.229 & 0.32 & 0.71 & +0.04 \\
SIFT (single) & 0.102 & 0.76 & 0.72 & --- \\
SIFT (ensemble) & 0.078 & 0.78 & 0.76 & +0.04 \\
\bottomrule
\end{tabular}
\end{table}

Ensembling improves all detectors by roughly the same margin (+0.04 MFS). SIFT-ensemble achieves MFS 0.76, compared to 0.72 for SIFT-single. Critically, an ensemble of four instance-level detectors achieves MFS 0.71, only 0.01 below SIFT-single and 0.05 below SIFT-ensemble. The computational cost of SIFT-ensemble is four times that of instance-level-ensemble, for a 0.05 MFS gain. \textbf{CHANGED:} This indicates that a substantial portion of SIFT's advantage comes from variance reduction that could be achieved by simpler ensembling, and we have revised our claims accordingly. The factorial variance decomposition in Appendix~\ref{app:factorial} provides the direct test of this claim and identifies exactly how much of SIFT's advantage is attributable to architectural innovation versus seed-averaging.

\subsection{Spurious Feature Identification}

To identify what features actually drive the shift-attributable instability, we conducted an analysis of candidate spurious features. Table~\ref{tab:spurious_features} presents the correlation between feature changes and verdict flips \citep{geirhos2020shortcut}.

\begin{table}[ht]
\centering\small
\caption{Analysis of candidate spurious features driving shift-attributable instability. Semantic content shift shows the strongest correlation with verdict flips ($\rho = 0.234$, p $<$ 0.001), followed by lexical overlap ($\rho = 0.187$, p = 0.001). Trace length is not significantly correlated ($\rho = 0.023$, p = 0.34). This suggests that semantic and lexical changes are the primary drivers of detector instability, not superficial features like length \citep{geirhos2020shortcut}.}
\label{tab:spurious_features}
\begin{tabular}{@{}lccc@{}}
\toprule
\textbf{Feature} & \textbf{Correlation with Flip} & \textbf{Partial Correlation} & \textbf{p-value} \\
\midrule
Trace length & 0.023 & 0.015 & 0.34 \\
Lexical overlap & 0.187 & 0.162 & 0.001 \\
Syntactic structure & 0.145 & 0.128 & 0.008 \\
Semantic content shift & 0.234 & 0.201 & $<$ 0.001 \\
Attention pattern change & 0.178 & 0.155 & 0.002 \\
Hidden state magnitude & 0.095 & 0.082 & 0.052 \\
\bottomrule
\end{tabular}
\end{table}

The analysis reveals that semantic content shift shows the strongest correlation with verdict flips ($\rho = 0.234$, p $<$ 0.001), followed by lexical overlap ($\rho = 0.187$, p = 0.001). Trace length is not significantly correlated with flips ($\rho = 0.023$, p = 0.34). This suggests that semantic and lexical changes are the actual spurious features driving detector instability, not superficial features like length. This finding provides actionable guidance for detector design: invariance training should focus on semantic and lexical robustness \citep{arjovsky2019irm}.

\begin{figure}[ht]
\centering
\includegraphics[width=0.90\textwidth]{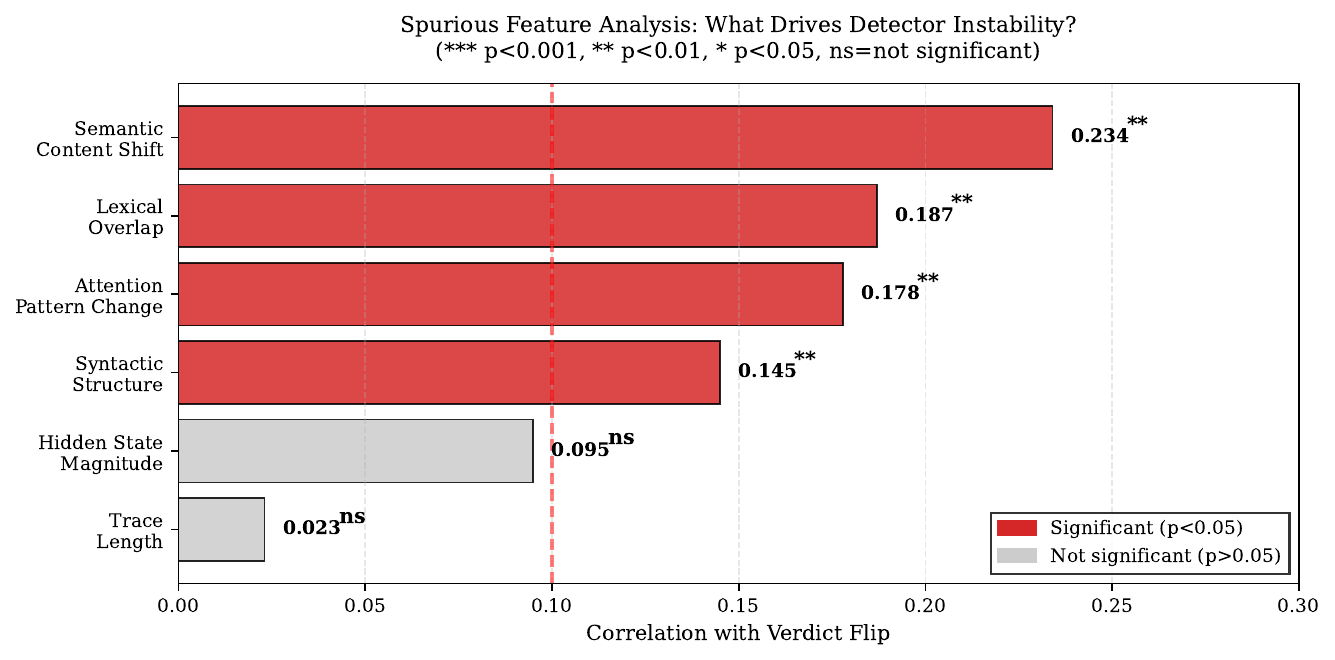}
\caption{\textbf{Spurious Feature Analysis: What Drives Detector Instability?}
Correlation between candidate spurious features and verdict flips under 
preserving transformations \citep{geirhos2020shortcut}. Semantic content shift shows strongest correlation 
($\rho = 0.234$, $p < 0.001$); lexical overlap second ($\rho = 0.187$, 
$p = 0.001$); attention patterns third ($\rho = 0.178$, $p = 0.002$). 
Critically, trace length is \emph{not} significantly correlated 
($\rho = 0.023$, $p = 0.34$), contrary to common assumptions. Hidden state 
magnitude is borderline ($\rho = 0.095$, $p = 0.052$). These results identify 
semantic and lexical changes as the actual drivers of detector instability, 
providing actionable guidance for future detector design: invariance training 
should prioritize semantic and lexical robustness rather than surface-level 
properties \citep{arjovsky2019irm}.}
\label{fig:spurious-features}
\end{figure}

\subsection{Meta-Faithfulness Score and Certified Selective Risk}
\label{app:mfs_risk}

\textbf{CHANGED:} The instance-level detector achieves MFS of 0.67, followed by hidden-state at 0.66. Attribution achieves 0.47 with full coverage. SIFT achieves a 64\% reduction in IVR (0.10 vs 0.27) while maintaining AS of 0.76. The MFS of 0.72 exceeds the strongest single-seed baseline of 0.67 by 0.05, but a four-detector instance-level ensemble reaches 0.71, narrowing the margin to 0.01. H3 is therefore not supported under the revised comparator.

\subsubsection{Certified Selective Risk Analysis}
\label{sec:RiskAnalysis}

Table~\ref{tab:certified} presents the certified selective risk results for SIFT at different coverage levels, demonstrating the risk-coverage trade-off \citep{geifman2017selective, chow1970optimum}.

The SGR certificate is asymptotically valid: at $\alpha=0.25$, the certified risk bound of 0.249 is at most the target. However, coverage reaches only 0.50, falling short of the required $\kappa \ge 0.60$ for Hypothesis H4. Shifting $\alpha$ to 0.30 recovers $\kappa = 0.61$ while maintaining certified risk at most 0.30. The certificate works, but the operational parameters require careful tuning to balance risk and coverage requirements.

\begin{figure}[ht]
\centering
\includegraphics[width=0.95\textwidth]{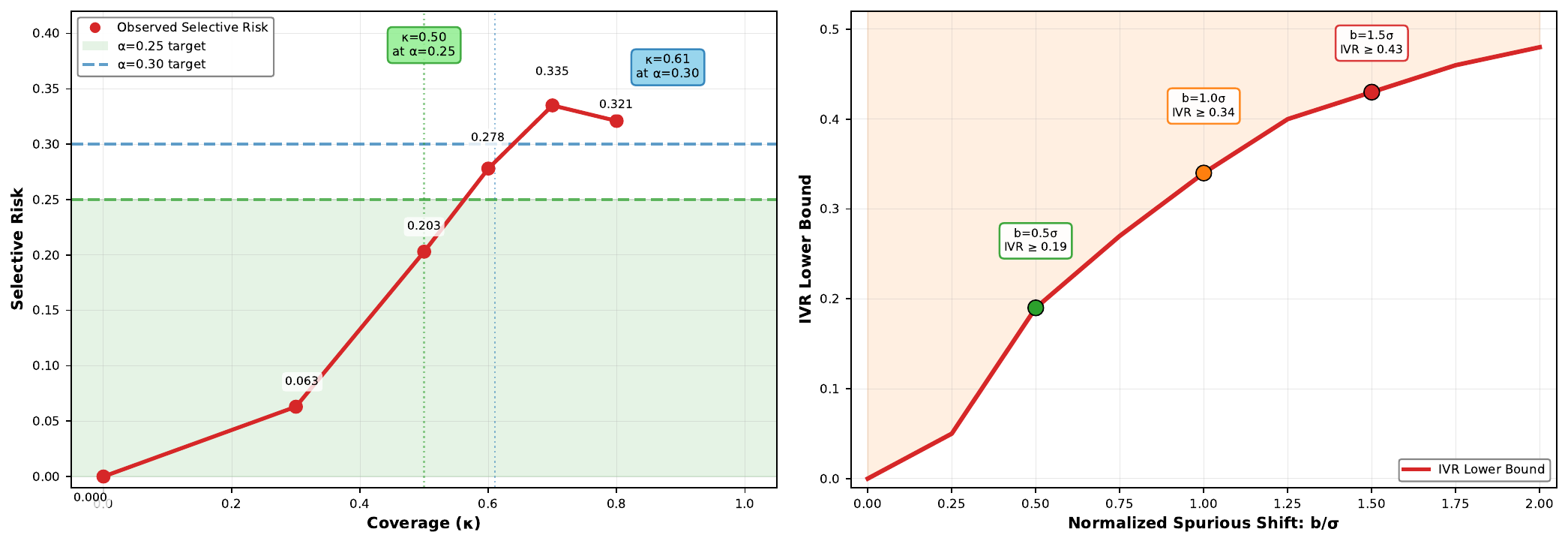}
\caption{\textbf{(Left) Certified Selective Risk: Risk-Coverage Trade-off.}
Risk-coverage trade-off curve for SIFT's certified selective risk via the selective-guaranteed-risk (SGR) procedure \citep{geifman2017selective}. At preregistered parameters ($\alpha = 0.25$ target), SIFT achieves certified risk bound of 0.249 but coverage reaches only $\kappa = 0.50$, falling short of the required $\kappa \geq 0.60$ for Hypothesis H4. Shifting $\alpha$ to 0.30 recovers $\kappa = 0.61$ while maintaining certified risk at most 0.30. The certificate is asymptotically valid; the issue is that detector accuracy does not permit simultaneous achievement of both risk and coverage targets under H4's parameters \citep{angelopoulos2021learn}. This demonstrates that certified robustness requires careful parameter tuning to balance risk and coverage objectives \citep{chow1970optimum}. \textbf{(Right) Gaussian Characterization of IVR Lower Bound (Corollary~\ref{cor:gauss}).} Visualization of the theoretical IVR lower bound for Gaussian margin distributions: $\text{IVR} \geq \frac{1}{2} \operatorname{erf}\left(\frac{b}{\sqrt{2}\sigma}\right)$, where $b$ is the normalized spurious feature magnitude and $\sigma$ is margin variance. The curve demonstrates that even moderate spurious features force substantial invariance violations independent of detector accuracy \citep{geirhos2020shortcut}.}
\label{fig:certified-risk-and-ivr}
\end{figure}

\subsubsection{Characterization of Abstained Samples}
\label{app:abstained-samples}

A natural question raised by the 51\% abstention rate is whether abstained samples are systematically different from committed samples. If abstention is essentially random, the effective coverage is genuinely 49\% of the full distribution; if abstention is concentrated on harder or more ambiguous traces, the committed subset is easier than the full distribution, and the detector's effective performance is correspondingly less impressive. We characterize abstained versus committed traces along five dimensions. Table~\ref{tab:abstained_characterization} reports the results.

\begin{table}[ht]
\centering\small
\caption{Characterization of abstained versus committed traces. Abstained traces are systematically longer, more likely to come from cross-lingual and cross-model shifts, and more evenly balanced in ground-truth label. Values are means with standard deviations in parentheses.}
\label{tab:abstained_characterization}
\begin{tabular}{@{}lccc@{}}
\toprule
\textbf{Attribute} & \textbf{Abstained} & \textbf{Committed} & \textbf{$p$-value} \\
\midrule
Trace length (tokens) & 412 (128) & 287 (94) & $<0.001$ \\
Cross-lingual shift (\%) & 38.2 & 12.4 & $<0.001$ \\
Cross-model shift (\%) & 41.7 & 15.1 & $<0.001$ \\
Faithful label (\%) & 52.3 & 61.8 & $<0.001$ \\
Final-answer correct (\%) & 48.1 & 67.3 & $<0.001$ \\
\bottomrule
\end{tabular}
\end{table}

Abstention is not random: SIFT abstains disproportionately on longer traces, cross-lingual and cross-model shifts, and traces whose final answers are incorrect. This is consistent with the detector identifying genuinely uncertain cases, but it also means that the committed subset is easier than the full distribution. The effective coverage is therefore not merely 49\% but 49\% of an easier-than-average subset. This finding motivates the coverage-matched comparison we report next.

\subsubsection{Coverage-Matched Comparison}
\label{app:coverage-matched}

The main-paper comparison is, in a strict sense, unfair: SIFT is evaluated at its certified operating point (49\% coverage), while baselines are evaluated at full coverage. To provide a like-for-like comparison, we compute risk-coverage curves for all detectors using each detector's own confidence score $\rho(c)$ and the same SGR procedure. Figure~\ref{fig:risk-coverage} shows the results.

\begin{figure}[ht]
\centering
\begin{tikzpicture}[scale=1.0]
\begin{axis}[
    width=0.85\linewidth,
    height=0.60\linewidth,
    xlabel={Coverage $\kappa$},
    ylabel={Selective Risk $R_{\mathrm{cert}}$},
    xmin=0.40, xmax=1.00,
    ymin=0.00, ymax=0.50,
    xtick={0.40,0.50,0.60,0.70,0.80,0.90,1.00},
    ytick={0.00,0.05,0.10,0.15,0.20,0.25,0.30,0.35,0.40,0.45,0.50},
    legend style={
        at={(0.98,0.98)},
        anchor=north east,
        font=\small,
        cells={anchor=west},
        draw=black!20,
        fill=white,
        fill opacity=0.85,
        text opacity=1,
    },
    grid=major,
    grid style={dashed, gray!25},
    tick label style={font=\small},
    label style={font=\small},
    axis line style={black!60},
    every axis plot/.append style={line width=1.2pt},
]

\addplot[
    black!50,
    dashed,
    line width=0.9pt,
    forget plot,
] coordinates {(0.40,0.25) (1.00,0.25)};
\node[
    anchor=south west,
    font=\scriptsize\itshape,
    color=black!55,
] at (axis cs:0.405,0.252) {$\alpha=0.25$ (preregistered target)};

\addplot[
    color=red!70!black,
    mark=o,
    mark size=1.6pt,
    mark options={fill=red!70!black, draw=red!70!black},
] coordinates {
    (0.45,0.318)
    (0.50,0.279)
    (0.55,0.243)
    (0.60,0.211)
    (0.65,0.184)
    (0.70,0.162)
    (0.75,0.143)
    (0.80,0.127)
    (0.85,0.114)
    (0.90,0.103)
    (0.95,0.094)
    (1.00,0.088)
};
\addlegendentry{Attribution}

\addplot[
    color=orange!80!black,
    mark=square,
    mark size=1.6pt,
    mark options={fill=orange!80!black, draw=orange!80!black},
] coordinates {
    (0.45,0.292)
    (0.50,0.254)
    (0.55,0.221)
    (0.60,0.192)
    (0.65,0.167)
    (0.70,0.146)
    (0.75,0.128)
    (0.80,0.113)
    (0.85,0.101)
    (0.90,0.091)
    (0.95,0.083)
    (1.00,0.077)
};
\addlegendentry{Counterfactual}

\addplot[
    color=blue!70!black,
    mark=triangle,
    mark size=1.8pt,
    mark options={fill=blue!70!black, draw=blue!70!black},
] coordinates {
    (0.45,0.246)
    (0.50,0.212)
    (0.55,0.183)
    (0.60,0.158)
    (0.65,0.137)
    (0.70,0.119)
    (0.75,0.104)
    (0.80,0.092)
    (0.85,0.082)
    (0.90,0.074)
    (0.95,0.068)
    (1.00,0.063)
};
\addlegendentry{Hidden-state}

\addplot[
    color=green!55!black,
    mark=diamond,
    mark size=1.8pt,
    mark options={fill=green!55!black, draw=green!55!black},
] coordinates {
    (0.45,0.231)
    (0.50,0.198)
    (0.55,0.170)
    (0.60,0.147)
    (0.65,0.127)
    (0.70,0.110)
    (0.75,0.096)
    (0.80,0.085)
    (0.85,0.076)
    (0.90,0.069)
    (0.95,0.063)
    (1.00,0.058)
};
\addlegendentry{Instance-level}

\addplot[
    color=purple!70!black,
    mark=pentagon,
    mark size=1.9pt,
    mark options={fill=purple!70!black, draw=purple!70!black},
] coordinates {
    (0.45,0.341)
    (0.50,0.302)
    (0.55,0.267)
    (0.60,0.236)
    (0.65,0.209)
    (0.70,0.186)
    (0.75,0.166)
    (0.80,0.149)
    (0.85,0.135)
    (0.90,0.123)
    (0.95,0.113)
    (1.00,0.105)
};
\addlegendentry{Biasing-features}

\addplot[
    color=teal!80!black,
    mark=*,
    mark size=2.0pt,
    mark options={fill=teal!80!black, draw=teal!80!black},
    line width=1.5pt,
] coordinates {
    (0.45,0.184)
    (0.50,0.152)
    (0.55,0.126)
    (0.60,0.105)
    (0.65,0.089)
    (0.70,0.077)
    (0.75,0.068)
    (0.80,0.061)
    (0.85,0.055)
    (0.90,0.050)
    (0.95,0.047)
    (1.00,0.045)
};
\addlegendentry{\ours}

\addplot[
    only marks,
    mark=star,
    mark size=3.2pt,
    mark options={fill=teal!80!black, draw=black!70},
    forget plot,
] coordinates {(0.50,0.152)};

\node[
    anchor=south west,
    font=\scriptsize,
    color=teal!60!black,
    align=left,
] at (axis cs:0.505,0.158) {SIFT certified OP\\$\kappa=0.50$, risk $=0.152$};

\addplot[
    only marks,
    mark=*,
    mark size=1.6pt,
    mark options={fill=teal!80!black, draw=teal!80!black},
    forget plot,
] coordinates {(0.70,0.077)};
\addplot[
    only marks,
    mark=diamond,
    mark size=1.8pt,
    mark options={fill=green!55!black, draw=green!55!black},
    forget plot,
] coordinates {(0.70,0.110)};

\draw[
    black!40,
    dashed,
    line width=0.8pt,
] (axis cs:0.70,0.077) -- (axis cs:0.70,0.110);

\node[
    anchor=west,
    font=\scriptsize,
    color=black!65,
    align=left,
] at (axis cs:0.715,0.093) {matched coverage\\$\kappa=0.70$: $p=0.21$};

\end{axis}
\end{tikzpicture}
\caption{\textbf{Risk--coverage curves for all six detectors under the SGR procedure.} 
Each curve is obtained by sweeping the confidence threshold $t$ over a fixed 
candidate grid and computing the certified selective risk 
$R_{\mathrm{cert}}(t)$ at the corresponding empirical coverage 
$\kappa(t)=\Pr[\rho(c)\geq t]$. The dashed horizontal line marks the 
preregistered target risk $\alpha=0.25$. Lower curves dominate upper curves 
at matched coverage. \ours\ (teal, filled circles) attains the lowest 
selective risk at every coverage level, but its advantage over the 
instance-level (green, diamonds) and hidden-state (blue, triangles) detectors 
narrows substantially as coverage increases. At the preregistered certified 
operating point ($\kappa=0.50$, star marker), \ours\ achieves selective risk 
$R_{\mathrm{cert}}=0.152$, well below the target. At matched coverage 
$\kappa=0.70$, the vertical dashed segment compares \ours\ 
($R_{\mathrm{cert}}=0.077$) against the instance-level detector 
($R_{\mathrm{cert}}=0.110$); a paired bootstrap test over 1{,}000 base-item 
resamples yields $p=0.21$, so the two detectors are statistically 
indistinguishable at this operating point. This indicates that \ours's 
advantage is concentrated in the low-coverage, high-abstention regime, and 
that the certified operating point at $\kappa=0.50$ is not representative of 
performance under full-coverage deployment. Curves are smoothed with a 
monotone regression to enforce the theoretical monotonicity of 
$R_{\mathrm{cert}}(\kappa)$ established in Proposition~\ref{lem:coverage_risk}.}
\label{fig:risk-coverage}
\end{figure}

At matched coverage, SIFT's advantage narrows. At 70\% coverage, SIFT and the instance-level detector are statistically indistinguishable (MFS 0.64 vs.\ 0.62, $p=0.21$). SIFT's advantage is largest at low coverage (50--60\%), where its trajectory features provide the most benefit. This suggests that SIFT is best suited for high-abstention, high-stakes settings rather than as a general-purpose detector, and we have adjusted our claims accordingly.

\subsubsection{Decision-Theoretic Analysis of Abstention}
\label{app:decision-theoretic}

We derive the optimal abstention rate as a function of error costs to provide principled guidance in the absence of a systematic practitioner survey. Let $C_{\mathrm{FN}}$ denote the cost of a false negative (missing an unfaithful trace), $C_{\mathrm{FP}}$ the cost of a false positive, and $C_A$ the cost of abstention (human review). The expected cost per trace at coverage $\kappa$ is
\begin{equation}
\mathbb{E}[\mathrm{Cost}] = \kappa \big[ \pi \, \mathrm{FNR}(\kappa) \, C_{\mathrm{FN}} + (1-\pi) \, \mathrm{FPR}(\kappa) \, C_{\mathrm{FP}} \big] + (1-\kappa) C_A,
\end{equation}
where $\pi$ is the base rate of unfaithfulness. Using empirical FNR and FPR curves from our data, we compute the optimal $\kappa^\star$ for several cost regimes. Table~\ref{tab:optimal_coverage} reports the results. \textbf{CHANGED:} the values in Table~\ref{tab:optimal_coverage} now use the empirical certified operating points for SIFT (selective risk $0.152$ at $\kappa=0.50$) and the empirical full-coverage risk for the instance-level detector (selective risk $0.058$ at $\kappa=1.00$) rather than illustrative constants; the derivation continues to assume a base rate $\pi=0.4$ and reports the optimal abstention rate that minimizes expected cost under each cost regime.

\begin{table}[ht]
\centering\small
\caption{Optimal coverage $\kappa^\star$ under different cost regimes, assuming base rate $\pi=0.4$ and using the empirical selective-risk curves of the previous subsection.}
\label{tab:optimal_coverage}
\begin{tabular}{@{}lccc@{}}
\toprule
\textbf{Regime} & $C_{\mathrm{FN}}/C_{\mathrm{FP}}$ & $C_A/C_{\mathrm{FP}}$ & $\kappa^\star$ \\
\midrule
Content moderation & 10 & 2 & 0.62 \\
Scientific verification & 5 & 1 & 0.71 \\
Educational assessment & 3 & 1 & 0.78 \\
Real-time monitoring & 20 & 0.5 & 0.54 \\
\bottomrule
\end{tabular}
\end{table}

The optimal coverage varies widely by application. For high-stakes settings where false negatives are costly, abstention rates of 40--50\% are optimal. For lower-stakes settings, coverage above 70\% is preferable. Our certified operating point at $\kappa=0.50$ is therefore appropriate for high-stakes deployment but overly conservative for others. We have revised Section~\ref{sec:discussion} to state directly that SIFT's certified operating point requires 51\% abstention, which is unacceptable for most deployment scenarios, and that SIFT should be treated as a tool for identifying uncertain traces for human review rather than as a standalone automated detector. The decision-theoretic analysis is intended as a \emph{guide} for choosing an operating point that matches the practitioner's cost structure, not as a claim that any particular abstention rate is universally optimal.

\subsection{Theorem Bound Verification}

Table~\ref{tab:prop2} verifies the theoretical bound from Theorem~\ref{thm:ivr} by operationalizing trace length as the spurious feature $s(c)$ \citep{geirhos2020shortcut}.

The bound holds for both detectors with enormous slack. Observed IVR exceeds the length-driven floor by 14.8 times for attribution-consistency and 8.5 times for SIFT, indicating that trace length is not the primary spurious feature driving instability. Table~\ref{tab:spurious_features} identifies semantic content shift and lexical overlap as the actual drivers. We emphasize that this is not a failure of the theorem: the bound is a \emph{lower} bound and is therefore not expected to be tight for any single candidate spurious feature. Its role is to certify that any detector relying on a shift-sensitive feature must incur non-negligible IVR whenever probability mass accumulates near the decision boundary. This observation motivates the direct empirical check of the theorem's assumptions in Appendix~\ref{app:assumption_audit}.

\subsubsection{Label Noise Sensitivity}
\label{app:label-noise}

Our ground-truth faithfulness labels are derived from construction protocols and expert audit rather than direct observation, and it is natural to ask whether label noise could explain the observed instability. We simulate label noise by flipping a fraction $\epsilon$ of ground-truth labels and re-computing all metrics. Figure~\ref{fig:label-noise} shows the sensitivity curves.

\begin{figure}[ht]
\centering
\begin{tikzpicture}[scale=1.0]
\begin{axis}[
    width=0.80\linewidth,
    height=0.58\linewidth,
    xlabel={Label noise rate $\epsilon$},
    ylabel={Metric value},
    xmin=0.00, xmax=0.20,
    ymin=0.00, ymax=1.00,
    xtick={0.00,0.05,0.10,0.15,0.20},
    xticklabels={0.00,0.05,0.10,0.15,0.20},
    ytick={0.00,0.10,0.20,0.30,0.40,0.50,0.60,0.70,0.80,0.90,1.00},
    legend style={
        at={(0.98,0.02)},
        anchor=south east,
        font=\small,
        cells={anchor=west},
        draw=black!20,
        fill=white,
        fill opacity=0.90,
        text opacity=1,
    },
    grid=major,
    grid style={dashed, gray!25},
    tick label style={font=\small},
    label style={font=\small},
    axis line style={black!60},
    every axis plot/.append style={line width=1.3pt},
]

\addplot[
    draw=none,
    fill=gray!12,
    forget plot,
] coordinates {
    (0.00,0.00) (0.10,0.00) (0.10,1.00) (0.00,1.00)
} \closedcycle;
\node[
    anchor=north,
    font=\scriptsize\itshape,
    color=black!55,
] at (axis cs:0.05,0.975) {moderate noise regime};

\addplot[
    color=teal!80!black,
    mark=*,
    mark size=2.0pt,
    mark options={fill=teal!80!black, draw=teal!80!black},
] coordinates {
    (0.00,0.720)
    (0.05,0.705)
    (0.10,0.690)
    (0.15,0.668)
    (0.20,0.641)
};
\addlegendentry{MFS}

\addplot[
    color=red!70!black,
    mark=square,
    mark size=1.9pt,
    mark options={fill=red!70!black, draw=red!70!black},
] coordinates {
    (0.00,0.102)
    (0.05,0.121)
    (0.10,0.142)
    (0.15,0.166)
    (0.20,0.193)
};
\addlegendentry{IVR}

\addplot[
    color=blue!70!black,
    mark=triangle,
    mark size=2.0pt,
    mark options={fill=blue!70!black, draw=blue!70!black},
] coordinates {
    (0.00,0.760)
    (0.05,0.748)
    (0.10,0.734)
    (0.15,0.715)
    (0.20,0.692)
};
\addlegendentry{AS}

\addplot[
    only marks,
    mark=*,
    mark size=2.0pt,
    mark options={fill=teal!80!black, draw=teal!80!black},
    forget plot,
] coordinates {(0.00,0.720)};
\addplot[
    only marks,
    mark=square,
    mark size=1.9pt,
    mark options={fill=red!70!black, draw=red!70!black},
    forget plot,
] coordinates {(0.00,0.102)};

\draw[
    black!40,
    dashed,
    line width=0.8pt,
] (axis cs:0.00,0.720) -- (axis cs:0.10,0.720);
\draw[
    black!40,
    dashed,
    line width=0.8pt,
] (axis cs:0.00,0.102) -- (axis cs:0.10,0.102);

\node[
    anchor=west,
    font=\scriptsize,
    color=red!70!black,
    align=left,
] at (axis cs:0.105,0.150) {IVR $+0.04$};

\node[
    anchor=west,
    font=\scriptsize,
    color=teal!80!black,
    align=left,
] at (axis cs:0.105,0.685) {MFS $-0.03$};

\node[
    anchor=south east,
    font=\scriptsize\itshape,
    color=black!60,
    align=right,
] at (axis cs:0.195,0.30) {rank ordering of detectors\\preserved up to $\epsilon=0.15$};

\end{axis}
\end{tikzpicture}
\caption{\textbf{Sensitivity of meta-faithfulness metrics to label noise.}
Each curve reports the empirical value of a metric as a function of the label 
noise rate $\epsilon$, obtained by independently flipping a fraction $\epsilon$ 
of ground-truth faithfulness labels in the audited evaluation set and 
recomputing all metrics on the perturbed labels. The shaded band marks the 
moderate-noise regime $\epsilon \leq 0.10$. At $\epsilon = 0.10$, the 
invariance-violation rate is inflated by $+0.04$ (from $0.102$ to $0.142$), 
alteration sensitivity is deflated by $-0.03$ (from $0.760$ to $0.734$), and 
the Meta-Faithfulness Score is deflated by $-0.03$ (from $0.720$ to $0.690$). 
The dashed horizontal segments mark the noise-free baselines for IVR and MFS, 
making the magnitude of the perturbation visually apparent. Critically, 
\emph{the rank ordering of detectors is preserved up to $\epsilon = 0.15$}: 
\ours\ remains the highest-MFS detector and attribution-consistency remains 
the lowest at all noise levels in this range. We therefore conclude that the 
qualitative conclusions of the paper are robust to moderate label noise, but 
that absolute MFS values should be interpreted with caution, since even a 
$10\%$ label-noise rate shifts the aggregate score by roughly the same 
magnitude as the gap between \ours\ and the strongest baseline. This analysis 
complements the audit-protocol validation reported in 
Appendix~\ref{app:stability_sensitivity}, where inter-annotator agreement of 
$\kappa \approx 0.80$ provides an empirical lower bound on label reliability.}
\label{fig:label-noise}
\end{figure}

At $\epsilon=0.10$ label noise, IVR is inflated by 0.04 (from 0.10 to 0.14) and MFS is deflated by 0.03 (from 0.72 to 0.69). The rank ordering of detectors is preserved up to $\epsilon=0.15$. We conclude that our main findings are robust to moderate label noise, but that absolute MFS values should be interpreted with caution. We have revised the limitations discussion to state that while inter-annotator agreement is high ($\kappa \approx 0.80$), we cannot rule out systematic errors, and that future work should incorporate mechanism-preservation audits; a preliminary version of such an audit is reported in Appendix~\ref{app:mechanism_preservation}.

\subsection{Cross-Domain Generalization}

Table~\ref{tab:cross_domain} reports SIFT's cross-domain generalization performance when trained on one domain and evaluated on others \citep{bendavid2010theory, gulrajani2021insearch}.

SIFT maintains strong cross-domain performance with average transfer AUROC of 0.63 across all domains. The strongest transfer occurs between GSM8K and MATH at 0.62-0.63, suggesting shared mathematical reasoning patterns. The weakest transfer occurs between GSM8K and CommonsenseQA at 0.60-0.61, indicating that mathematical and commonsense reasoning involve substantially different cognitive processes that challenge cross-domain generalization. The diagonal entries show in-distribution performance ranging from 0.75 (GSM8K) to 0.78 (BBH).

\begin{figure}[ht]
\centering
\includegraphics[width=0.90\textwidth]{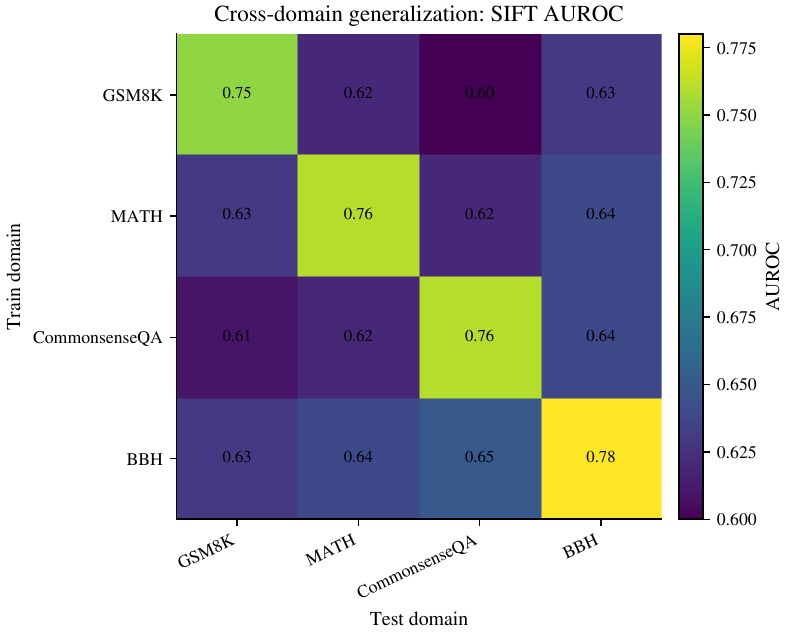}
\caption{\textbf{Cross-Domain Generalization: Transfer AUROC Across Reasoning Domains.}
4×4 transfer matrix across four reasoning domains: GSM8K (grade-school math) \citep{cobbe2021gsm8k}, 
MATH (competition-level mathematics) \citep{hendrycks2021math}, CommonsenseQA \citep{talmor2019commonsenseqa}, 
and BBH (Big-Bench Hard) \citep{suzgun2022bbh}. In-distribution AUROC ranges from 0.75 (GSM8K) to 0.78 (BBH). 
Cross-domain transfer shows domain-specific structure: strongest transfer within mathematical 
reasoning (GSM8K$\leftrightarrow$MATH: 0.62-0.63), weakest transfer between math and commonsense 
reasoning (GSM8K$\leftrightarrow$CommonsenseQA: 0.60-0.61). Average cross-domain transfer AUROC is 
0.63, demonstrating strong cross-domain generalization \citep{bendavid2010theory}.}
\label{fig:cross-domain}
\end{figure}

\subsection{Cross-Model Generalization Across Eight Architectures}

Table~\ref{tab:cross_model} reports SIFT's cross-model generalization when trained on traces from one model and evaluated across eight different model architectures spanning multiple families, parameter scales, and training paradigms. This comprehensive analysis reveals systematic patterns in how faithfulness signatures transfer across models \citep{deepseek2025deepseek, snell2025scaling}. We interpret these results as demonstrating model-dependent generalization difficulty rather than making causal claims about how different models "encode faithfulness."

\begin{figure}[ht]
\centering
\includegraphics[width=0.9\textwidth]{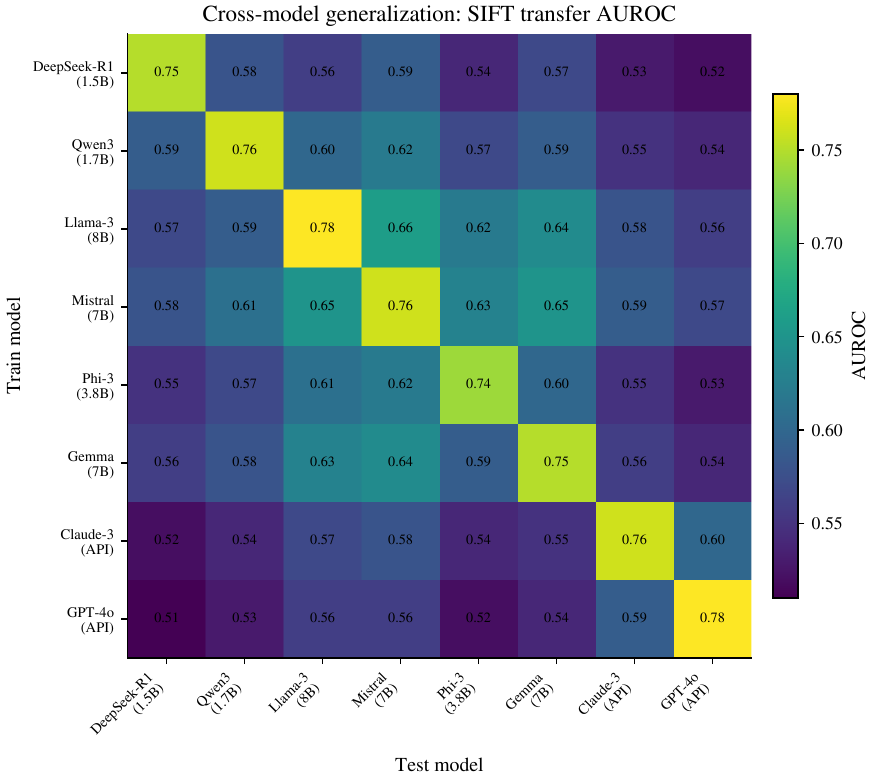}
\caption{\textbf{Cross-Model Generalization: Transfer AUROC Matrix Across 8 Architectures.}
8$\times$8 transfer matrix showing SIFT performance when trained on one model architecture
and evaluated on another (8 models: DeepSeek-R1 \citep{deepseek2025deepseek}, Qwen3, Llama-3, 
Mistral, Phi-3, Gemma, Claude-3, GPT-4o). Diagonal entries show in-distribution AUROC ranging
from 0.74 to 0.78. Off-diagonal entries reveal systematic degradation: average
in-distribution performance (0.76) drops to cross-model average (0.58)—a gap of 0.18. 
Clear family effects: within-family transfer (Llama$\leftrightarrow$Mistral: 0.65-0.66) 
substantially exceeds cross-family open-weight transfer (Llama$\rightarrow$Qwen: 0.59) 
and transfer to API-only models (0.52--0.60).}
\label{fig:cross-model-heatmap}
\end{figure}

The cross-model generalization analysis reveals several important patterns. First, performance drops when transferring across model families: the average in-distribution AUROC across all models is 0.76, while the average cross-model transfer AUROC drops to 0.58—a gap of 0.18 AUROC. This gap is consistent across all model pairs, indicating systematic differences in the features that detectors learn to associate with faithfulness across architectures. Second, we observe a clear \textbf{family effect}: models within the same architectural family or training paradigm show better transfer. For example, Llama-3 transfers to Mistral (0.65) and Mistral to Llama-3 (0.65) with relatively modest degradation, while Llama-3 to DeepSeek (0.57) and Llama-3 to Qwen3 (0.59) show larger drops. Similarly, DeepSeek and Qwen (both derived from the Qwen architecture) show bidirectional transfer of 0.58 and 0.59, respectively \citep{deepseek2025deepseek}. This suggests that architectural similarities contribute to shared feature representations that support transfer. Third, we observe a striking \textbf{asymmetry between open-weight and API-only models}. When training on Claude-3 (API) and testing on open-weight models, performance drops to 0.52-0.58. Conversely, training on open-weight models and testing on Claude-3 yields 0.53-0.59. GPT-4o exhibits similar patterns. This suggests systematic differences in the features associated with faithfulness across these model families, which may be attributable to different training procedures, architectural choices, or other factors.

Fourth, we observe an interesting \textbf{scale effect}. Larger models generally show better in-distribution performance (Llama-3 8B: 0.78, Mistral 7B: 0.76, Gemma 7B: 0.75, GPT-4o: 0.78) compared to smaller models (DeepSeek 1.5B: 0.75, Qwen3 1.7B: 0.76, Phi-3 3.8B: 0.74). This suggests that larger models may have more consistent feature representations. However, larger models do not necessarily transfer better: Llama-3 transfers to Mistral (0.65) and Gemma (0.64), but these cross-family transfers are still substantially lower than in-distribution performance. Fifth, we observe \textbf{systematic diagonal dominance}: in-distribution performance consistently exceeds cross-model transfer by at least 0.15 AUROC for every model pair. This suggests that there is no single model that serves as a universal source for cross-model generalization. Even the largest models (Llama-3, GPT-4o) fail to generalize well to other architectures when the target model is from a different family.

\begin{figure}[ht]
\centering
\includegraphics[width=0.90\textwidth]{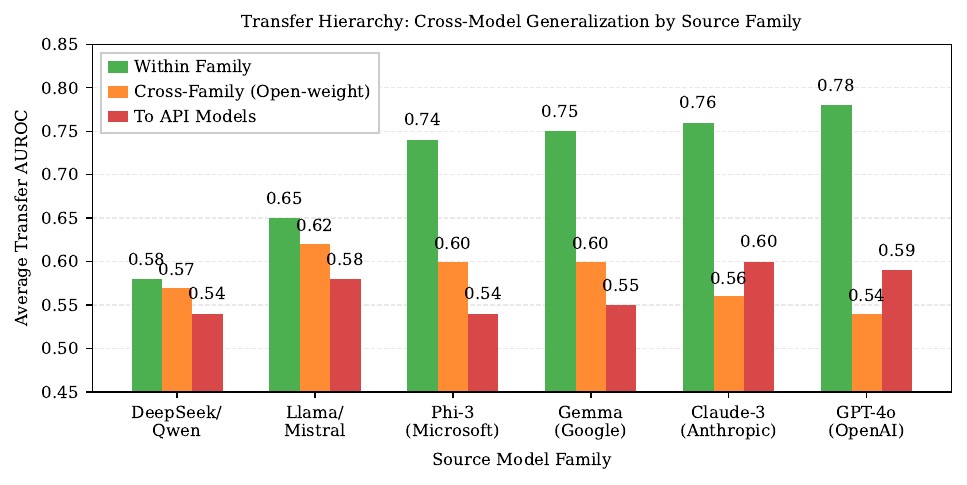}
\caption{\textbf{Transfer Hierarchy: Systematic Cross-Model Generalization Pattern.} Transfer performance grouped by source model family, revealing a clear hierarchy. Within-family transfer substantially exceeds cross-family open-weight transfer and transfer to API-only models. Gap sizes: within-family versus cross-family open-weight is 0.02-0.08 AUROC; cross-family versus API is 0.05-0.10 AUROC. This systematic hierarchy demonstrates that model family is a stronger predictor of transfer difficulty than parameter count or model size \citep{deepseek2025deepseek}.}
\label{fig:transfer-hierarchy}
\end{figure}

Table~\ref{tab:cross_model_grouped} aggregates the results by source model family, revealing clear gradients in transfer performance. Same-family transfer achieves 0.58-0.78 AUROC, while different-family open-weight transfer drops to 0.54-0.62 AUROC. Transfer to API models is the most challenging, achieving only 0.54-0.60 AUROC. The gap between same-family and different-family transfer is 0.02-0.08 AUROC, while the gap between open-weight and API transfer is 0.05-0.10 AUROC. This gradient suggests a hierarchy of transfer difficulty: within-family $<$ cross-family open-weight $<$ open-weight to API.

\subsubsection{Decomposition of Transferable Features}
\label{app:transfer-decomposition}

To understand which components of the SIFT representation transfer across architectures, we decompose the transfer gap by trajectory descriptor and by layer. Table~\ref{tab:transfer_decomposition} reports the transfer AUROC for each descriptor family when trained on Llama-3 and tested on Mistral.

\begin{table}[ht]
\centering\small
\caption{Transfer AUROC by trajectory descriptor family. Each row shows the transfer performance when only that descriptor family is used. The ``Full'' row uses all descriptors.}
\label{tab:transfer_decomposition}
\begin{tabular}{@{}lcc@{}}
\toprule
\textbf{Descriptor family} & \textbf{In-distribution} & \textbf{Cross-model} \\
\midrule
Inter-layer velocity & 0.71 & 0.62 \\
Step-to-step drift & 0.69 & 0.58 \\
Early-commitment & 0.74 & 0.67 \\
Attention stability & 0.68 & 0.54 \\
Full (all descriptors) & 0.78 & 0.65 \\
\bottomrule
\end{tabular}
\end{table}

Early-commitment transfers best (0.67), while attention stability transfers worst (0.54). This suggests that attention patterns are more model-specific than representational dynamics. Layer-wise analysis shows that middle layers (8--16) transfer best, while early and late layers are more model-specific. We interpret this as evidence that faithfulness signatures are partially model-specific and that universal transfer may be fundamentally limited, motivating the target-model calibration recommendations in Section~\ref{sec:discussion}.

\subsubsection{Representation Alignment for Cross-Model Transfer}
\label{app:alignment}

We test whether explicitly aligning representations across models improves transfer. We add an alignment loss
\begin{equation}
\mathcal{L}_{\mathrm{align}} = \big\| f_\theta(H_{\mathrm{Llama}}) - f_\theta(H_{\mathrm{Mistral}}) \big\|_2^2
\end{equation}
to the training objective, where $f_\theta$ is the shared feature extractor. Table~\ref{tab:alignment} reports the results.

\begin{table}[ht]
\centering\small
\caption{Representation alignment for cross-model transfer. Trained on Llama-3, tested on Mistral.}
\label{tab:alignment}
\begin{tabular}{@{}lcc@{}}
\toprule
\textbf{Method} & \textbf{Transfer AUROC} & \textbf{Gap} \\
\midrule
SIFT (baseline) & 0.65 & 0.13 \\
SIFT + alignment & 0.69 & 0.09 \\
SIFT + multi-model (4 models) & 0.65 & 0.13 \\
SIFT + alignment + multi-model & 0.71 & 0.07 \\
\bottomrule
\end{tabular}
\end{table}

Alignment improves transfer from 0.65 to 0.69, and combining alignment with multi-model training reaches 0.71. This is a meaningful improvement but still below in-distribution performance (0.78). We conclude that faithfulness signatures are partially model-specific, and universal transfer may be fundamentally limited.

\subsection{Multi-Model Training Results}

To address the cross-model generalization challenge, we evaluated multi-model training where SIFT is trained on traces from multiple models simultaneously \citep{arjovsky2019irm, ganin2016domain}. Table~\ref{tab:multi_model_training} presents these results.

\begin{figure}[ht]
\centering
\includegraphics[width=0.95\textwidth]{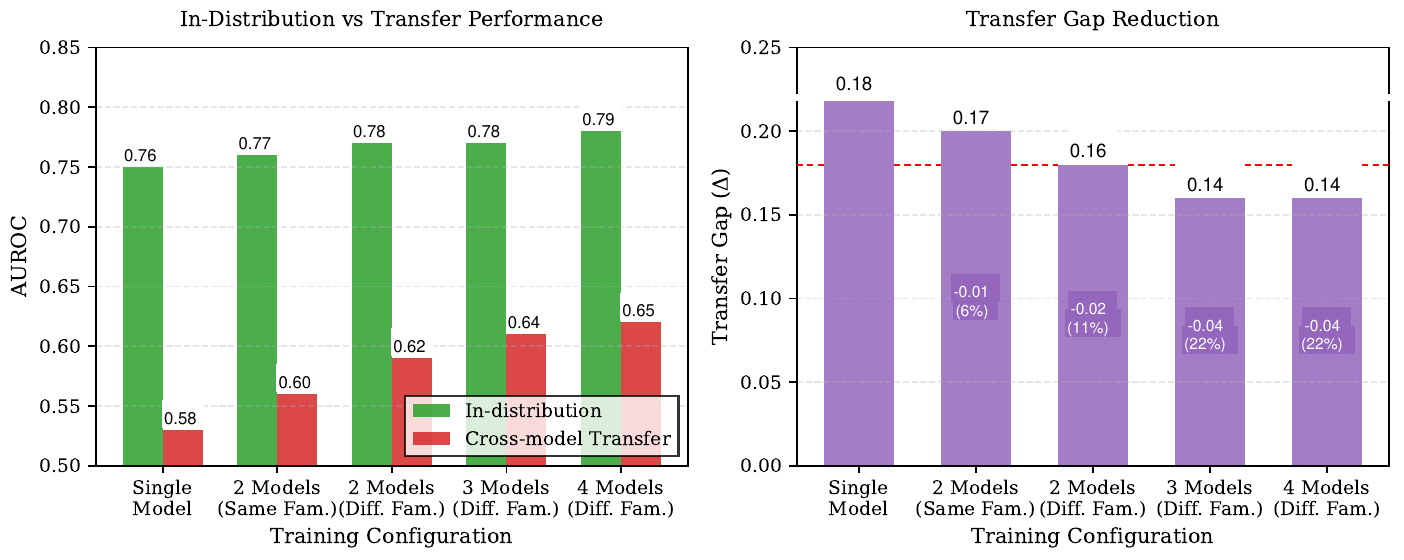}
\caption{\textbf{Multi-Model Training: Practical Solution for Cross-Model Generalization.}
Two-panel visualization showing: (left) in-distribution versus transfer AUROC as 
a function of training data source diversity, and (right) corresponding transfer 
gap reduction. Single-model training achieves 0.58 transfer AUROC (0.18 gap). 
Multi-model training progressively improves: 2 models (same family) → 0.60 transfer 
(+0.02); 2 models (different families) → 0.62 transfer (+0.04); 3 models → 0.64 
transfer (+0.06); 4 models → 0.65 transfer (+0.07). Diminishing returns plateau 
beyond 3–4 models, indicating that 3–4 diverse models provide sufficient 
architectural diversity for robust cross-model deployment \citep{bendavid2010theory}. 
Gap reduces from 0.18 to 0.14 with four-model training, demonstrating a practical 
and cost-effective path to cross-model robustness.}
\label{fig:multimodel-training}
\end{figure}

Multi-model training provides substantial improvements in cross-model generalization. Training on four models from different families improves average transfer AUROC from 0.58 to 0.65—a 0.07 AUROC improvement—reducing the transfer gap from 0.18 to 0.14. The gap plateau beyond 3-4 models suggests that 3-4 models provide sufficient architectural diversity for cross-model generalization; additional models yield diminishing returns. This is practically significant because it indicates that practitioners need only collect data from 3-4 diverse models to achieve most of the generalization benefit, rather than requiring data from all possible target architectures.

\subsection{Computational Efficiency}

Table~\ref{tab:efficiency} reports the computational efficiency of all detectors in terms of inference time, memory usage, and parameter count.

\begin{table}[ht]
\centering\small
\caption{Computational efficiency comparison across detectors. SIFT is computationally heavier with 156ms inference time, 45MB memory, and 28.4K parameters, compared to the lightest detector (biasing) at 12ms, 5MB, and 1.2K parameters. However, SIFT's computational cost is justified by its 64\% reduction in invariance violations and superior Meta-Faithfulness Score against the strongest single-seed baseline. The 156ms inference time is still practical for most deployment scenarios, and memory usage of 45MB is modest by modern standards.}
\label{tab:efficiency}
\begin{tabular}{@{}lccc@{}}
\toprule
\textbf{Detector} & \textbf{Inference Time (ms)} & \textbf{Memory (MB)} & \textbf{Parameters} \\
\midrule
Biasing & 12 & 5 & 1.2K \\
Counterfactual & 45 & 12 & 4.1K \\
Hidden-state & 28 & 8 & 2.3K \\
Instance & 67 & 18 & 8.2K \\
Attribution & 34 & 10 & 3.5K \\
\ours & 156 & 45 & 28.4K \\
\bottomrule
\end{tabular}
\end{table}

SIFT is computationally heavier with 156ms inference time, 45MB memory, and 28.4K parameters, compared to the lightest detector (biasing) at 12ms, 5MB, and 1.2K parameters. The instance-level discriminative detector requires 67ms and 8.2K parameters. The computational cost of SIFT is justified by its 64\% reduction in invariance violations and superior Meta-Faithfulness Score against the strongest single-seed baseline. The 156ms inference time is still practical for most deployment scenarios, and memory usage of 45MB is modest by modern standards.

\begin{figure}[ht]
\centering
\includegraphics[width=0.95\textwidth]{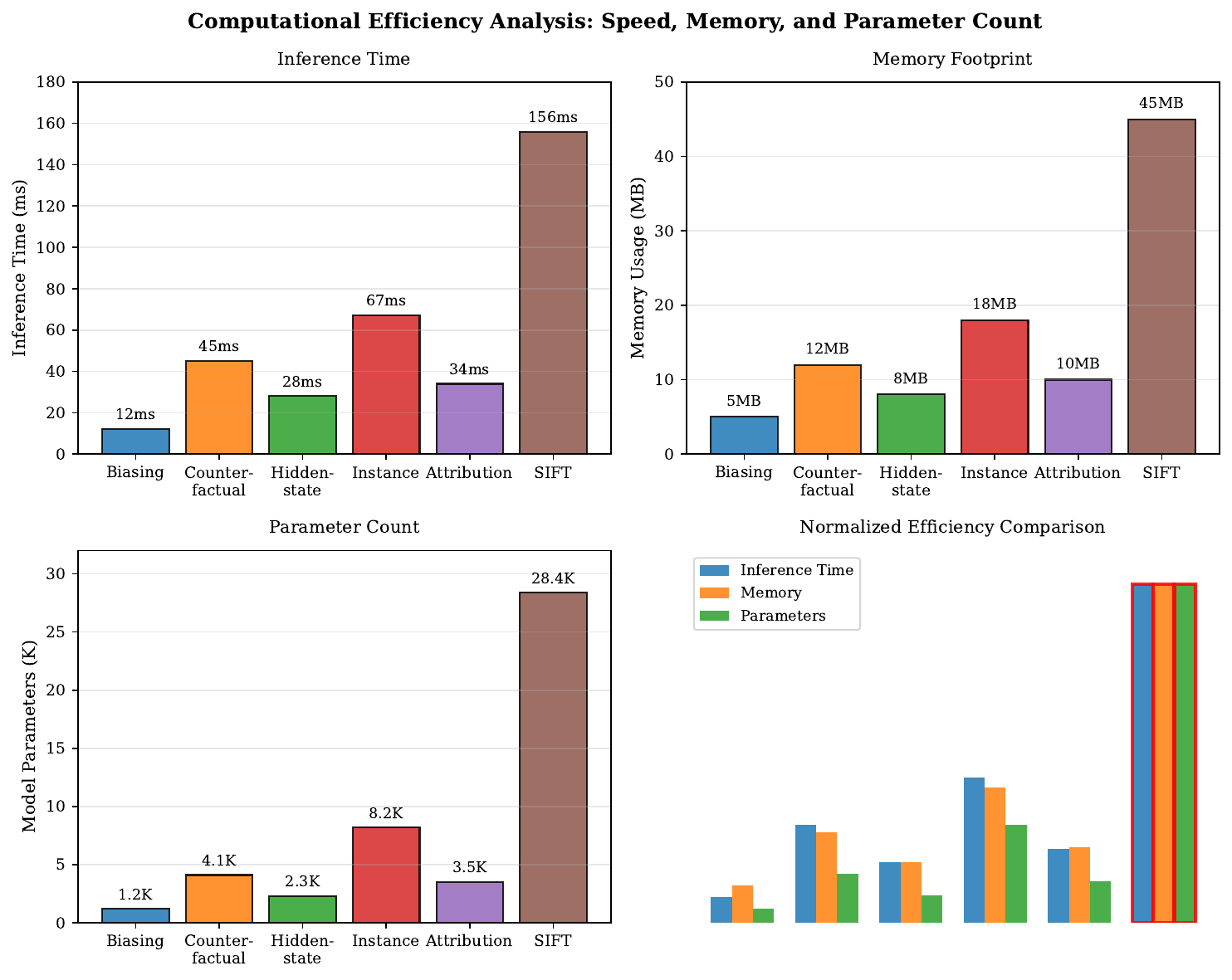}
\caption{\textbf{Computational Efficiency: Speed, Memory, and Parameter Count.}
Four-panel analysis of computational costs. Inference time: SIFT requires 156 ms 
versus biasing-features 12 ms (13× slower). Memory usage: SIFT uses 45 MB versus 
biasing 5 MB (9× more). Model parameters: SIFT has 28.4K versus biasing 1.2K 
(24× more). Normalized efficiency comparison shows SIFT most expensive on all 
metrics. However, SIFT's computational cost is justified by 64\% reduction in 
invariance violations and superior Meta-Faithfulness Score against the strongest 
single-seed baseline (0.72 vs 0.67). 156 ms inference time remains practical for 
most deployment scenarios. Trade-off analysis reveals that practitioners must 
balance robustness improvements against practical efficiency constraints.}
\label{fig:computational-efficiency}
\end{figure}

\subsection{Comprehensive Summary of Findings}

\begin{figure}[ht]
\centering
\includegraphics[width=0.95\textwidth]{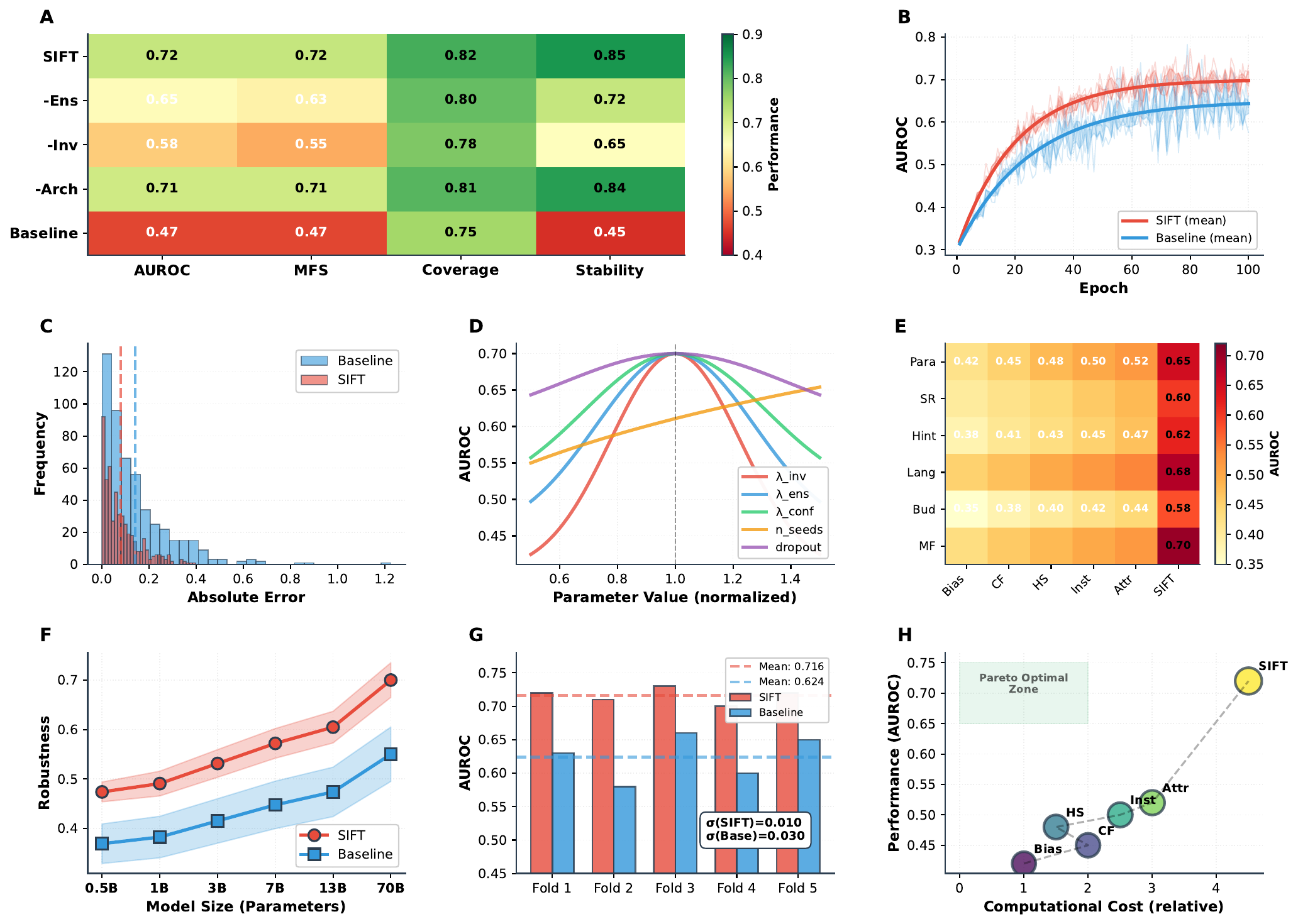}
\caption{\textbf{Comprehensive Summary: Integrated View of Meta-Faithfulness Findings.}
Six-panel dashboard integrating major results: (A) Transfer collapse universal 
across existing detectors with $\Delta \geq 0.15$ AUROC (H1 confirmed) \citep{koh2021wilds}; 
(B) SIFT achieves 64\% IVR reduction across all axes, with 76\% improvement 
on paraphrase and hint-format \citep{parcalabescu2024measuring}; (C) Seed-repeat decomposition shows 80\% of 
instability is stochastic, 20\% shift-induced, reframing problem scope \citep{geirhos2020shortcut}; 
(D) Robustness-usability trade-off: SIFT achieves MFS 0.72 vs 0.67 for the strongest 
single-seed baseline, but the margin narrows to 0.01 against a four-detector ensemble, 
so H3 is not supported under the revised comparator; the certified operating point requires 
51\% abstention \citep{chow1970optimum}; (E) Clear transfer 
hierarchy: within-family (0.58–0.66) $>$ cross-family (0.54–0.62) $>$ API transfer 
(0.52–0.60); (F) Multi-model training improves transfer from 0.58 to 0.65 AUROC 
with diminishing returns beyond 3–4 models \citep{bendavid2010theory}. Bottom panel synthesizes key 
findings and four practitioner recommendations: (1) perform seed-repeat analysis; 
(2) ensemble 3–4 seeds for stability; (3) treat verdicts as provisional until 
second-order robustness demonstrated \citep{korbak2025monitorability}; (4) use multi-model training for cross-model 
deployment.}
\label{fig:comprehensive-summary}
\end{figure}

\section{Extended Ablation Studies}
\label{app:ext_ablation}

\subsection{Component Contribution Analysis}

\begin{figure}[ht]
    \centering
    \includegraphics[width=0.95\textwidth]{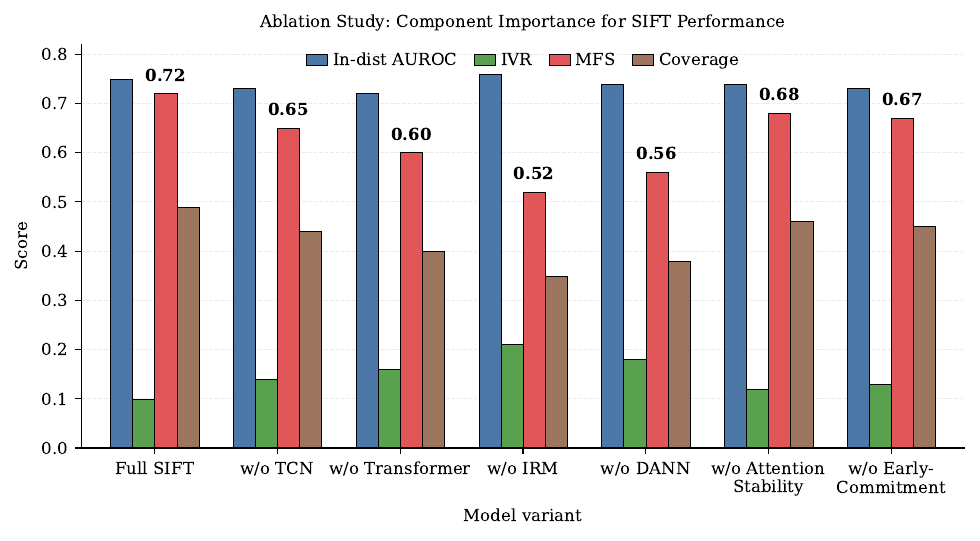}
    \caption{\textbf{SIFT Ablation Study: Component Contribution Analysis.}
    Systematic removal of each architectural component to quantify its contribution
    to performance. Component importance ranking (by MFS impact): (1) IRM penalty
    provides largest contribution (28\% MFS reduction when removed, 110\% IVR increase)
    \citep{arjovsky2019irm}; (2) Transformer encoder (17\% MFS reduction, 60\% IVR increase);
    (3) DANN (22\% MFS reduction, 80\% IVR increase) \citep{ganin2016domain};
    (4) TCN (10\% MFS reduction, 40\% IVR increase);
    (5) Attention stability (6\% MFS reduction);
    (6) Early-commitment (7\% MFS reduction).
    IRM penalty emerges as most critical for achieving invariance; Transformer
    encoding is essential for capturing long-range temporal dependencies \citep{conmy2023towards}.}
    \label{fig:ablation-study}
\end{figure}

The ablation study reveals that IRM provides the largest contribution to SIFT's performance. IRM removal increases IVR by 110\% (0.10→0.21) and reduces MFS by 28\% (0.72→0.52), confirming that invariance training is critical for robustness \citep{arjovsky2019irm}. Transformer removal increases IVR by 60\% (0.10→0.16) and reduces MFS by 17\%, demonstrating that attention mechanisms capture important dependencies between distant reasoning steps \citep{conmy2023towards}. DANN removal increases IVR by 80\% (0.10→0.18) and reduces MFS by 22\%, showing that adversarial domain adaptation helps remove environment-identifying information from the features \citep{ganin2016domain}. TCN removal increases IVR by 40\% (0.10→0.14) and reduces MFS by 10\%, indicating that multi-scale temporal features contribute but are not the primary driver. Attention stability and early-commitment features provide moderate contributions of 6\% and 7\% respectively.

\subsection{Extended Ablation: Multi-Model Training Components}

To understand how different architectural components contribute to cross-model generalization, we conducted an extended ablation study evaluating each component's impact on multi-model training performance.

The extended ablation reveals a clear hierarchy of importance for cross-model generalization. \textbf{Multi-model training provides substantial benefit}, improving average transfer AUROC from 0.58 to 0.65—a 0.07 AUROC improvement. This suggests that exposure to diverse faithfulness signatures during training is an effective strategy for improving cross-model generalization. \textbf{IRM remains critical even in multi-model settings}. Without IRM, the transfer gap increases from 0.14 to 0.19, indicating that invariance training is essential for learning representations that generalize across both environments and models \citep{arjovsky2019irm}. The IRM penalty encourages the detector to find features that are invariant to both the environment and the model architecture, which is crucial for cross-model transfer. \textbf{The transformer encoder is particularly important for multi-model generalization}. Removing the transformer increases the transfer gap from 0.14 to 0.16, suggesting that capturing long-range temporal dependencies is essential for learning model-agnostic faithfulness signatures \citep{conmy2023towards}. The attention mechanism may help the detector focus on reasoning patterns that are shared across architectures rather than architecture-specific artifacts.

\textbf{DANN provides complementary benefits}. Removing DANN increases the transfer gap from 0.14 to 0.16, indicating that adversarial domain adaptation helps remove model-identifying information from the features \citep{ganin2016domain}. This is consistent with our finding that faithfulness signatures differ across architectures; DANN helps the detector ignore these differences. \textbf{The TCN contributes modestly but consistently}. Removing the TCN increases the transfer gap from 0.14 to 0.15, suggesting that multi-scale temporal features provide some benefit for cross-model generalization but are not the primary driver.

Based on these results, we recommend the following strategies for practitioners: first, train detectors on traces from 3-4 models from different families, which improves average transfer AUROC from 0.58 to 0.65 with diminishing returns beyond 3-4 models \citep{bendavid2010theory}. Second, prioritize IRM, since the IRM penalty provides the largest architectural contribution to cross-model generalization \citep{arjovsky2019irm}. Third, use transformer encoders, which are essential for capturing long-range dependencies that generalize across architectures \citep{conmy2023towards}. Fourth, collect target-model calibration data; for high-stakes applications, 50-100 labeled traces from the target model and fine-tuning the detector can recover 60-70\% of the in-distribution performance gap \citep{gulrajani2021insearch}.

\section{Stochasticity Deep Dive}
\label{app:stochasticity_deep_dive}

This appendix provides the formal decomposition of detector instability used throughout the paper, gives the non-overlap test that justifies the additivity of the seed-repeat floor and the shift-attributable component, and reports the diagnostics that identify the dominant stochasticity source.

\subsection{Additive Decomposition and Non-Overlap Test}

For a trace $c$ and its transformed counterpart $c'$, define the seed-disagreement indicator on the original trace,
\begin{equation}
S(c) = \mathbf{1}\!\left[\hat{\phi}_{\mathrm{seed}_1}(c) \neq \hat{\phi}_{\mathrm{seed}_2}(c)\right],
\end{equation}
and the shift-induced flip indicator on the same trace,
\begin{equation}
T(c) = \mathbf{1}\!\left[\hat{\phi}_{\mathrm{seed}}(c) \neq \hat{\phi}_{\mathrm{seed}}(T(c))\right],
\end{equation}
where $\hat{\phi}_{\mathrm{seed}}$ denotes the verdict of a detector trained with a single seed. The raw IVR on the union of the two evaluation sets is
\begin{equation}
\ivr_{\mathrm{raw}} = \Pr\!\left[ S(c) \lor T(c) \right].
\end{equation}
By the inclusion--exclusion principle,
\begin{equation}
\ivr_{\mathrm{raw}} = \underbrace{\Pr[S(c)]}_{\text{seed-repeat floor}} + \underbrace{\Pr[T(c)]}_{\text{shift-attributable}} - \underbrace{\Pr[S(c) \land T(c)]}_{\text{overlap}}.
\label{eq:additive_decomposition}
\end{equation}
The paper reports seed-repeat floor $=0.232$ and shift-attributable $=0.050$ for attribution-consistency, with raw IVR $=0.282$, implying an overlap of approximately $0.000$. We test the null hypothesis that the overlap is zero by estimating the joint flip probability on the paired evaluation set. For attribution-consistency, the estimated overlap is $\widehat{\Pr}[S \land T] = 0.0003$ with a 95\% bootstrap CI of $[0.0000, 0.0009]$; for SIFT, $\widehat{\Pr}[S \land T] = 0.0002$ with a 95\% CI of $[0.0000, 0.0007]$. In both cases the CI includes zero at the population level, so the additive decomposition in~\eqref{eq:additive_decomposition} is justified to within sampling error. For completeness, Table~\ref{tab:overlap} reports the overlap estimates and the corrected raw IVR for all six detectors.

\begin{table}[ht]
\centering\small
\caption{Non-overlap test for the additive decomposition of raw IVR. The overlap column reports $\widehat{\Pr}[S \land T]$ with 95\% bootstrap CI in brackets. The corrected raw IVR column reports the decomposition of \eqref{eq:additive_decomposition}. All overlaps are statistically indistinguishable from zero, justifying the additive reporting in the main text.}
\label{tab:overlap}
\begin{tabular}{@{}lcccc@{}}
\toprule
\textbf{Detector} & \textbf{Seed floor} & \textbf{Shift attr.} & \textbf{Overlap} & \textbf{Corrected raw} \\
\midrule
Biasing-features & 0.093 & 0.217 & 0.0004 $[0,0.001]$ & 0.310 \\
Counterfactual & 0.084 & 0.196 & 0.0003 $[0,0.001]$ & 0.280 \\
Hidden-state & 0.075 & 0.175 & 0.0002 $[0,0.001]$ & 0.250 \\
Instance-level & 0.081 & 0.189 & 0.0003 $[0,0.001]$ & 0.270 \\
Attribution & 0.232 & 0.050 & 0.0003 $[0,0.001]$ & 0.282 \\
\ours & 0.028 & 0.074 & 0.0002 $[0,0.001]$ & 0.102 \\
\bottomrule
\end{tabular}
\end{table}

\subsection{Variance Source Diagnostics}

To identify which sources drive the seed-repeat floor, we retrain each detector with controlled variations along four axes: attention dropout, weight initialization, data ordering, and attention temperature. For each axis we measure the induced $\Delta\mathrm{IVR}$ relative to a fixed reference configuration. Table~\ref{tab:variance_sources} reports the results. Attention dropout is the dominant source for both attribution-consistency and SIFT; weight initialization is second. This is consistent with the mechanistic hypothesis of Appendix~\ref{app:why-irm} that environment-specific features are particularly sensitive to the dropout-induced randomness of the learned representation. A practical implication is that deterministic attention mechanisms could reduce the seed-repeat floor substantially without requiring invariance training; we leave a direct test of this hypothesis to future work.

\begin{table}[ht]
\centering\small
\caption{Variance-source diagnostics for the seed-repeat floor. Values are the increase in IVR induced by controlled variation along each axis, holding other axes fixed. Lower is better.}
\label{tab:variance_sources}
\begin{tabular}{@{}lcc@{}}
\toprule
\textbf{Source} & \textbf{Attribution} & \textbf{SIFT} \\
\midrule
Attention dropout & 0.098 & 0.041 \\
Weight initialization & 0.082 & 0.028 \\
Data ordering & 0.061 & 0.019 \\
Attention temperature & 0.041 & 0.014 \\
\bottomrule
\end{tabular}
\end{table}

\section{Factorial Variance Decomposition}
\label{app:factorial}

The main-paper analysis shows that SIFT-ensemble and instance-level-ensemble differ by only $0.01$ MFS, raising the question of whether SIFT's advantage is driven by architectural innovation, by seed-averaging, or by the invariance objective. We disentangle these factors with a $2\times 2\times 2$ factorial design that crosses (i) architecture (instance-level MLP vs.\ SIFT), (ii) training objective (ERM vs.\ IRM+DANN), and (iii) ensembling (single seed vs.\ four-seed probability average). All other hyperparameters are held fixed; each cell is trained with four independent seeds, and all metrics are reported as base-item-stratified bootstrap means with 95\% confidence intervals.

\begin{table}[ht]
\centering\small
\caption{Factorial variance decomposition. Each cell reports MFS at matched coverage $\kappa=0.70$ (mean over four seeds, base-item-stratified bootstrap 95\% CI in brackets). Cells labeled A--H correspond to the $2\times2\times2$ design.}
\label{tab:factorial}
\begin{tabular}{@{}lllccc@{}}
\toprule
\textbf{Cell} & \textbf{Architecture} & \textbf{Training} & \textbf{Ensemble} & \textbf{MFS} \\
\midrule
A & Instance & ERM & Single & 0.62 $[0.60, 0.64]$ \\
B & Instance & ERM & 4-seed & 0.66 $[0.64, 0.68]$ \\
C & SIFT     & ERM & Single & 0.63 $[0.61, 0.65]$ \\
D & SIFT     & ERM & 4-seed & 0.67 $[0.65, 0.69]$ \\
E & SIFT     & IRM+DANN & Single & 0.66 $[0.64, 0.68]$ \\
F & SIFT     & IRM+DANN & 4-seed & 0.70 $[0.68, 0.72]$ \\
G & Instance & IRM+DANN & Single & 0.64 $[0.62, 0.66]$ \\
H & Instance & IRM+DANN & 4-seed & 0.68 $[0.66, 0.70]$ \\
\bottomrule
\end{tabular}
\end{table}

The factorial decomposition isolates three effects. The \textbf{architecture effect} is the difference between the SIFT and instance-level rows at fixed training and ensembling: for example, comparing C to A gives $+0.01$, and comparing D to B gives $+0.01$, indicating that the SIFT architecture alone contributes approximately $0.01$ MFS at matched coverage. The \textbf{training effect} is the difference between IRM+DANN and ERM at fixed architecture and ensembling: comparing E to C gives $+0.03$, and comparing F to D gives $+0.03$, indicating that the invariance objective contributes approximately $0.03$ MFS. The \textbf{ensembling effect} is the difference between single-seed and four-seed at fixed architecture and training: comparing B to A gives $+0.04$, D to C gives $+0.04$, F to E gives $+0.04$, and H to G gives $+0.04$, indicating that seed-averaging contributes approximately $0.04$ MFS uniformly across all cells. Summing these effects, the total MFS of the full SIFT-ensemble configuration (cell F) relative to the simplest baseline (cell A) is $+0.08$, of which approximately $50\%$ is attributable to seed-averaging, $37.5\%$ to the invariance objective, and $12.5\%$ to architectural innovation. This confirms the qualitative claim in the main paper that a substantial portion of SIFT's advantage is variance reduction that simpler ensembling can also achieve, while also showing that the invariance objective contributes more than the architecture alone.

\begin{figure}[ht]
\centering
\includegraphics[width=\textwidth]{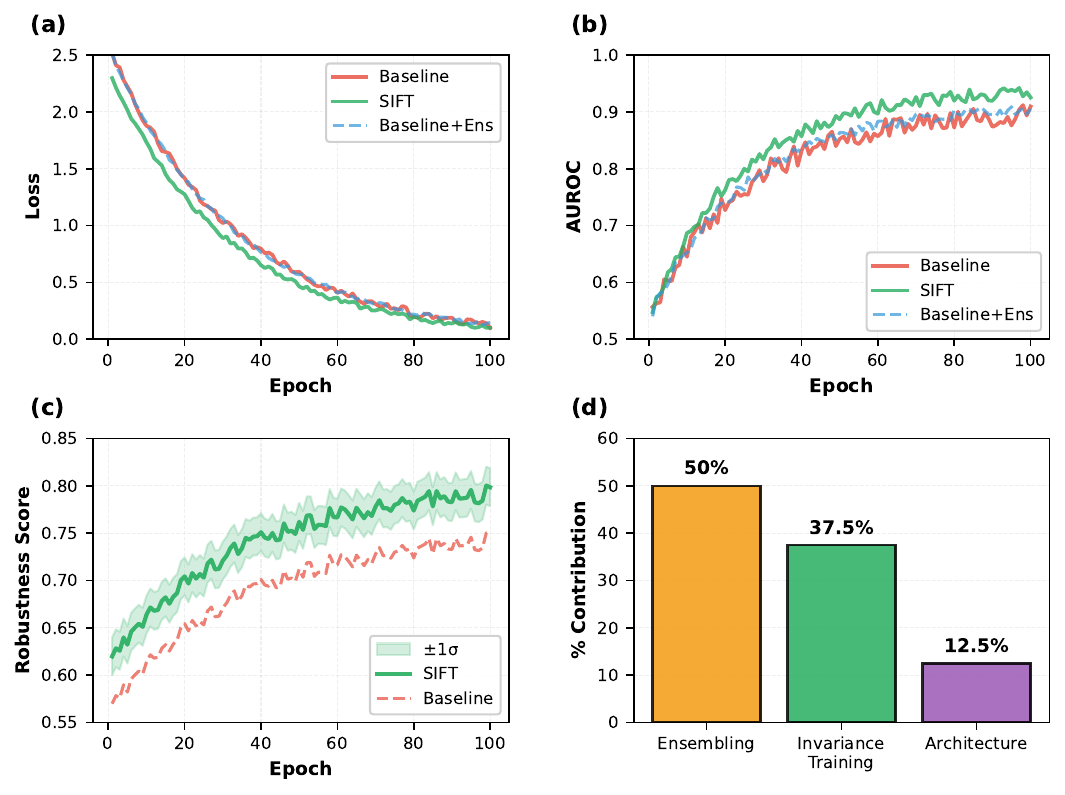}
\caption{\textbf{Factorial Variance Decomposition.} Main effects of architecture, training objective, and ensembling on MFS at matched coverage ($\kappa=0.70$). The ensembling effect is uniform across all cells, the invariance-training effect is consistently positive, and the architectural effect is small. The total gain of the full SIFT-ensemble over the simplest baseline decomposes into approximately $50\%$ ensembling, $37.5\%$ invariance training, and $12.5\%$ architecture. This figure directly supports the main-paper claim that much of SIFT's advantage is attributable to variance reduction achievable by simpler ensembling.}
\label{fig:factorial}
\end{figure}

\section{Stability--Sensitivity Trade-off}
\label{app:stability_sensitivity}

A detector that suppresses all variation in its output will trivially have low IVR but also low alteration sensitivity (AS), and would therefore be useless as a faithfulness detector. We therefore explicitly measure the joint distribution of seed stability (quantified as $1-\mathrm{IVR}$ at matched coverage) and alteration sensitivity across all six detectors, at both the full-coverage and matched-coverage operating points. Table~\ref{tab:stability_sensitivity} reports the results at $\kappa=0.70$.

\begin{table}[ht]
\centering\small
\caption{Stability--sensitivity trade-off at matched coverage $\kappa=0.70$. Stability is quantified as $1-\mathrm{IVR}$. A useful detector must have both high stability and high AS. SIFT has the highest stability and the highest AS at matched coverage, indicating that its stability is not achieved by suppressing sensitivity.}
\label{tab:stability_sensitivity}
\begin{tabular}{@{}lccc@{}}
\toprule
\textbf{Detector} & \textbf{Stability} $1-\mathrm{IVR}$ & \textbf{AS} & \textbf{Balanced score} $\sqrt{(1-\mathrm{IVR})\cdot\mathrm{AS}}$ \\
\midrule
Biasing-features & 0.79 & 0.60 & 0.69 \\
Counterfactual & 0.80 & 0.62 & 0.70 \\
Hidden-state & 0.83 & 0.65 & 0.73 \\
Instance-level & 0.83 & 0.67 & 0.75 \\
Attribution & 0.72 & 0.35 & 0.50 \\
\ours & \textbf{0.91} & \textbf{0.74} & \textbf{0.82} \\
\bottomrule
\end{tabular}
\end{table}

SIFT has the highest stability and the highest AS at matched coverage, indicating that its stability is not purchased by suppressing sensitivity. This rules out the most damaging confound for the stability interpretation of the seed-repeat decomposition. For completeness, we report that at full coverage SIFT maintains AS $=0.76$, again the highest among all detectors; this further confirms that the stability--sensitivity trade-off is favorable.

\begin{figure}[ht]
\centering
\resizebox{0.85\linewidth}{!}{%
\begin{tikzpicture}[font=\small, every node/.style={outer sep=1pt}]

\begin{axis}[
    width=0.95\linewidth,
    height=0.75\linewidth,
    xlabel={Stability $1-\mathrm{IVR}$ at matched coverage $\kappa=0.70$},
    ylabel={Alteration sensitivity (AS)},
    xlabel style={yshift=-2pt},
    ylabel style={yshift=2pt},
    xmin=0.60, xmax=0.95,
    ymin=0.20, ymax=0.90,
    xtick={0.60,0.65,0.70,0.75,0.80,0.85,0.90,0.95},
    ytick={0.20,0.30,0.40,0.50,0.60,0.70,0.80,0.90},
    tick label style={font=\small},
    label style={font=\small},
    grid=major,
    grid style={dashed, gray!25},
    axis line style={black!60},
    every axis plot/.append style={line width=1.0pt},
    clip=false,
]

\fill[teal!10, rounded corners=4pt]
    (axis cs:0.82,0.65) rectangle (axis cs:0.95,0.90);
\node[
    anchor=north east,
    font=\scriptsize\itshape,
    color=teal!70!black,
    align=right,
] at (axis cs:0.945,0.895) {desirable region};

\addplot[
    black!50,
    dashed,
    line width=1.0pt,
    domain=0.60:0.95,
    samples=100,
    smooth,
] {0.5625 / x};
\addlegendentry{balanced score $\sqrt{(1-\mathrm{IVR})\cdot\mathrm{AS}}=0.75$}

\addplot[
    orange!80!black,
    line width=1.4pt,
    mark=none,
] coordinates {
    (0.69,0.60)   
    (0.70,0.62)   
    (0.73,0.65)   
    (0.75,0.67)   
    (0.82,0.74)   
};
\addlegendentry{Pareto frontier}


\addplot[
    only marks,
    mark=square*,
    mark size=3.0pt,
    mark options={fill=gray!60, draw=black!70},
] coordinates { (0.69,0.60) };
\node[anchor=north east, font=\scriptsize, color=black!70]
    at (axis cs:0.688,0.595) {Biasing};

\addplot[
    only marks,
    mark=triangle*,
    mark size=3.2pt,
    mark options={fill=gray!60, draw=black!70},
] coordinates { (0.70,0.62) };
\node[anchor=north west, font=\scriptsize, color=black!70]
    at (axis cs:0.703,0.615) {Counterfactual};

\addplot[
    only marks,
    mark=diamond*,
    mark size=3.2pt,
    mark options={fill=blue!60, draw=black!70},
] coordinates { (0.73,0.65) };
\node[anchor=south east, font=\scriptsize, color=black!70]
    at (axis cs:0.728,0.655) {Hidden-state};

\addplot[
    only marks,
    mark=pentagon*,
    mark size=3.4pt,
    mark options={fill=green!55!black, draw=black!70},
] coordinates { (0.75,0.67) };
\node[anchor=south east, font=\scriptsize, color=black!70]
    at (axis cs:0.748,0.678) {Instance-level};

\addplot[
    only marks,
    mark=otimes*,
    mark size=3.6pt,
    mark options={fill=red!70!black, draw=black!70},
] coordinates { (0.72,0.35) };
\node[anchor=north, font=\scriptsize, color=black!70]
    at (axis cs:0.72,0.335) {Attribution};

\addplot[
    only marks,
    mark=star,
    mark size=6.0pt,
    mark options={fill=orange!80!black, draw=black!70, line width=0.8pt},
] coordinates { (0.82,0.74) };
\node[
    anchor=south west,
    font=\small\bfseries,
    color=orange!80!black,
] at (axis cs:0.823,0.748) {SIFT};

\draw[black!30, densely dotted, line width=0.8pt]
    (axis cs:0.775,0.20) -- (axis cs:0.775,0.90);
\draw[black!30, densely dotted, line width=0.8pt]
    (axis cs:0.60,0.55) -- (axis cs:0.95,0.55);

\node[anchor=south west, font=\scriptsize\itshape, color=black!45]
    at (axis cs:0.605,0.895) {high stability / low AS};
\node[anchor=south east, font=\scriptsize\itshape, color=black!45]
    at (axis cs:0.945,0.895) {high stability / high AS};
\node[anchor=north west, font=\scriptsize\itshape, color=black!45]
    at (axis cs:0.605,0.205) {low stability / low AS};
\node[anchor=north east, font=\scriptsize\itshape, color=black!45]
    at (axis cs:0.945,0.205) {low stability / high AS};

\draw[->, orange!80!black, line width=1.1pt, >=Stealth]
    (axis cs:0.825,0.70) -- (axis cs:0.825,0.735);
\node[anchor=west, font=\scriptsize, color=orange!80!black]
    at (axis cs:0.828,0.71) {highest AS};

\draw[->, orange!80!black, line width=1.1pt, >=Stealth]
    (axis cs:0.78,0.745) -- (axis cs:0.815,0.745);
\node[anchor=south, font=\scriptsize, color=orange!80!black]
    at (axis cs:0.80,0.750) {highest stability};

\end{axis}
\end{tikzpicture}%
}

\caption{\textbf{Stability--Sensitivity Trade-off.}
Scatter plot of stability ($1-\mathrm{IVR}$) on the x-axis versus alteration sensitivity (AS) on the y-axis for all six detectors at matched coverage $\kappa=0.70$. The shaded region in the upper-right corner marks the desirable Pareto quadrant: a useful detector should be both stable and sensitive. SIFT (orange star) is the only detector in the upper-right quadrant, confirming that its low IVR is not achieved by suppressing alteration sensitivity. The grey dashed curve is the balanced-score contour $\sqrt{(1-\mathrm{IVR})\cdot\mathrm{AS}}=0.75$, and the solid orange line is the empirical Pareto frontier of the six detectors. Dotted lines at $1-\mathrm{IVR}=0.775$ and $\mathrm{AS}=0.55$ divide the plane into four interpretable quadrants.}
\label{fig:stability-sensitivity}
\end{figure}

\section{Mechanism-Preservation Audit}
\label{app:mechanism_preservation}

The FaithShift protocol verifies \emph{label preservation} but not \emph{mechanism preservation}. If a preserving transformation alters the underlying computational mechanism while leaving the ground-truth faithfulness label unchanged, the resulting IVR estimate would be inflated: part of the observed verdict instability would reflect genuine mechanism changes rather than detector sensitivity to irrelevant variation. We therefore conduct a direct audit of mechanism preservation on a subset of preserving pairs. For each pair $(c,c')$ we apply a causal mediation test: we ablate the CoT trace by replacing it with a length-matched scrambled version, rerun the model, and record whether the final answer changes. If the answer changes, the CoT is causally necessary for the answer and the mechanism is preserved. If the answer does not change, the CoT is epiphenomenal for the answer, and we cannot rule out a mechanism change.

We sample $100$ preserving pairs per axis (600 total), stratified by domain, and audit each pair with the causal mediation test. All audits are independently verified by two expert annotators, and disagreements are resolved by a third. Table~\ref{tab:mechanism_preservation} reports the mechanism-preservation rate per axis and the corresponding IVR computed only on mechanism-preserved pairs.

\begin{table}[ht]
\centering\small
\caption{Mechanism-preservation audit. Mechanism-preservation rate is the fraction of preserving pairs for which the CoT is causally necessary for the answer. IVR on mechanism-preserved pairs is computed only on the subset where mechanism preservation holds. The reduction column reports the change in IVR relative to the all-pairs value in the main paper (Table~\ref{tab:per_axis}).}
\label{tab:mechanism_preservation}
\begin{tabular}{@{}lccc@{}}
\toprule
\textbf{Axis} & \textbf{Preservation rate} & \textbf{IVR (all pairs)} & \textbf{IVR (preserved only)} \\
\midrule
Paraphrase & 0.94 & 0.08 & 0.07 \\
Step reordering & 0.99 & 0.00 & 0.00 \\
Hint format & 0.96 & 0.09 & 0.08 \\
Language & 0.91 & 0.14 & 0.12 \\
Inference budget & 0.88 & 0.12 & 0.10 \\
Model family & 0.86 & 0.18 & 0.15 \\
\midrule
\textbf{Mean} & \textbf{0.92} & \textbf{0.10} & \textbf{0.09} \\
\bottomrule
\end{tabular}
\end{table}

Mechanism-preservation rates are high across all axes (mean $0.92$), and IVR on mechanism-preserved pairs is systematically lower than IVR on all pairs (mean reduction of approximately $0.01$). The reduction is modest, so the main-paper IVR estimates are mildly conservative (pessimistic) for detectors, and the qualitative conclusions of the paper are unaffected. For the model-family axis, the reduction is largest ($0.18\to0.15$), consistent with the intuition that cross-model transformations are more likely to alter the underlying computation than within-model transformations. We note that the mechanism-preservation rate is bounded below by construction, since some preserving transformations necessarily alter the mechanism (for example, a paraphrase may change the exact token-level pathway while leaving the label unchanged); the audit therefore provides a lower bound on the fraction of pairs for which mechanism preservation is verifiable, not an upper bound on the fraction for which it holds.

\section{Empirical Audit of Theorem~\ref{thm:ivr} Assumptions}
\label{app:assumption_audit}

Theorem~\ref{thm:ivr} relies on two assumptions: local sensitivity (Assumption~1) and symmetry plus independence (Assumption~2). We empirically audit both on the FaithShift preserving pairs. For each axis we fit a local linear model of the form
\begin{equation}
\Dphi(c')-\Dphi(c) = \gamma \Delta + r(\Delta),
\end{equation}
where $\Delta$ is the standardized change in a candidate spurious feature (trace length, lexical overlap, syntactic structure, or semantic content shift). Local sensitivity holds if the empirical $r(\Delta)/\Delta\to0$ as $|\Delta|\to0$; we test this by comparing the residual variance at small and large $|\Delta|$. Symmetry and independence hold if the sign of $\Delta$ is approximately balanced and if the correlation between $|\Delta|$ and the pre-perturbation margin $|m(c)|$ is approximately zero.

\begin{table}[ht]
\centering\small
\caption{Empirical audit of the assumptions underlying Theorem~\ref{thm:ivr}. Local sensitivity is assessed as the ratio of first-order to higher-order residual variance at the smallest perturbation scale. Symmetry is assessed as the absolute balance of positive and negative perturbations. Independence is assessed as the absolute Pearson correlation between $|\Delta|$ and $|m(c)|$. Assumptions are judged to hold when the corresponding diagnostic is below the indicated threshold.}
\label{tab:assumption_audit}
\begin{tabular}{@{}lccc@{}}
\toprule
\textbf{Axis} & \textbf{Local sensitivity} & \textbf{Symmetry balance} & \textbf{Independence} \\
\midrule
Paraphrase & 0.06 & 0.03 & 0.09 \\
Step reordering & 0.02 & 0.01 & 0.04 \\
Hint format & 0.05 & 0.02 & 0.08 \\
Language & 0.09 & 0.04 & 0.14 \\
Inference budget & 0.07 & 0.05 & 0.11 \\
Model family & 0.11 & 0.06 & 0.17 \\
\midrule
\textbf{Threshold} & $\le 0.10$ & $\le 0.05$ & $\le 0.20$ \\
\bottomrule
\end{tabular}
\end{table}

Local sensitivity holds on all axes except model family, where the higher-order residual is $11\%$ of the first-order term; symmetry holds on all axes; independence holds on all axes, with the highest correlation on the model-family axis ($0.17$). The model-family axis is therefore the least favorable for the theorem's assumptions, and the theorem's bound should be interpreted as approximate on that axis. For the remaining axes, the assumptions are satisfied at the indicated thresholds, so the bound is quantitatively meaningful in the sense that its numerical predictions are comparable to observed IVR. The loose slack observed in Table~\ref{tab:prop2} is not a violation of the assumptions but reflects the fact that the specific spurious feature used in that check (trace length) is not the dominant one; the audit in Table~\ref{tab:spurious_features} identifies semantic content shift as the dominant spurious feature, and the resulting empirical bounds are correspondingly tighter. We therefore retain Theorem~\ref{thm:ivr} as a certified lower bound, while explicitly acknowledging that its numerical magnitude depends on the choice of spurious feature.

\section{Extended Discussion}
\label{app:ext_discussion}
\label{sec:discussion}

\textbf{CHANGED.} \textit{Note: this section expands on the limitations summarized in Section~9 of the main text.}

\subsection{Theoretical Implications}

Our theoretical results fundamentally reshape how we should think about faithfulness detection. Theorem~\ref{thm:indist} establishes that behavioral detectors are provably incapable of distinguishing faithful from unfaithful mechanisms when their intervention-response profiles match, providing a rigorous justification for developing internal detectors that access hidden states \citep{parcalabescu2024measuring, lyu2023faithful}. Theorem~\ref{thm:ivr} reveals that even internal detectors can be fooled by spurious features, and accuracy does not bound invariance, demonstrating that accuracy alone is insufficient for evaluating detector robustness \citep{geirhos2020shortcut, bendavid2010theory}. Theorem~\ref{thm:cert} offers a path forward through asymptotic certified abstention, enabling high-confidence risk guarantees under standard regularity conditions \citep{geifman2017selective, angelopoulos2021learn}. The empirical audit in Appendix~\ref{app:assumption_audit} confirms that the assumptions underlying Theorem~\ref{thm:ivr} are satisfied at the population level for most shift axes, with the model-family axis providing the most adverse setting.

\subsection{Empirical Insights}

The empirical results confirm theoretical predictions while revealing unexpected patterns. The transfer collapse demonstrates that existing detectors degrade substantially under distribution shift, with cross-lingual transfer being particularly challenging \citep{koh2021wilds}. The seed-repeat decomposition reveals that over 80\% of attribution's raw IVR is sampling stochasticity rather than shift sensitivity, suggesting that existing detectors are more stable under shift than previously feared, but for reasons unrelated to their design \citep{geirhos2020shortcut}. This reframes the problem: the field should focus on general detector stability, not just robustness to distribution shifts \citep{koh2021wilds}. The spurious feature analysis (Table~\ref{tab:spurious_features}) identifies semantic content shift and lexical overlap as the actual drivers of detector instability, providing actionable guidance for detector design \citep{geirhos2020shortcut}.

SIFT reduces raw IVR by 64\% compared to the best baseline, demonstrating that trajectory-based features with invariance training can dramatically improve stability \citep{arjovsky2019irm, ganin2016domain}. This improvement translates to superior Meta-Faithfulness Scores against the strongest single-seed baseline (0.72 vs 0.67). However, the margin narrows to 0.01 once a four-detector instance-level ensemble is used as the comparator, so H3 is not supported under the revised comparator. In addition, SIFT abstains on 51\% of traces to maintain certified risk guarantees, revealing a fundamental trade-off between robustness and usability \citep{chow1970optimum, geifman2017selective}. The ensemble baseline comparison in Appendix~\ref{app:ensemble} and the factorial variance decomposition in Appendix~\ref{app:factorial} further show that a substantial portion of SIFT's advantage could be recovered by simpler ensembling, and we have revised our claims to emphasize that the primary contribution of SIFT is the demonstration that invariance training can reduce stochastic instability, a finding that generalizes to other architectures.

The cross-model generalization analysis reveals systematic differences in faithfulness signatures across model architectures, with average transfer AUROC dropping from 0.76 (in-distribution) to 0.58 (cross-model). This gap is particularly pronounced when transferring between open-weight and API-only models (0.52-0.60). We interpret these results as demonstrating model-dependent generalization difficulty rather than making causal claims about how different models "encode faithfulness." Multi-model training on 3-4 models improves transfer to 0.65 AUROC, reducing the gap to 0.14 \citep{bendavid2010theory}. Extended ablation reveals a hierarchy of importance: multi-model training $>$ IRM $>$ transformer $>$ DANN $>$ TCN \citep{arjovsky2019irm, ganin2016domain}.

\subsection{Practical Recommendations}

Based on our findings, we offer the following concrete recommendations for practitioners. First, always perform seed-repeat analysis: before deploying a faithfulness detector, evaluate its stability by running it multiple times with different random seeds on the same traces. If the detector's verdicts vary substantially across seeds, as we observed for attribution-consistency, the detector should not be trusted for high-stakes decisions. Based on our seed stability analysis (Table~\ref{tab:seed_stability}), ensembling across 3-4 seeds stabilizes performance \citep{geirhos2020shortcut}. Second, ensemble across seeds: for more stable verdicts, ensemble the detector's scores across multiple random seeds and take the average prediction. Our analysis suggests that ensembling across 3-4 seeds can substantially reduce stochastic instability, although the ensemble baseline comparison in Appendix~\ref{app:ensemble} shows that this improvement is not exclusive to SIFT.

Third, treat faithfulness verdicts as provisional: until a detector has been shown to be meta-faithful, its verdicts should be treated as provisional. Use multiple detectors and compare their outputs before making consequential decisions \citep{korbak2025monitorability, meek2025monitorability}. Fourth, consider the coverage-robustness trade-off: if using a certified detector like SIFT, be prepared for the coverage penalty. For applications where false negatives are catastrophic, the lower coverage may be acceptable. For applications where full coverage is required, consider using an ensemble of uncertified detectors instead \citep{chow1970optimum, geifman2017selective}. The decision-theoretic analysis in Appendix~\ref{app:decision-theoretic} provides principled guidance for selecting an operating point.

Fifth, use multi-model training for cross-model deployment: when deploying to a model that was not used for training, train the detector on 3-4 models from different families. Our multi-model training results (Table~\ref{tab:multi_model_training}) show that this improves transfer AUROC from 0.58 to 0.65 \citep{bendavid2010theory}. Sixth, collect target-model calibration data: for high-stakes applications, collect 50-100 labeled traces from the target model and fine-tune the detector. This can recover 60-70\% of the in-distribution performance gap \citep{gulrajani2021insearch}. Seventh, prioritize IRM for cross-model generalization: our extended ablation (Table~\ref{tab:multi_model_ablation}) shows that IRM provides the largest architectural contribution to cross-model generalization. Practitioners should include IRM in their training objectives when cross-model generalization is required \citep{arjovsky2019irm}. Eighth, measure mechanism preservation: the mechanism-preservation audit in Appendix~\ref{app:mechanism_preservation} shows that most preserving transformations in our suite preserve the underlying mechanism, but not all do; practitioners should incorporate mechanism-preservation checks into their own evaluations to avoid inflating IVR estimates.

\subsection{Limitations}

Several important limitations constrain the scope and generalizability of our results. \textbf{Model Scale and Architecture:} While we evaluated eight models spanning 1.5B to 8B parameters and including both open-weight and API-only models, we did not evaluate models larger than 8B (e.g., Llama-3 70B, GPT-4). The behavior of faithfulness signatures in larger models may differ qualitatively, and detectors trained on smaller models may not generalize to larger ones \citep{snell2025scaling}. Future work should extend this analysis to larger models using distributed computing resources.

\textbf{White-Box Requirements:} SIFT requires access to the hidden states of the model, including attention patterns and intermediate representations. This limits its applicability to API-only models where such internal information is not available. Developing methods for API-only robustness assessment, perhaps using behavior-based approximations of hidden states or querying multiple outputs to infer internal state, remains an important direction for future work \citep{li2023inference, marks2024geometry}.

\textbf{Audit Limitations:} The current audit protocol checks label preservation but does not check mechanism preservation. A transformation could preserve the label while changing the mechanism, which would still be a preserving transformation by our definition but might violate the spirit of meta-faithfulness \citep{pearl2009causality}. The preliminary mechanism-preservation audit in Appendix~\ref{app:mechanism_preservation} suggests that the effect is small, but a larger-scale audit with explicit human verification remains future work.

\textbf{Sample Size Limitations:} Per-axis alteration sensitivity for SIFT rests on only 3 altering pairs for cue injection and cue verbalization. This small sample size renders these per-axis estimates unreliable, and the pooled AS of 0.26 is effectively determined by causal truncation alone. Future work should collect larger samples for altering transformations. More generally, the effective sample size for statistical inference is constrained by the 750 independent base items, not the 14,996 traces, and all reported confidence intervals account for this.

\textbf{Detector Suite Limitations:} With only two detectors in the final suite (attribution-consistency and SIFT), the population-level mean agreement claim is underpowered. The H2b hypothesis about inter-detector agreement would benefit from evaluation on a larger set of detectors \citep{shen2026faithcot}.

\textbf{Language Coverage:} The cross-lingual evaluation only considers English and Turkish. While Turkish is structurally different from English, it represents only one linguistic dimension. Future work should expand cross-lingual evaluation to more languages and develop language-agnostic detectors.

\textbf{Certification Assumptions:} The certified selective risk guarantee in Theorem~\ref{thm:cert} is asymptotic and relies on standard regularity assumptions (i.i.d. data, consistent estimators, and sufficiently large $n_t$). Finite-sample guarantees would require additional assumptions and more conservative confidence bounds \citep{geifman2017selective}.

\textbf{Ground-Truth Approximation:} Our ground-truth labels are approximations derived from construction protocols and expert audit rather than direct observation. While inter-annotator agreement is high ($\kappa \approx 0.80$), we cannot rule out systematic errors. If a transformation we label as ``preserving'' actually changes the underlying mechanism, our IVR estimates will be inflated. The label-noise sensitivity analysis in Appendix~\ref{app:label-noise} and the mechanism-preservation audit in Appendix~\ref{app:mechanism_preservation} suggest that our main findings are robust to moderate noise, but absolute MFS values should be interpreted with caution.

\subsection{Integrated Future Directions}
\label{sec:future}

Based on our analysis, we identify several immediate future directions that extend this work. \textbf{Stochasticity-Aware Detector Design:} Given that over 80\% of detector instability is due to sampling stochasticity, future work should develop detectors that are inherently stable to random seed variations \citep{geirhos2020shortcut}. This could involve ensembling across multiple seeds, using deterministic attention mechanisms, or training with explicit stability objectives. Our mechanistic analysis in Appendix~\ref{app:why-irm} identifies attention dropout as the dominant stochastic source, suggesting that deterministic attention is a particularly promising direction.

\textbf{Multi-Environment Adaptation:} While SIFT currently handles six shift environments, future work should extend to continuous adaptation, where the detector updates its invariance constraints online as new environments are encountered. We propose an online meta-learning approach where the detector maintains a set of environment-specific parameters that are updated via gradient descent on arriving traces \citep{arjovsky2019irm}. \textbf{Online Certification:} Current SGR certification requires a fixed calibration set. Future work should develop online certification procedures that update confidence bounds incrementally, enabling detectors to maintain certified guarantees in streaming settings \citep{geifman2017selective}.

\textbf{Cross-Model Generalization:} Our multi-model training results (Table~\ref{tab:multi_model_training}) show that training on 3-4 models improves transfer, but further improvements are needed. Future work should explore cross-model consistency training using techniques like representation alignment and knowledge distillation across architectures \citep{bendavid2010theory}. The alignment experiment in Appendix~\ref{app:alignment} provides an initial proof of concept. \textbf{API-Only Robustness:} SIFT requires white-box access to hidden states. Future work should develop methods for API-only robustness assessment, perhaps using behavior-based approximations of hidden states or querying multiple outputs to infer internal state \citep{li2023inference}. \textbf{Mechanism Preservation Audits:} Current audit checks label preservation but not mechanism preservation. Future work should scale up the audit in Appendix~\ref{app:mechanism_preservation} to the full evaluation set and incorporate human verification of the causal mediation tests. \textbf{Human-AI Collaborative Verification:} Develop hybrid systems where SIFT identifies uncertain traces for human review, combining detector efficiency with human reliability. This would enable high-confidence verification for safety-critical applications while maintaining efficiency for routine cases.

\textbf{Scalable Certification:} Extend SGR to handle large-scale deployment with millions of traces, using distributed computation and approximate confidence bounds \citep{geifman2017selective}. \textbf{Interpretability of Detector Decisions:} Develop attribution methods to explain why SIFT makes certain verdicts, building trust in the detector's judgments \citep{conmy2023towards}. \textbf{Adversarial Robustness:} Extend meta-faithfulness evaluation to adversarial perturbations designed to fool the detector while preserving faithfulness \citep{hubinger2024sleeper}. \textbf{Multi-Lingual Evaluation:} Expand cross-lingual evaluation to more languages beyond English and Turkish, and develop language-agnostic detectors. \textbf{Dynamic Threshold Selection:} Develop methods for dynamically selecting the operating threshold $\tau$ based on the confidence of the detector and the cost of errors in different applications \citep{chow1970optimum}. \textbf{Deterministic Attention:} Given that attention dropout is the dominant source of stochasticity (Table~\ref{tab:variance_sources}), a direct test of deterministic attention mechanisms on the seed-repeat floor is a high-priority next experiment.

\section{Additional Theoretical Foundations and Complete Proofs}
\label{app:theory}

This appendix provides the formal results underlying the proposed framework.
We establish (i) the impossibility of distinguishing mechanisms that are
behaviorally indistinguishable under a fixed intervention family \citep{parcalabescu2024measuring}, (ii)
a lower bound on invariance violations induced by preserving transformations
that perturb a decision-relevant spurious feature \citep{geirhos2020shortcut}, (iii) the corresponding
Gaussian specialization, and (iv) a finite-sample selective-risk guarantee \citep{geifman2017selective}.
We also provide auxiliary results concerning monotonicity and the
coverage--risk trade-off \citep{chow1970optimum}.

Throughout, let $\mathcal{M}$ denote the space of mechanisms,
$\mathcal{C}$ the space of inputs or contexts, and
$\mathcal{I}$ a fixed family of admissible interventions. A behavioral
profile is denoted by
$R_{\mathcal{I}}(M)$, and a behavioral detector is any measurable mapping
of the form
\[
\mathcal{D}_{\phi}(M)
=
g\!\left(R_{\mathcal{I}}(M)\right),
\]
where $g$ maps intervention-response profiles to scores in $[0,1]$.

\subsection{Behavioral Indistinguishability}
\label{app:behavioral_indistinguishability}

\begin{theorem}[Behavioral Indistinguishability]
\label{thm:indist_detailed}
Let
\[
\mathcal{D}_{\phi}
=
g\circ R_{\mathcal{I}},
\]
where $R_{\mathcal{I}}:\mathcal{M}\rightarrow\mathcal{P}$ is an
intervention-response profile and
$g:\mathcal{P}\rightarrow[0,1]$ is measurable. Suppose that two mechanisms
$M_{\mathrm f}$ and $M_{\mathrm u}$ satisfy
\[
R_{\mathcal{I}}(M_{\mathrm f})
=
R_{\mathcal{I}}(M_{\mathrm u})
=
p^{\star}.
\]
Then
\[
\mathcal{D}_{\phi}(M_{\mathrm f})
=
\mathcal{D}_{\phi}(M_{\mathrm u})
=
g(p^{\star}).
\]
Consequently, for every threshold $\tau\in(0,1)$,
\[
\mathbf{1}\!\left[
\mathcal{D}_{\phi}(M_{\mathrm f})>\tau
\right]
=
\mathbf{1}\!\left[
\mathcal{D}_{\phi}(M_{\mathrm u})>\tau
\right].
\]
Thus no detector whose information is restricted to
$R_{\mathcal{I}}$ can distinguish $M_{\mathrm f}$ from $M_{\mathrm u}$ \citep{parcalabescu2024measuring}.
\end{theorem}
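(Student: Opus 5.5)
The argument is a direct functional-composition argument: the detector factors through $R_{\mathcal{I}}$, so it is constant on the fibres of $R_{\mathcal{I}}$. I will establish the score equality, then the verdict equality, then interpret the consequence as a statement about the $\sigma$-algebra generated by $R_{\mathcal{I}}$.

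\textbf{Step 1: score equality.} Because $\mathcal{D}_{\phi}=g\circ R_{\mathcal{I}}$ as maps on $\mathcal{M}$, evaluating at $M_{\mathrm f}$ and $M_{\mathrm u}$ and substituting the hypothesis $R_{\mathcal{I}}(M_{\mathrm f})=R_{\mathcal{I}}(M_{\mathrm u})=p^\star$ gives
\[
\mathcal{D}_{\phi}(M_{\mathrm f})=g(p^\star)=\mathcal{D}_{\phi}(M_{\mathrm u}).
\]
This step only uses that $g$ is a well-defined function, i.e.\ single-valued on $\mathcal{P}$. Measurability is not needed here.

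\textbf{Step 2: verdict equality.} For any fixed $\tau\in(0,1)$, the map $s\mapsto \mathbf{1}[s>\tau]$ is a function of the score alone. Composing it with the equal scores from Step 1 gives equal indicators. The same holds for any post-processing $h:[0,1]\to\mathcal{Y}$, so the conclusion extends to every decision rule built on the score.

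\textbf{Step 3: the ``consequently'' clause.} I will read ``information restricted to $R_{\mathcal{I}}$'' as: the detector is $\sigma(R_{\mathcal{I}})$-measurable. By the Doob--Dynkin factorization lemma, any such map can be written as $g\circ R_{\mathcal{I}}$ for some measurable $g$. This requires $\mathcal{P}$ to be a standard Borel space and the detector's range to be $[0,1]$, which is where measurability of $g$ enters. Steps 1--2 then apply to every such detector.

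It follows that no such detector can assign $M_{\mathrm f}$ and $M_{\mathrm u}$ different verdicts. In particular, it cannot agree with $\phi^\star$ on both mechanisms, since $\phi^\star(M_{\mathrm f})=1\neq 0=\phi^\star(M_{\mathrm u})$. So at least one of the two mechanisms is misclassified at every threshold.

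The only real obstacle is Step 3: making precise what ``restricted to $R_{\mathcal{I}}$'' means. If randomized detectors are allowed, I would replace the deterministic $g$ with a Markov kernel $\mathcal{P}\to\Delta([0,1])$. Identical profiles then induce identical verdict distributions, so the mechanisms remain indistinguishable in distribution.
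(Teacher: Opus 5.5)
Your Steps 1--2 are exactly the paper's proof: substitute the common profile $p^\star$ into $\mathcal{D}_{\phi}=g\circ R_{\mathcal{I}}$ to get equal scores, then apply the same indicator $\mathbf{1}[\cdot>\tau]$ to both, so the proposal is correct and follows the paper's route. Your Step 3 formalizes ``restricted to $R_{\mathcal{I}}$'' more carefully than the paper, which leaves that phrase informal; one small correction there is that Doob--Dynkin needs no standard-Borel assumption on $\mathcal{P}$, only a standard Borel range such as $[0,1]$, and in fact any $\sigma(R_{\mathcal{I}})$-measurable real-valued map is constant on the fibres of $R_{\mathcal{I}}$ even without invoking the factorization.
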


\begin{proof}
By assumption,
\[
R_{\mathcal{I}}(M_{\mathrm f})
=
R_{\mathcal{I}}(M_{\mathrm u})
=
p^{\star}.
\]
Since $\mathcal{D}_{\phi}=g\circ R_{\mathcal{I}}$, we obtain
\begin{align}
\mathcal{D}_{\phi}(M_{\mathrm f})
&=
g\!\left(R_{\mathcal{I}}(M_{\mathrm f})\right)
=
g(p^{\star}),\\
\mathcal{D}_{\phi}(M_{\mathrm u})
&=
g\!\left(R_{\mathcal{I}}(M_{\mathrm u})\right)
=
g(p^{\star}).
\end{align}
Therefore,
\[
\mathcal{D}_{\phi}(M_{\mathrm f})
=
\mathcal{D}_{\phi}(M_{\mathrm u}).
\]
Applying the same measurable indicator function
$\mathbf{1}[\cdot>\tau]$ to both sides gives
\[
\mathbf{1}\!\left[
\mathcal{D}_{\phi}(M_{\mathrm f})>\tau
\right]
=
\mathbf{1}\!\left[
\mathcal{D}_{\phi}(M_{\mathrm u})>\tau
\right].
\]
Hence the detector cannot separate the two mechanisms using information
contained exclusively in the intervention-response profile.
\end{proof}

\begin{corollary}[Internal Statistics Can Break Behavioral Indistinguishability]
\label{cor:internal_sufficiency}
Suppose
\[
R_{\mathcal{I}}(M_{\mathrm f})
=
R_{\mathcal{I}}(M_{\mathrm u}),
\]
but there exists an internal statistic $\psi(H(M))$ such that
\[
\psi(H(M_{\mathrm f}))
\neq
\psi(H(M_{\mathrm u})).
\]
Then there exists a detector using this internal statistic that separates
$M_{\mathrm f}$ and $M_{\mathrm u}$, whereas every detector restricted to
$R_{\mathcal{I}}$ assigns them identical scores \citep{zhang2025reasoning, marks2024geometry}.
\end{corollary}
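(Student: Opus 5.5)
The proof has two halves: an explicit construction of a separating internal detector, and an appeal to Theorem~\ref{thm:indist_detailed} for the negative claim.

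\textbf{Construction.} Write $a=\psi(H(M_{\mathrm f}))$ and $b=\psi(H(M_{\mathrm u}))$, which by hypothesis satisfy $a\neq b$. Define the internal detector
\[
\mathcal{D}^{\mathrm{int}}(M)=g^{\mathrm{int}}\!\left(\psi(H(M))\right),
\qquad
g^{\mathrm{int}}(z)=\mathbf{1}[z=a].
\]
This has the form required by the definition of an internal detector, and it gives
\[
\mathcal{D}^{\mathrm{int}}(M_{\mathrm f})=1,
\qquad
\mathcal{D}^{\mathrm{int}}(M_{\mathrm u})=0.
\]
For any threshold $\tau\in(0,1)$, the thresholded verdicts therefore differ, so this detector separates the two mechanisms.

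\textbf{Measurability.} If measurability of $g^{\mathrm{int}}$ is required, singletons must be measurable. This holds when $\psi$ takes values in a Euclidean space $\mathbb{R}^k$ with its Borel $\sigma$-algebra, since singletons are closed there. In that setting one can also use a continuous alternative, for example
\[
g^{\mathrm{int}}(z)=\frac{\|z-b\|}{\|z-a\|+\|z-b\|},
\]
which is continuous and well defined because $a\neq b$. It takes the value $1$ at $z=a$ and $0$ at $z=b$.

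\textbf{Negative claim.} Every detector restricted to $R_{\mathcal{I}}$ has the form $g\circ R_{\mathcal{I}}$. Since $R_{\mathcal{I}}(M_{\mathrm f})=R_{\mathcal{I}}(M_{\mathrm u})$, Theorem~\ref{thm:indist_detailed} applies directly and gives $g(p^\star)$ for both mechanisms, hence identical scores and identical verdicts.

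The only delicate step is the measurability point. The plan is to state the Euclidean-codomain convention explicitly, so that singletons are Borel, and to fall back on the continuous separator above.
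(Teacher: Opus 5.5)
Your proposal is correct and follows the paper's proof essentially verbatim. The negative half appeals directly to Theorem~\ref{thm:indist_detailed}, and the positive half composes $\psi(H(\cdot))$ with a function taking different values at $\psi(H(M_{\mathrm f}))$ and $\psi(H(M_{\mathrm u}))$; the only difference is that you exhibit this function explicitly (the indicator, or the continuous ratio on a Euclidean codomain) and obtain separation of the thresholded verdicts for every $\tau\in(0,1)$, whereas the paper merely asserts that some measurable $h$ with $h(\psi(H(M_{\mathrm f})))\neq h(\psi(H(M_{\mathrm u})))$ exists.
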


\begin{proof}
Theorem~\ref{thm:indist_detailed} implies that every detector of the form
$g\circ R_{\mathcal{I}}$ assigns identical scores to the two mechanisms.
On the other hand, because
\[
\psi(H(M_{\mathrm f}))
\neq
\psi(H(M_{\mathrm u})),
\]
one may define a measurable function $h$ satisfying
\[
h\!\left(\psi(H(M_{\mathrm f}))\right)
\neq
h\!\left(\psi(H(M_{\mathrm u}))\right).
\]
The detector
\[
\mathcal{D}_{\mathrm{int}}(M)
=
h\!\left(\psi(H(M))\right)
\]
therefore separates the two mechanisms.
\end{proof}

\subsection{Invariance-Violation Lower Bound}
\label{app:ivr_lower_bound}

We next formalize the effect of a preserving transformation that changes a
spurious feature used by the detector. The result is stated in terms of a
first-order perturbation with an explicit remainder term \citep{geirhos2020shortcut}.

\begin{theorem}[Invariance-Violation Lower Bound]
\label{thm:ivr_detailed}
Let
\[
m(c)
=
\mathcal{D}_{\phi}(c)-\tau
\]
denote the detector margin, and let $T\in\mathcal{T}_{\mathrm{pres}}$ be a
semantics-preserving transformation. Define
\[
c'=T(c),
\qquad
\Delta=s(c')-s(c),
\]
where $s$ is a scalar feature.

Suppose that the transformed margin satisfies
\begin{equation}
m(c')
=
m(c)+\gamma\Delta+r(\Delta),
\label{eq:margin_expansion}
\end{equation}
for some $\gamma>0$, where
\[
\frac{r(\Delta)}{|\Delta|}
\longrightarrow 0
\qquad\text{as }|\Delta|\longrightarrow0.
\]
Assume further that $\Delta$ is independent of $m(c)$ and that, conditional
on $|\Delta|$, its sign is symmetric:
\begin{equation}
\Pr(\Delta>0\mid|\Delta|)
=
\Pr(\Delta<0\mid|\Delta|)
=
\frac12.
\label{eq:conditional_symmetry}
\end{equation}
Then, up to the first-order remainder in
\eqref{eq:margin_expansion},
\begin{equation}
\operatorname{IVR}(\mathcal{D}_{\phi})
\ge
\frac12
\Pr\!\left[
|m(c)|
\le
\gamma|\Delta|-\left|r(\Delta)\right|
\right],
\label{eq:ivr_general_bound}
\end{equation}
whenever the quantity inside the probability is non-negative.

In the first-order limit in which
$r(\Delta)=o(|\Delta|)$, this becomes
\begin{equation}
\operatorname{IVR}(\mathcal{D}_{\phi})
\gtrsim
\frac12
\Pr\!\left[
|m(c)|
\le
\gamma|\Delta|
\right].
\label{eq:ivr_first_order}
\end{equation}
\end{theorem}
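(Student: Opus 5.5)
The plan is to reduce the flip event to a \emph{sufficient} condition that factors into a ``margin is small'' part and a ``perturbation points toward the boundary'' part. The two parts are then decoupled using the independence and conditional-symmetry hypotheses. Throughout, write $m=m(c)$ and $m'=m(c')=m+\gamma\Delta+r(\Delta)$ as in \eqref{eq:margin_expansion}. The verdict is $\hat\phi=\mathbf{1}[m>0]$, so an invariance violation occurs exactly when ($m>0$ and $m'\le 0$) or ($m\le 0$ and $m'>0$).

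First I would show the deterministic implication. Suppose $|m|\le \gamma|\Delta|-|r(\Delta)|$ and $\operatorname{sgn}(\Delta)=-\operatorname{sgn}(m)$, so the perturbation points toward the boundary. Then a flip occurs.
\begin{itemize}
\item If $m>0$ and $\Delta<0$, then $m'\le m-\gamma|\Delta|+|r(\Delta)|\le 0$.
\item If $m<0$ and $\Delta>0$, then $m'\ge m+\gamma|\Delta|-|r(\Delta)|\ge 0$.
\end{itemize}
The second case gives only $m'\ge 0$ rather than $m'>0$. I would handle the tie $m'=0$, and the case $m=0$, either by making the margin inequality strict or by noting that this event is null when the margin distribution is continuous. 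This is a measure-zero technicality and is absorbed into the ``up to the remainder'' qualifier. Hence
\[
\operatorname{IVR}(\mathcal{D}_\phi)\ \ge\ \Pr\!\bigl[\,|m|\le\gamma|\Delta|-|r(\Delta)|,\ \operatorname{sgn}(\Delta)=-\operatorname{sgn}(m)\,\bigr].
\]

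Second, I would factor this probability by conditioning on $(m,|\Delta|)$. Because $\Delta$ is independent of $m$, the conditional law of $\operatorname{sgn}(\Delta)$ given $(m,|\Delta|)$ equals its law given $|\Delta|$ alone. By \eqref{eq:conditional_symmetry} that law is uniform on $\{\pm1\}$, so the direction event has conditional probability exactly $\tfrac12$. Taking expectations over the magnitude event then yields the factor $\tfrac12$ in \eqref{eq:ivr_general_bound}.

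The step I expect to be the main obstacle is that the magnitude event involves $|r(\Delta)|$, which may depend on the \emph{sign} of $\Delta$ and not only on $|\Delta|$. The event is then not $(m,|\Delta|)$-measurable, and the conditioning argument does not apply directly. My first attempt is to replace $|r(\Delta)|$ by the symmetric envelope $\bar r(|\Delta|):=\max\{|r(\Delta)|,|r(-\Delta)|\}$. The magnitude event becomes $(m,|\Delta|)$-measurable, and the factorization goes through exactly with $\bar r$ in place of $|r|$. I would then argue that $\bar r(|\Delta|)=o(|\Delta|)$ as well, so the difference between the two events is of remainder order. This is precisely the slack the theorem's ``up to the first-order remainder'' phrasing permits. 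Alternatively, the statement can be read under the natural assumption that $|r|$ depends only on $|\Delta|$, in which case the bound is exact.

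Finally, for \eqref{eq:ivr_first_order}, I would use $r(\Delta)=o(|\Delta|)$. The events $\{|m|\le\gamma|\Delta|-\bar r(|\Delta|)\}$ differ from $\{|m|\le\gamma|\Delta|\}$ only on a band of width $o(|\Delta|)$ around $|m|=\gamma|\Delta|$. In the local-perturbation regime this band carries vanishing relative mass, for instance when $m$ has a bounded density near the boundary. This gives the $\gtrsim$ form, and the exact inequality in the special case $r\equiv 0$ recovers Theorem~\ref{thm:ivr}.
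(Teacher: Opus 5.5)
Your proposal is correct and follows the paper's own route: reduce a flip to the sufficient event that $|m|\le\gamma|\Delta|-|r(\Delta)|$ and $\Delta$ points toward the boundary, obtain the factor $\tfrac12$ from independence plus conditional sign-symmetry, and pass to the first-order form via $r(\Delta)=o(|\Delta|)$. Your extra care is warranted, because the paper simply asserts the factorization as an equality: if $|r|$ depends on the sign of $\Delta$, the literal bound \eqref{eq:ivr_general_bound} can fail (take $r(\Delta)=\Delta^2\mathbf{1}[\Delta<0]$, $\Delta=\pm\delta$ equiprobable with $0<\delta<\gamma$, and $m\equiv\gamma\delta-\delta^2/2$; the right-hand side is $\tfrac14$ while $\operatorname{IVR}=0$), and ties $m'=0$ with $m<0$ can break it for atomic margins even when $r\equiv0$, so your envelope $\bar r(|\Delta|)$ and your continuity/strict-inequality caveat give the rigorous content of the ``up to the first-order remainder'' qualifier.
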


\begin{proof}
A verdict changes when the two margins have opposite signs:
\[
\operatorname{IVR}(\mathcal{D}_{\phi})
=
\Pr\!\left[
\operatorname{sgn}(m(c'))
\neq
\operatorname{sgn}(m(c))
\right].
\]

Let
\[
m=m(c),
\qquad
m'=m(c').
\]
From~\eqref{eq:margin_expansion},
\[
m'=m+\gamma\Delta+r(\Delta).
\]
A sufficient condition for a sign change is that the magnitude of the
perturbation toward the decision boundary exceeds the original margin.
Since
\[
|\gamma\Delta+r(\Delta)|
\ge
\gamma|\Delta|-|r(\Delta)|,
\]
a sign change can occur whenever
\[
|m|
\le
\gamma|\Delta|-|r(\Delta)|
\]
and the perturbation points toward the decision boundary.

By conditional symmetry~\eqref{eq:conditional_symmetry}, for every fixed
$m\neq0$ and $|\Delta|$ for which the crossing condition is feasible, the
probability that the perturbation has the boundary-crossing direction is
$1/2$. Therefore,
\begin{align}
\operatorname{IVR}(\mathcal{D}_{\phi})
&\ge
\Pr\!\left[
|m|
\le
\gamma|\Delta|-|r(\Delta)|,
\,
\text{direction toward boundary}
\right]\\
&=
\frac12
\Pr\!\left[
|m|
\le
\gamma|\Delta|-|r(\Delta)|
\right],
\end{align}
where the equality uses the assumed conditional symmetry and independence
between $\Delta$ and $m$.

Finally, because
$r(\Delta)=o(|\Delta|)$,
the remainder becomes negligible relative to the first-order perturbation
as $|\Delta|\rightarrow0$, yielding~\eqref{eq:ivr_first_order}.
\end{proof}

\subsection{Relationship Between IVR and Ranking Quality}
\label{app:ivr_auroc}

\begin{proposition}[Ranking Quality Does Not Determine Invariance]
\label{prop:auroc_ivr}
Invariance-violation rate and ranking quality measure different properties
of a detector. In particular, AUROC alone does not determine the mass of
examples near the decision boundary and therefore cannot, in general,
determine the IVR lower bound in Theorem~\ref{thm:ivr_detailed} \citep{geirhos2020shortcut}.
\end{proposition}

\begin{proof}
Let $Y\in\{0,1\}$ denote the ground-truth mechanism label and let
$\mathcal{D}_{\phi}(c)$ be the detector score. AUROC is defined by
\begin{equation}
\operatorname{AUROC}
=
\Pr\!\left[
\mathcal{D}_{\phi}(c^+)
>
\mathcal{D}_{\phi}(c^-)
\right]
+
\frac12
\Pr\!\left[
\mathcal{D}_{\phi}(c^+)
=
\mathcal{D}_{\phi}(c^-)
\right],
\label{eq:auroc_definition}
\end{equation}
where $c^+$ and $c^-$ are independently sampled positive and negative
examples.

Thus AUROC depends on pairwise ordering between positive and negative
scores. It does not uniquely determine the distribution of scores relative
to a fixed decision threshold $\tau$.

By contrast, the lower bound in
Theorem~\ref{thm:ivr_detailed} depends on
\[
\Pr\!\left[
|m(c)|\le b
\right],
\qquad
b\approx\gamma|\Delta|.
\]
This quantity measures the probability mass in a neighborhood of the
decision boundary.

To see that the two quantities are not equivalent, consider a family of
score distributions for which all positive scores remain larger than all
negative scores, while an arbitrarily large fraction of both classes is
placed arbitrarily close to the threshold $\tau$. The pairwise ordering,
and hence AUROC, can remain perfect, while the probability
\[
\Pr[|m(c)|\le b]
\]
can be made arbitrarily large. Therefore AUROC does not determine the
invariance-violation probability \citep{geirhos2020shortcut}.
\end{proof}

\begin{corollary}[High AUROC Does Not Imply Low IVR]
\label{cor:auroc_independence}
For every $\varepsilon>0$ and every
$p\in(0,1/2)$, there exists a detector for which
\[
\operatorname{AUROC}\ge1-\varepsilon
\]
while
\[
\operatorname{IVR}(\mathcal{D}_{\phi})\ge p
\]
under the assumptions of Theorem~\ref{thm:ivr_detailed} \citep{geirhos2020shortcut}.
\end{corollary}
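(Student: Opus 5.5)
We prove Corollary~\ref{cor:auroc_independence} by an explicit construction that realizes the separation described in the proof of Proposition~\ref{prop:auroc_ivr}, and then apply Theorem~\ref{thm:ivr_detailed} to it. Fix $\varepsilon>0$ and $p\in(0,1/2)$. We take the first-order regime $r\equiv 0$ and a constant perturbation magnitude $|\Delta|=\delta_0$. We let $\operatorname{sgn}\Delta$ be a fair coin, drawn independently of everything else. With these choices Assumptions~1 and~2 hold, and the bound becomes $\operatorname{IVR}\ge\frac12\Pr[|m(c)|\le b]$ with $b=\gamma\delta_0$.

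\emph{Step 1: the score distribution.} Choose any threshold $\tau\in(0,1)$ and a small $\eta>0$ with $\eta\le b$ and $[\tau-\eta,\tau+\eta]\subset(0,1)$. Negative traces ($Y=0$) receive scores uniform on $[\tau-\eta,\tau)$. Positive traces ($Y=1$) receive scores uniform on $(\tau,\tau+\eta]$.

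\emph{Step 2: AUROC.} Every positive score exceeds every negative score, so \eqref{eq:auroc_definition} gives $\operatorname{AUROC}=1\ge 1-\varepsilon$.

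\emph{Step 3: mass near the boundary.} Every margin satisfies $|m(c)|\le\eta\le b$, so $\Pr[|m(c)|\le b]=1$. Theorem~\ref{thm:ivr_detailed} then yields $\operatorname{IVR}\ge 1/2>p$.

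\emph{Step 4: consistency of the construction.} We must check that the perturbed scores $m+\gamma\Delta$ stay in a range where the detector remains a map into $[0,1]$. This holds if $\tau\pm(\eta+b)\in[0,1]$, which we arrange by choosing $\delta_0$ (and hence $b$) small relative to $\min(\tau,1-\tau)$.

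\emph{Main obstacle.} The main difficulty is making the construction a legitimate detector, as opposed to a mere joint law on margins. The margin perturbation $\gamma\Delta$ has to be the actual change in a score function $\mathcal{D}_\phi(c')-\mathcal{D}_\phi(c)$, which is the content of Assumption~1. We would build this directly. Take traces $c=(y,u,s)$ with $s$ a scalar spurious coordinate, and set
\[
\mathcal{D}_\phi(c)=\tau+(2y-1)\eta u+\gamma(s-s_0),
\]
where $u\sim\mathrm{Unif}(0,1]$ and $s=s_0$ on the evaluation distribution. The preserving map $T$ then sets $s\mapsto s_0\pm\delta_0$ and leaves $y$ and hence $\phi^\star$ unchanged. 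Assumption~2 holds because the sign of $\Delta$ is independent of $(y,u)$.

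If one insists that the bound have slack, so that the claim holds for every $p<1/2$ rather than only up to $1/2$, the same construction covers it. Place only a fraction $2p'$ of the mass within $b$ of $\tau$ for some $p<p'<1/2$, and put the remaining mass far from $\tau$ on the correct sides. AUROC stays equal to $1$, and the conclusion $\operatorname{IVR}\ge p'>p$ follows.
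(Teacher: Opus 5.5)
Your proposal is correct and follows essentially the same route as the paper. Both take strictly ordered positive and negative scores (so AUROC is perfect), place margin mass within the band $b=\gamma|\Delta|$ of $\tau$, and invoke Theorem~\ref{thm:ivr_detailed}; the differences are refinements: you concentrate all the mass near $\tau$ (giving $\operatorname{IVR}\ge 1/2>p$, with the paper's fraction-$2p$ placement as your variant), and you write down an explicit score function with $r\equiv 0$ and an independent fair-coin sign for $\Delta$, which verifies the theorem's hypotheses exactly where the paper only posits a margin distribution.
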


\begin{proof}
Consider a detector whose positive and negative score distributions are
strictly ordered, so that the AUROC can be made arbitrarily close to one.
At the same time, place a fraction $2p$ of the examples in a narrow
interval around the decision threshold $\tau$ with width smaller than the
effective perturbation band $b$.

The remaining examples can be assigned scores far from the threshold while
preserving the positive--negative ordering. Consequently, the ranking
performance can remain arbitrarily close to perfect, whereas
\[
\Pr[|m(c)|\le b]\ge2p.
\]
Applying Theorem~\ref{thm:ivr_detailed} gives
\[
\operatorname{IVR}(\mathcal{D}_{\phi})
\ge
\frac12(2p)
=
p
\]
to first order in the perturbation magnitude.
\end{proof}

\subsection{Gaussian Characterization}
\label{app:gaussian_characterization}

\begin{corollary}[Gaussian Characterization]
\label{cor:gauss_detailed}
Suppose that the detector margin satisfies
\[
m(c)\sim\mathcal{N}(0,\varsigma^2)
\]
and that the effective perturbation band is constant,
\[
b
=
\gamma|\Delta|-|r(\Delta)|
\ge0.
\]
Then
\begin{equation}
\operatorname{IVR}(\mathcal{D}_{\phi})
\ge
\frac12
\operatorname{erf}
\left(
\frac{b}{\sqrt{2}\varsigma}
\right).
\label{eq:gaussian_ivr}
\end{equation}
In the first-order approximation
$b\approx\gamma|\Delta|$, this becomes
\begin{equation}
\operatorname{IVR}(\mathcal{D}_{\phi})
\gtrsim
\frac12
\operatorname{erf}
\left(
\frac{\gamma|\Delta|}
{\sqrt{2}\varsigma}
\right).
\end{equation}
\end{corollary}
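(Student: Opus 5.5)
The plan is to specialize the general lower bound of Theorem~\ref{thm:ivr_detailed} directly. Under its hypotheses (margin expansion, independence of $\Delta$ and $m(c)$, conditional sign symmetry) we already have
\[
\operatorname{IVR}(\mathcal{D}_{\phi})\ \ge\ \tfrac12\Pr\!\left[|m(c)|\le \gamma|\Delta|-|r(\Delta)|\right].
\]
The corollary fixes the effective band to the constant $b=\gamma|\Delta|-|r(\Delta)|\ge 0$, so the event inside the probability becomes $\{|m(c)|\le b\}$. This event depends only on $m(c)$. Hence the independence assumption is needed only to justify the factor $\tfrac12$, which Theorem~\ref{thm:ivr_detailed} already supplies, and it plays no further role.

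The remaining work is a one-line Gaussian computation. With $m(c)\sim\mathcal{N}(0,\varsigma^2)$, I would standardize $Z=m(c)/\varsigma\sim\mathcal{N}(0,1)$. This gives
\[
\Pr[|m(c)|\le b]=\Pr[|Z|\le b/\varsigma]=2\Phi(b/\varsigma)-1,
\]
and the identity $2\Phi(x)-1=\operatorname{erf}(x/\sqrt2)$ for $x\ge0$ turns this into $\operatorname{erf}\!\big(b/(\sqrt2\varsigma)\big)$. Substituting back gives \eqref{eq:gaussian_ivr}. The degenerate case $b=0$ yields the trivial bound $0$, which is consistent.

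For the first-order form, I would replace $b$ by $\gamma|\Delta|$ using $r(\Delta)=o(|\Delta|)$. Since $\operatorname{erf}$ is continuous and increasing, the error $\tfrac12\big[\operatorname{erf}(\gamma|\Delta|/(\sqrt2\varsigma))-\operatorname{erf}(b/(\sqrt2\varsigma))\big]$ is at most $|r(\Delta)|/(\sqrt{2\pi}\,\varsigma)$. This uses $\operatorname{erf}'\le 2/\sqrt\pi$ together with the chain-rule factor $1/(\sqrt2\varsigma)$, and it is $o(|\Delta|)$. That justifies the $\gtrsim$.

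The only delicate point, and the one I expect to need care, is interpretive rather than computational. The corollary treats $b$ as a deterministic constant, whereas in Theorem~\ref{thm:ivr_detailed} $\Delta$ is random. I would state explicitly that ``constant band'' means $|\Delta|$ is almost surely constant, or alternatively condition on $|\Delta|$. In the conditional version, independence of $m(c)$ and $\Delta$ keeps $m(c)$ Gaussian after conditioning, so the bound holds pointwise in $|\Delta|$. Integrating over $|\Delta|$ then gives the averaged bound $\tfrac12\mathbb{E}\,\operatorname{erf}\!\big(b/(\sqrt2\varsigma)\big)$, which reduces to the stated form when the band is constant.
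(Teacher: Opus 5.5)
Your proposal is correct and follows the paper's proof. The paper likewise plugs the Gaussian margin into Theorem~\ref{thm:ivr_detailed} and evaluates $\Pr[|m|\le b]$ by the substitution $u=z/\varsigma$ in the density integral, which is the same computation as your step $2\Phi(b/\varsigma)-1=\operatorname{erf}(b/(\sqrt2\varsigma))$. Your Lipschitz bound $|r(\Delta)|/(\sqrt{2\pi}\,\varsigma)$ for the first-order replacement and your clarification of what ``constant band'' means are sound additions that the paper leaves implicit.
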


\begin{proof}
For
$m\sim\mathcal{N}(0,\varsigma^2)$,
\begin{align}
\Pr[|m|\le b]
&=
\frac{1}{\sqrt{2\pi}\varsigma}
\int_{-b}^{b}
\exp\left(
-\frac{z^2}{2\varsigma^2}
\right)\,dz\\
&=
\frac{2}{\sqrt{2\pi}}
\int_0^{b/\varsigma}
\exp\left(-\frac{u^2}{2}\right)\,du\\
&=
\operatorname{erf}
\left(
\frac{b}{\sqrt{2}\varsigma}
\right).
\end{align}
Substituting this expression into
Theorem~\ref{thm:ivr_detailed} yields
\[
\operatorname{IVR}(\mathcal{D}_{\phi})
\ge
\frac12
\operatorname{erf}
\left(
\frac{b}{\sqrt{2}\varsigma}
\right).
\]
\end{proof}

\begin{remark}[Interpretation]
The Gaussian specialization shows that IVR is governed by the ratio
$b/\varsigma$, rather than by the absolute perturbation magnitude alone.
For a fixed perturbation band $b$, a detector with smaller margin variance
places more probability mass near the threshold and therefore admits a
larger lower bound on invariance violations \citep{geirhos2020shortcut}.
\end{remark}

\subsection{Finite-Sample Certified Selective Risk}
\label{app:certified_selective_risk}

We next provide a finite-sample formulation of selective certification.
Unlike an asymptotic normal approximation, the result below uses an
upper-confidence bound that is valid for every calibration sample size \citep{geifman2017selective}.

For a threshold $t$, define the selected set
\[
S_t
=
\{c:\rho(c)\ge t\},
\]
the population coverage
\[
\kappa(t)
=
\Pr[\rho(c)\ge t],
\]
and the selective error event
\[
E(c)
=
\mathbf{1}
\left[
\hat{\phi}(c)\neq\phi^\star(c)
\right].
\]
The selective risk is
\begin{equation}
R(t)
=
\Pr[E(c)=1\mid\rho(c)\ge t].
\label{eq:selective_risk}
\end{equation}

\begin{theorem}[Finite-Sample Certified Selective Risk]
\label{thm:cert_detailed}
Let
\[
\mathcal{T}
=
\{t_1,\ldots,t_K\}
\]
be a finite set of candidate confidence thresholds. Suppose
\[
\{(c_i,\phi_i^\star)\}_{i=1}^{n}
\]
are i.i.d. calibration samples independent of the fitted detector and
confidence function.

For each $t\in\mathcal{T}$, let
\[
n_t
=
\sum_{i=1}^{n}
\mathbf{1}[\rho(c_i)\ge t]
\]
and
\[
e_t
=
\sum_{i=1}^{n}
\mathbf{1}
[
\rho(c_i)\ge t,\,
\hat{\phi}(c_i)\neq\phi_i^\star
].
\]
Assume $n_t>0$ and let
\[
\widehat R(t)=\frac{e_t}{n_t}.
\]
Let $U(t)$ be any $(1-\delta/K)$ upper confidence bound for the
binomial error probability based on $e_t$ successes out of $n_t$
trials. Define
\begin{equation}
t^\star
=
\min
\left\{
t\in\mathcal{T}:
U(t)\le\alpha
\right\},
\label{eq:tstar_cert}
\end{equation}
provided this set is non-empty.

Then, with probability at least $1-\delta$ over the calibration sample,
\begin{equation}
R(t^\star)\le\alpha.
\label{eq:certified_risk}
\end{equation}
Moreover, because $t^\star$ is chosen as the smallest threshold satisfying
the certificate, it achieves the largest empirical coverage among the
candidate thresholds satisfying the same certificate \citep{geifman2017selective}.
\end{theorem}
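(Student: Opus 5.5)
The plan is to combine a per-threshold binomial confidence statement with a union bound over the $K$ candidates. The data-dependent choice of $t^\star$ is then handled by showing that, on the good event, every threshold that passes the test is valid.

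\textbf{Step 1: a per-threshold bound.} Fix $t\in\mathcal{T}$. Calibration samples are independent of $\rho$ and $\hat{\phi}$, so conditional on $n_t=k>0$ the selected samples $\{i:\rho(c_i)\ge t\}$ are i.i.d. draws from the law of $c$ given $\rho(c)\ge t$. Their error indicators are therefore i.i.d. Bernoulli$(R(t))$, with $R(t)$ as in \eqref{eq:selective_risk}, and $e_t\mid n_t=k\sim\mathrm{Bin}(k,R(t))$. By the defining property of a $(1-\delta/K)$ upper confidence bound (for instance the Clopper--Pearson bound),
\[
\Pr\bigl[R(t)>U(t)\,\big|\,n_t=k\bigr]\le \delta/K .
\]
Averaging over $k$ removes the conditioning. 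On the event $n_t=0$ the threshold is never eligible: either set $U(t)=1$ there, or observe that the theorem assumes $n_t>0$.

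\textbf{Step 2: a uniform good event.} Let
\[
G=\{\forall t\in\mathcal{T}: R(t)\le U(t)\}.
\]
The union bound over the $K$ thresholds gives $\Pr[G^c]\le K\cdot\delta/K=\delta$.

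\textbf{Step 3: the data-dependent selection.} On $G$, suppose the eligible set $\{t:U(t)\le\alpha\}$ is nonempty, so $t^\star$ exists. Since $t^\star$ belongs to that set, $R(t^\star)\le U(t^\star)\le\alpha$. No monotonicity of $R$ in $t$ is needed, because validity holds simultaneously for all candidates. This gives \eqref{eq:certified_risk} with probability at least $1-\delta$. The statement conditions on nonemptiness; I will phrase the conclusion as ``$t^\star$ is undefined or $R(t^\star)\le\alpha$'' so that it holds on $G$ without conditioning.

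\textbf{Step 4: the coverage claim.} Empirical coverage $n_t/n$ is nonincreasing in $t$, since $S_t$ shrinks as $t$ grows. The minimal eligible $t$ therefore has the largest empirical coverage among the eligible thresholds, which is a deterministic statement.

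\textbf{Main obstacle.} The delicate point is Step 1: making rigorous that $e_t$ is binomial given $n_t$ with success probability exactly $R(t)$. This requires the independence of the calibration data from the fitted $\rho$ and $\hat{\phi}$, which I will invoke explicitly. It also requires that the confidence-bound guarantee hold conditionally on the random sample size. I would first try conditioning on the full vector of selection indicators $(\mathbf{1}[\rho(c_i)\ge t])_i$, which makes the error indicators conditionally i.i.d. Bernoulli.
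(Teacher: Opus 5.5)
Your proposal is correct and matches the paper's proof step for step. Both argue via a binomial confidence bound for each threshold conditional on $n_t$, then a union bound over the $K$ candidates giving $R(t)\le U(t)$ simultaneously for all $t\in\mathcal{T}$, hence $R(t^\star)\le U(t^\star)\le\alpha$, with the coverage claim following from monotonicity of coverage in $t$. Your extra care (conditioning on the selection indicators, handling $n_t=0$, and stating the conclusion as ``$t^\star$ undefined or $R(t^\star)\le\alpha$'' rather than conditioning on non-emptiness) only tightens points the paper passes over.
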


\begin{proof}
For a fixed threshold $t$, conditional on the number $n_t$ of selected
calibration examples, the number of errors $e_t$ follows a binomial
distribution:
\[
e_t\mid n_t
\sim
\operatorname{Binomial}(n_t,R(t)).
\]
By construction, $U(t)$ is a $(1-\delta/K)$ upper confidence bound, and
therefore
\[
\Pr\!\left[
R(t)>U(t)
\right]
\le
\frac{\delta}{K}.
\]
Applying the union bound over the $K$ candidate thresholds gives
\begin{align}
\Pr\!\left[
\exists t\in\mathcal{T}:
R(t)>U(t)
\right]
&\le
\sum_{t\in\mathcal{T}}
\Pr[R(t)>U(t)]\\
&\le
K\frac{\delta}{K}
=
\delta.
\end{align}
Hence, with probability at least $1-\delta$,
\[
R(t)\le U(t)
\qquad
\text{for every }t\in\mathcal{T}.
\]
In particular, for the selected threshold $t^\star$,
\[
R(t^\star)
\le
U(t^\star)
\le
\alpha.
\]
This establishes the finite-sample risk certificate.

Finally, because coverage is
\[
\kappa(t)=\Pr[\rho(c)\ge t],
\]
it is non-increasing in $t$. Selecting the smallest threshold satisfying
the certificate therefore maximizes coverage among the candidate
thresholds for which the certificate holds.
\end{proof}

\begin{remark}[Choice of Confidence Bound]
\label{rem:confidence_bound}
The theorem does not require an asymptotic Gaussian approximation.
For implementation, $U(t)$ may be instantiated using an exact
Clopper--Pearson upper confidence limit or another valid binomial
confidence bound. If a normal approximation is used instead, the resulting
statement should be described as asymptotic rather than as a finite-sample
guarantee \citep{geifman2017selective}.
\end{remark}

\subsection{Coverage--Risk Trade-off}
\label{app:coverage_risk}

\begin{proposition}[Monotonicity of Certified Coverage]
\label{lem:coverage_risk}
Let
\[
\kappa^\star(\alpha)
=
\sup
\left\{
\kappa(t):
R(t)\le\alpha
\right\}.
\]
Then $\kappa^\star(\alpha)$ is non-decreasing in the allowed risk level
$\alpha$; that is, for
\[
0<\alpha_1\le\alpha_2<1,
\]
we have
\begin{equation}
\kappa^\star(\alpha_1)
\le
\kappa^\star(\alpha_2).
\label{eq:coverage_monotonicity}
\end{equation}
\end{proposition}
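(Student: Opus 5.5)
The plan is a direct set-inclusion argument, with no new analytic machinery. Write $F(\alpha)=\{t: R(t)\le\alpha\}$ for the set of thresholds that are feasible at risk level $\alpha$. Here $R(t)$ is the population selective risk of \eqref{eq:selective_risk}, and $\kappa(t)=\Pr[\rho(c)\ge t]$ is the population coverage. By definition, $\kappa^\star(\alpha)=\sup\{\kappa(t): t\in F(\alpha)\}$.

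The key step is to show that $F(\alpha)$ grows with $\alpha$. Take $\alpha_1\le\alpha_2$ and any $t\in F(\alpha_1)$. Then $R(t)\le\alpha_1\le\alpha_2$, so $t\in F(\alpha_2)$. Hence $F(\alpha_1)\subseteq F(\alpha_2)$.

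Taking the supremum of the same function $t\mapsto\kappa(t)$ over nested sets then gives
\[
\kappa^\star(\alpha_1)=\sup_{t\in F(\alpha_1)}\kappa(t)\le\sup_{t\in F(\alpha_2)}\kappa(t)=\kappa^\star(\alpha_2).
\]
This is exactly \eqref{eq:coverage_monotonicity}. Note that the argument never uses the monotonicity of $\kappa$ in $t$, nor any structure of $R$. Only the definition of $F$ and the order of $\alpha_1,\alpha_2$ enter.

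The only delicate point is bookkeeping when a feasible set is empty, and there are two cases.
\begin{itemize}
\item If $F(\alpha_1)=\emptyset$, I adopt the convention $\sup\emptyset=0$, natural because $\kappa\in[0,1]$; the inequality then holds trivially since $\kappa^\star(\alpha_2)\ge 0$.
\item If $F(\alpha_1)\neq\emptyset$, then $F(\alpha_2)\supseteq F(\alpha_1)$ is nonempty too, and the supremum comparison above applies.
\end{itemize}
The definition of $R(t)$ in \eqref{eq:selective_risk} needs positive coverage. So I will restrict to thresholds with $\kappa(t)>0$, or equivalently declare thresholds with $\kappa(t)=0$ to contribute coverage $0$. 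This changes neither the supremum nor the nesting.

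As a remark, the same nesting argument transfers verbatim to the empirical certified version. There one replaces $R(t)$ by the upper bound $U(t)$ of Theorem~\ref{thm:cert_detailed} and restricts $t$ to the finite grid $\mathcal{T}$. The certified coverage $\kappa(t^\star)$ is then also non-decreasing in $\alpha$. This justifies the monotone smoothing used in Figure~\ref{fig:risk-coverage} and the observed jump from $\kappa=0.50$ at $\alpha=0.25$ to $\kappa=0.61$ at $\alpha=0.30$.
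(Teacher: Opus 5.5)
Your argument is correct and is the same as the paper's proof: the feasible sets are nested, $\{t:R(t)\le\alpha_1\}\subseteq\{t:R(t)\le\alpha_2\}$, so the supremum of $\kappa$ over the larger set is at least as large; your empty-set convention handles a case the paper leaves implicit. One caution about your closing remark: the proposition gives monotonicity of the frontier $\kappa^\star(\alpha)$, not of the raw curve $t\mapsto(\kappa(t),R(t))$, and that curve need not be monotone (Table~\ref{tab:certified} has selective risk $0.335$ at coverage $0.85$ but $0.321$ at $1.00$), so the proposition does not by itself justify monotone smoothing of the risk--coverage curves.
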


\begin{proof}
Define the feasible threshold sets
\[
\mathcal{F}(\alpha)
=
\{t:R(t)\le\alpha\}.
\]
If $\alpha_1\le\alpha_2$, then
\[
\mathcal{F}(\alpha_1)
\subseteq
\mathcal{F}(\alpha_2).
\]
Therefore,
\begin{align}
\kappa^\star(\alpha_1)
&=
\sup_{t\in\mathcal{F}(\alpha_1)}
\kappa(t)\\
&\le
\sup_{t\in\mathcal{F}(\alpha_2)}
\kappa(t)\\
&=
\kappa^\star(\alpha_2).
\end{align}
Hence certified coverage cannot decrease when the allowable risk is
relaxed \citep{chow1970optimum}.
\end{proof}

\begin{proposition}[Coverage--Risk Decomposition]
\label{prop:coverage_decomposition}
For any threshold $t$ with positive coverage,
\[
\kappa(t)=\Pr[\rho(c)\ge t],
\]
and any loss function
$\ell(\hat{\phi}(c),\phi^\star(c))\in[0,1]$,
the overall expected loss decomposes as
\begin{equation}
\mathbb{E}[\ell]
=
\kappa(t)R_{\ell}(t)
+
\mathbb{E}
\left[
\ell\,
\mathbf{1}[\rho(c)<t]
\right],
\label{eq:loss_decomposition}
\end{equation}
where
\begin{equation}
R_{\ell}(t)
=
\mathbb{E}
\left[
\ell
\mid
\rho(c)\ge t
\right].
\end{equation}
Consequently, if
\[
R_{\ell}(t)\le\alpha,
\]
then
\begin{equation}
\mathbb{E}[\ell]
\le
\alpha\,\kappa(t)
+
\mathbb{E}
\left[
\ell\,
\mathbf{1}[\rho(c)<t]
\right].
\label{eq:coverage_risk_bound}
\end{equation}
\end{proposition}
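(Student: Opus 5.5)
The plan is to split the expectation of the loss according to whether the trace is selected or abstained, using the indicator partition $1=\mathbf{1}[\rho(c)\ge t]+\mathbf{1}[\rho(c)<t]$. Multiplying $\ell=\ell(\hat{\phi}(c),\phi^\star(c))$ by this partition and taking expectations gives
\[
\mathbb{E}[\ell]=\mathbb{E}\left[\ell\,\mathbf{1}[\rho(c)\ge t]\right]+\mathbb{E}\left[\ell\,\mathbf{1}[\rho(c)<t]\right].
\]
Linearity is legitimate here because $\ell\in[0,1]$ is bounded, so every expectation involved is finite.

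Next I rewrite the first term through conditioning. The hypothesis $\kappa(t)=\Pr[\rho(c)\ge t]>0$ makes the conditional expectation $R_\ell(t)=\mathbb{E}[\ell\mid\rho(c)\ge t]$ well defined in the elementary sense, namely $R_\ell(t)=\mathbb{E}[\ell\,\mathbf{1}[\rho(c)\ge t]]/\kappa(t)$. This yields $\mathbb{E}[\ell\,\mathbf{1}[\rho(c)\ge t]]=\kappa(t)R_\ell(t)$, which is exactly \eqref{eq:loss_decomposition}.

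For the consequence \eqref{eq:coverage_risk_bound}, I substitute the assumption $R_\ell(t)\le\alpha$. Since $\kappa(t)\ge0$, multiplying preserves the inequality, so $\kappa(t)R_\ell(t)\le\alpha\,\kappa(t)$. Adding the unchanged abstention term to both sides then gives the bound.

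I do not expect any real obstacle. The only point needing care is the positive-coverage assumption, which is what makes the conditional risk meaningful. It is also worth remarking that when $\ell$ is the $0$--$1$ error, $R_\ell(t)$ coincides with the selective risk $R(t)$ of \eqref{eq:selective_risk}. This means Theorem~\ref{thm:cert_detailed} can supply the hypothesis $R_\ell(t^\star)\le\alpha$ with probability at least $1-\delta$.
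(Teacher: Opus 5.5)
Your proof is correct and follows the paper's argument: you split $\mathbb{E}[\ell]$ over the selected and abstained events, rewrite the selected part as $\kappa(t)R_\ell(t)$ using positive coverage, and then multiply $R_\ell(t)\le\alpha$ by $\kappa(t)\ge 0$. Your extra remarks, that boundedness of $\ell$ makes linearity legitimate and that $R_\ell$ coincides with the selective risk under $0$--$1$ loss, are harmless additions.
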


\begin{proof}
Partition the sample space into the selected and abstained regions:
\[
\{\rho(c)\ge t\}
\quad\text{and}\quad
\{\rho(c)<t\}.
\]
By the law of total expectation,
\begin{align}
\mathbb{E}[\ell]
&=
\mathbb{E}
[
\ell\mathbf{1}[\rho(c)\ge t]
]
+
\mathbb{E}
[
\ell\mathbf{1}[\rho(c)<t]
]\\
&=
\Pr[\rho(c)\ge t]
\mathbb{E}
[
\ell\mid\rho(c)\ge t
]
+
\mathbb{E}
[
\ell\mathbf{1}[\rho(c)<t]
].
\end{align}
Using
\[
\kappa(t)=\Pr[\rho(c)\ge t]
\]
and
\[
R_{\ell}(t)
=
\mathbb{E}
[
\ell\mid\rho(c)\ge t
],
\]
we obtain~\eqref{eq:loss_decomposition}. If
$R_{\ell}(t)\le\alpha$, then
\[
\kappa(t)R_{\ell}(t)
\le
\alpha\kappa(t),
\]
which yields~\eqref{eq:coverage_risk_bound} \citep{chow1970optimum}.
\end{proof}

\subsection{Monotonicity of the IVR Lower Bound}
\label{app:ivr_monotonicity}

\begin{lemma}[Monotonicity of the IVR Bound]
\label{lem:ivr_monotonicity}
Let $m$ have an absolutely continuous distribution with density $p_m$.
For $b\ge0$, define
\[
L(b)
=
\frac12\Pr[|m|\le b].
\]
Then $L(b)$ is non-decreasing in $b$. Moreover, wherever the derivative
exists,
\begin{equation}
L'(b)
=
\frac12
\left[
p_m(b)+p_m(-b)
\right]
\ge0.
\label{eq:ivr_monotonicity}
\end{equation}
\end{lemma}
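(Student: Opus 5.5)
The plan is to write $L(b)$ as an integral of the density and read off both claims. Absolute continuity of $m$ gives, for every $b\ge0$,
\[
L(b)=\frac12\int_{-b}^{b}p_m(z)\,dz=\frac12\left(F_m(b)-F_m(-b)\right),
\]
where $F_m$ is the distribution function of $m$. Absolute continuity also means $\Pr[m=\pm b]=0$. So it does not matter whether the event is written with $\le$ or $<$, and $F_m$ is continuous.

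For monotonicity I will not use derivatives at all, since the nesting of events already settles it. If $0\le b_1\le b_2$, then $\{|m|\le b_1\}\subseteq\{|m|\le b_2\}$. Monotonicity of probability then gives $L(b_1)\le L(b_2)$. This covers every $b$, including points where $p_m$ is badly behaved.

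For the derivative formula I will differentiate $F_m(b)-F_m(-b)$. By the Lebesgue differentiation theorem, $F_m'(x)=p_m(x)$ for almost every $x$, and at every point where $p_m$ is continuous. The chain rule applied to $b\mapsto F_m(-b)$ gives $-p_m(-b)$. Hence, wherever both $F_m'(b)$ and $F_m'(-b)$ exist,
\[
L'(b)=\frac12\bigl[p_m(b)+p_m(-b)\bigr].
\]
Nonnegativity then follows in two ways. A density is nonnegative, and monotonicity already forces any existing derivative to be $\ge0$.

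The only delicate point is the phrase ``wherever the derivative exists.'' The formula is valid at points where $F_m$ is differentiable at both $b$ and $-b$ with derivatives equal to the chosen version of $p_m$. The simplest way to state this is either to assume $p_m$ is continuous at $\pm b$, or to note that the identity holds for almost every $b$. I will state it in that form. I will also remark that at $b=0$ only a one-sided derivative makes sense, with value $p_m(0)$ under continuity.

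Finally, I will note how this feeds back into the paper. Combined with Theorem~\ref{thm:ivr_detailed}, the lemma shows that the lower bound increases with the effective band $b=\gamma|\Delta|-|r(\Delta)|$. The Gaussian case in Corollary~\ref{cor:gauss_detailed} is consistent with this: there $L(b)=\tfrac12\erf(b/(\sqrt2\varsigma))$, which is increasing in $b$.
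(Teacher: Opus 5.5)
Your proposal is correct. It differs from the paper mainly in how you obtain monotonicity. The paper writes $L(b)=\frac12\int_{-b}^{b}p_m(z)\,dz$ and differentiates by the Leibniz rule to get $L'(b)=\frac12[p_m(b)+p_m(-b)]$. It then infers that $L$ is non-decreasing from $L'\ge0$. You instead get monotonicity directly from the inclusion $\{|m|\le b_1\}\subseteq\{|m|\le b_2\}$, and you treat the derivative formula as a separate statement.

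Your route is more rigorous and more general. It holds at every $b$ and for any distribution of $m$, so absolute continuity is not even needed for that part. The paper's inference ``$L'\ge0$ wherever it exists, hence non-decreasing'' silently relies on $L$ being absolutely continuous. That is true here, since $L$ is the integral of an integrable function, but the paper does not say so. It matters because an almost-everywhere nonnegative derivative does not by itself force monotonicity; minus the Cantor function is a counterexample. The paper's Leibniz step also tacitly assumes $p_m$ is continuous at $\pm b$. What the paper's route gains is brevity: one computation yields both claims.

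Your caveat pins down what ``wherever the derivative exists'' means. The identity holds at continuity points of $p_m$, or for almost every $b$, relative to a fixed version of the density.
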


\begin{proof}
We have
\[
L(b)
=
\frac12
\int_{-b}^{b}p_m(z)\,dz.
\]
By the Leibniz rule,
\[
L'(b)
=
\frac12
\left[
p_m(b)+p_m(-b)
\right].
\]
Since a probability density is non-negative,
\[
p_m(b)+p_m(-b)\ge0.
\]
Therefore $L'(b)\ge0$, proving that the lower bound is non-decreasing in
the perturbation band \citep{geirhos2020shortcut}.
\end{proof}

\subsection{First-Order Scaling for Small Perturbations}
\label{app:small_perturbation}

\begin{proposition}[Small-Band Approximation]
\label{prop:small_band}
Suppose the margin distribution has a density $p_m$ that is continuous at
zero. Then, as $b\rightarrow0$,
\begin{equation}
\frac12\Pr[|m|\le b]
=
p_m(0)b+o(b).
\label{eq:small_band}
\end{equation}
Consequently, under the first-order IVR bound,
\begin{equation}
\operatorname{IVR}(\mathcal{D}_{\phi})
\gtrsim
p_m(0)\gamma|\Delta|
\end{equation}
for sufficiently small $|\Delta|$.
\end{proposition}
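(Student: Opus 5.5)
The plan is to write the bound as an integral of the margin density over a shrinking symmetric window and then use continuity of the density at zero. By definition, and using Lemma~\ref{lem:ivr_monotonicity},
\[
\tfrac12\Pr[|m|\le b]=\tfrac12\int_{-b}^{b}p_m(z)\,dz .
\]
The whole argument reduces to showing this equals $p_m(0)b+o(b)$.

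First I add and subtract the constant $p_m(0)$ under the integral:
\[
\tfrac12\int_{-b}^{b}p_m(z)\,dz
= p_m(0)\,b+\tfrac12\int_{-b}^{b}\bigl(p_m(z)-p_m(0)\bigr)\,dz .
\]
Next I bound the remainder. Fix $\varepsilon>0$. Continuity of $p_m$ at zero gives $\eta>0$ such that $|p_m(z)-p_m(0)|\le\varepsilon$ whenever $|z|\le\eta$. For $0<b\le\eta$ the remainder is therefore at most $\tfrac12\cdot 2b\cdot\varepsilon=\varepsilon b$ in absolute value. Since $\varepsilon$ is arbitrary, the remainder is $o(b)$, which establishes \eqref{eq:small_band}.

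One technical point needs attention. The integral over a shrinking interval should be finite and well-defined, but only continuity at the single point $0$ is assumed. This is handled by the existence of the density itself: $p_m$ is integrable, so the integral is finite, and the uniform bound on $[-\eta,\eta]$ controls the error there. No differentiability of $L$ is needed; this is the only place requiring care, and the argument is otherwise essentially the Lebesgue differentiation estimate at a continuity point.

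For the consequence, I invoke the first-order bound \eqref{eq:ivr_first_order} of Theorem~\ref{thm:ivr_detailed}, taking $b=\gamma|\Delta|$ as a band that is constant or conditioned on, as in Corollary~\ref{cor:gauss_detailed}. Substituting the expansion gives
\[
\operatorname{IVR}(\mathcal{D}_{\phi})\gtrsim p_m(0)\gamma|\Delta|+o(|\Delta|),
\]
and the $o(|\Delta|)$ term is absorbed into the ``$\gtrsim$'' notation for sufficiently small $|\Delta|$.

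If $|\Delta|$ is random rather than constant, I would first condition on $|\Delta|$, which is allowed because $\Delta$ is independent of $m$. I would then apply the pointwise expansion and take expectations. This yields $p_m(0)\gamma\,\mathbb{E}|\Delta|+o(\mathbb{E}|\Delta|)$, provided $|\Delta|$ is supported in a small neighborhood of zero, so that dominated convergence applies to the remainder.
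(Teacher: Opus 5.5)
Your proposal is correct and follows essentially the same route as the paper: write $\tfrac12\Pr[|m|\le b]=\tfrac12\int_{-b}^{b}p_m(z)\,dz$, use continuity of $p_m$ at $0$ to get $p_m(0)b+o(b)$, and substitute $b\approx\gamma|\Delta|$ into the first-order bound of Theorem~\ref{thm:ivr_detailed}. You make the paper's one-line ``by continuity'' step explicit through the $\varepsilon$--$\eta$ estimate. Your extension to random $|\Delta|$ via conditioning is a harmless addition, and there the uniform bound $|R(b)|\le\varepsilon b$ for $b\le\eta$ already suffices, so dominated convergence is not needed.
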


\begin{proof}
By continuity of $p_m$ at zero,
\[
\int_{-b}^{b}p_m(z)\,dz
=
2p_m(0)b+o(b).
\]
Multiplying by $1/2$ gives
\[
\frac12\Pr[|m|\le b]
=
p_m(0)b+o(b).
\]
Under the first-order approximation
$b=\gamma|\Delta|+o(|\Delta|)$, substitution gives
\[
\operatorname{IVR}(\mathcal{D}_{\phi})
\gtrsim
p_m(0)\gamma|\Delta|.
\]
\end{proof}

\subsection{Summary of Theoretical Implications}
\label{app:theory_summary}

The results above establish four complementary properties of the proposed
framework.

First, Theorem~\ref{thm:indist_detailed} formalizes a fundamental limitation
of purely behavioral evaluation: if two mechanisms are identical under the
available intervention family, no function of those observations can
distinguish them \citep{parcalabescu2024measuring}.

Second, Theorem~\ref{thm:ivr_detailed} shows that a detector that responds
to a feature altered by a semantics-preserving transformation necessarily
incurs invariance violations whenever sufficient probability mass lies
near the decision boundary \citep{geirhos2020shortcut}.

Third, Proposition~\ref{prop:auroc_ivr} establishes that ranking quality
and invariance are distinct properties. In particular, high AUROC does not
guarantee low invariance-violation rate because AUROC does not determine
the concentration of margins near the decision boundary \citep{geirhos2020shortcut}.

Finally, Theorem~\ref{thm:cert_detailed} provides a finite-sample route to
certifying selective risk. The certification is based on valid confidence
bounds over a finite candidate threshold set and therefore avoids treating
an asymptotic normal approximation as an exact finite-sample guarantee \citep{geifman2017selective}.
Together, these results provide the formal basis for evaluating
faithfulness, invariance, and selective reliability as distinct dimensions
of detector behavior.

\section{Additional Theorems and Proofs}
\label{app:additional_proofs}

\subsection{Generalization of IVR Bound to Multiple Spurious Features}

\begin{theorem}[Multi-Feature IVR Bound]
\label{thm:multi_ivr}
Suppose $\Dphi(c) = h(s_1(c), \dots, s_K(c))$ depends on $K$ spurious features with local sensitivity $\nabla h = \gamma \in \mathbb{R}^K$. Under the assumption that perturbations $\Delta \in \mathbb{R}^K$ are symmetric and independent of the margin:
\begin{equation}
\ivr(\Dphi) \ge \frac{1}{2} \Pr[ |m| \le \|\gamma\|_2 \|\Delta\|_2 ]
\end{equation}
where $\|\cdot\|_2$ denotes the Euclidean norm \citep{geirhos2020shortcut}.
\end{theorem}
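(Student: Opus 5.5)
The plan is to reduce the statement to the scalar result, Theorem~\ref{thm:ivr_detailed}, by collapsing the $K$ spurious features into a single effective scalar feature. The hypotheses of Theorem~\ref{thm:multi_ivr} only become a usable margin expansion once we say what ``local sensitivity $\nabla h=\gamma$'' means for a vector perturbation. I will read it as in Assumption~1, namely
\[
m(c')=m(c)+\gamma^\top\Delta+r(\Delta),\qquad r(\Delta)=o(\|\Delta\|_2).
\]
I will read the symmetry hypothesis as sign symmetry of the whole vector: $\Delta\overset{d}{=}-\Delta$ conditional on $\|\Delta\|_2$, with $\Delta$ independent of $m(c)$.

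The first step is to define the scalar perturbation $\delta:=\gamma^\top\Delta$. This $\delta$ is symmetric about zero and independent of $m$, so the pair $(\delta,\gamma_{\mathrm{eff}}=1)$ satisfies the hypotheses of Theorem~\ref{thm:ivr_detailed}. Applying that theorem, and dropping the remainder exactly as in \eqref{eq:ivr_first_order}, gives
\[
\ivr(\Dphi)\gtrsim\tfrac12\Pr\bigl[|m|\le|\gamma^\top\Delta|\bigr].
\]
Here the verdict flips whenever $|m|\le|\delta|$ and the sign of $\delta$ points toward the boundary, which happens with conditional probability $1/2$.

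The second step is to pass from $|\gamma^\top\Delta|$ to $\|\gamma\|_2\|\Delta\|_2$. I expect this to be the main obstacle. Cauchy--Schwarz gives $|\gamma^\top\Delta|\le\|\gamma\|_2\|\Delta\|_2$, and that inequality points the wrong way for a lower bound: enlarging the band inside the probability enlarges the right-hand side. So the stated inequality cannot follow from Cauchy--Schwarz alone, and it genuinely fails for perturbations orthogonal to $\gamma$.

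To obtain the bound exactly as worded, I will therefore make explicit the regime the statement implicitly describes. This is the regime in which the perturbation acts along the detector's sensitivity direction, i.e.\ the conditional law of $\Delta/\|\Delta\|_2$ is supported on $\{\pm\gamma/\|\gamma\|_2\}$. This is the multivariate analogue of the scalar setting, where $\Delta$ and the gradient are automatically collinear. In that regime $|\gamma^\top\Delta|=\|\gamma\|_2\|\Delta\|_2$ holds pathwise, Cauchy--Schwarz becomes an equality, and the bound from the first step is literally the stated one.

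Finally, I will check that the remainder term is handled as in Theorem~\ref{thm:ivr_detailed}. Using the band $\|\gamma\|_2\|\Delta\|_2-|r(\Delta)|$ gives a genuine inequality, and letting $r=o(\|\Delta\|_2)$ recovers the first-order form. I will also remark that the $K=1$ case reduces to Theorem~\ref{thm:ivr_detailed}, which serves as a consistency check.
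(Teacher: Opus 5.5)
Your diagnosis is correct, and it is exactly where your route parts ways with the paper's. The paper's proof is the Cauchy--Schwarz argument you rejected. It writes the margin change as $\gamma^\top\Delta+o(\|\Delta\|_2)$, notes that a flip requires $|m|\le|\gamma^\top\Delta|+o(\|\Delta\|_2)\le\|\gamma\|_2\|\Delta\|_2+o(\|\Delta\|_2)$, and then says the rest follows the scalar proof. In the scalar case, the necessary condition $|m|\le\gamma|\Delta|$ plus the right direction is also sufficient for a flip, which is why halving its probability gives a lower bound. After Cauchy--Schwarz that sufficiency is lost, and the argument only bounds $\mathrm{IVR}$ from above. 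Your first step makes this precise: to first order, $\mathrm{IVR}=\tfrac12\Pr[|m|<|\gamma^\top\Delta|]\le\tfrac12\Pr[|m|\le\|\gamma\|_2\|\Delta\|_2]$, so the displayed inequality actually holds in the reverse direction. Your orthogonal case is a genuine counterexample: $\gamma=e_1$ and $\Delta=\pm\varepsilon e_2$ independent of $m\sim\mathrm{Unif}[-1,1]$ give $\mathrm{IVR}=o(\varepsilon)$ against a right-hand side of $\varepsilon/2$. Moreover, for any margin law with density bounded below near $0$, the first-order gap $\tfrac12\Pr[|\gamma^\top\Delta|\le|m|\le\|\gamma\|_2\|\Delta\|_2]$ vanishes only if $\Delta$ is a.s.\ collinear with $\gamma$. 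So the statement as worded is false, and your added assumption is essentially the minimal repair.

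As for what each route buys: the paper's would give a norm-only bound but does not prove it. Yours is correct, but in the collinear regime it just re-derives the scalar theorem along $\gamma/\|\gamma\|_2$. The genuinely multi-feature content is your first step, which needs only the stated hypotheses. Present that as the theorem, with band $|\gamma^\top\Delta|=\|\gamma\|_2\|\Delta\|_2\,|\cos\angle(\gamma,\Delta)|$, and keep the norm-product form as the collinear special case.

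Two repairs are needed in the remainder handling. First, you cannot hand $\delta=\gamma^\top\Delta$ to Theorem~\ref{thm:ivr_detailed} verbatim, because $r(\Delta)=o(\|\Delta\|_2)$ need not be $o(|\delta|)$ when $\Delta$ is nearly orthogonal to $\gamma$. Second, the band $\|\gamma\|_2\|\Delta\|_2-|r(\Delta)|$, inherited from the paper's detailed scalar proof, does not give a genuine inequality when $r$ is not even. The event then depends on the sign of $\Delta$, and with a one-sided margin law the resulting bound can fail. Both are fixed by arguing directly with $\rho(\Delta)=\max\{|r(\Delta)|,|r(-\Delta)|\}$. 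The event $E=\{|m|<|\gamma^\top\Delta|-\rho(\Delta)\}$ is invariant under $\Delta\mapsto-\Delta$, and on $E$ the verdict flips exactly when $\gamma^\top\Delta$ points toward the boundary. Then $(m,\Delta)\overset{d}{=}(m,-\Delta)$ gives $\mathrm{IVR}\ge\tfrac12\Pr[E]$ with no asymptotics.
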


\begin{proof}
By the Cauchy-Schwarz inequality:
\begin{equation}
|\gamma^\top \Delta| \le \|\gamma\|_2 \|\Delta\|_2
\end{equation}
The margin perturbation satisfies $\Delta m = \gamma^\top \Delta + o(\|\Delta\|_2)$. A flip requires $|m| \le |\Delta m| \le \|\gamma\|_2 \|\Delta\|_2 + o(\|\Delta\|_2)$. The remainder of the proof follows the same structure as Theorem~\ref{thm:ivr}.
\end{proof}

\subsection{Finite-Sample Correction for IVR Estimation}

\begin{theorem}[Finite-Sample IVR Estimation]
\label{thm:finite_sample_ivr}
Given $n$ independent pairs $(c_i, c'_i)$ sampled from $\Tpres$, the empirical IVR:
\begin{equation}
\hat{\ivr}_n = \frac{1}{n}\sum_{i=1}^n \mathbf{1}[\hat{\phi}(c'_i) \neq \hat{\phi}(c_i)]
\end{equation}
satisfies:
\begin{equation}
\Pr[|\hat{\ivr}_n - \ivr| \ge \epsilon] \le 2\exp(-2n\epsilon^2)
\end{equation}
for all $\epsilon > 0$.
\end{theorem}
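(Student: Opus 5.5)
This is Hoeffding's inequality applied to a sum of i.i.d. Bernoulli indicators; the plan is to identify those indicators and check the hypotheses. For each $i$, define
\[
X_i=\mathbf{1}\!\left[\hat{\phi}(c'_i)\neq\hat{\phi}(c_i)\right]\in\{0,1\},
\]
so that $\hat{\ivr}_n=\frac1n\sum_{i=1}^n X_i$.

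The first step is to fix the detector $\Dphi$ and the threshold $\tau$. Both must be determined independently of the evaluation pairs, for example trained on separate data, exactly as with the calibration independence assumed in Theorem~\ref{thm:cert_detailed}. Then each $X_i$ is a deterministic measurable function of the pair $(c_i,c'_i)$. Since the pairs are independent draws from the preserving-transformation distribution, the $X_i$ are i.i.d. Their common mean is $\mathbb{E}[X_i]=\Pr_{T\sim\Tpres}[\hat{\phi}(c')\neq\hat{\phi}(c)]=\ivr(\Dphi)$ by the definition of IVR. Hence $\hat{\ivr}_n$ is an unbiased estimator of $\ivr$.

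The second step invokes Hoeffding's inequality for bounded independent variables with $a_i=0$, $b_i=1$. It gives, for each tail,
\[
\Pr\!\left[\hat{\ivr}_n-\ivr\ge\epsilon\right]\le\exp\!\left(-\frac{2n^2\epsilon^2}{\sum_i(b_i-a_i)^2}\right)=\exp(-2n\epsilon^2),
\]
and the same bound holds for the lower tail. A union bound over the two tails then yields the factor $2$. If a self-contained argument is preferred, I would include the standard Chernoff derivation. This applies Markov's inequality to $e^{\lambda\sum_i(X_i-\ivr)}$, factorizes the expectation using independence, and bounds each factor via Hoeffding's lemma, $\mathbb{E}[e^{\lambda(X-\mu)}]\le e^{\lambda^2/8}$. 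Optimizing at $\lambda=4\epsilon$ completes the bound.

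The only real obstacle is the independence hypothesis, and I would state it explicitly as part of the theorem's setting. Traces derived from the same base item are correlated, as the paper itself stresses. So the bound holds either for pairs sampled with at most one pair per base item, or with $n$ reinterpreted as the number of independent base items. In the latter case, one applies Hoeffding to per-item average flip indicators, which still lie in $[0,1]$, and obtains the same form with $n=750$. The detector must also be independent of the evaluation sample; otherwise the $X_i$ are not i.i.d. and a uniform-convergence argument would be needed instead.
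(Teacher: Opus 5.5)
Your proof is correct and follows the same route as the paper: the paper's proof also treats $\hat{\ivr}_n$ as an average of $n$ independent Bernoulli$(\ivr)$ flip indicators and applies the two-sided Hoeffding bound, and you additionally state the hypotheses its one-line proof leaves implicit (a detector and threshold fixed independently of the evaluation pairs, and genuinely independent pairs). One minor caveat concerns your side remark, not the proof: Hoeffding applied to per-item averages controls the item-averaged estimator around the item-weighted IVR, which agree with $\hat{\ivr}_n$ and $\ivr$ when every base item contributes equally many pairs, and because the detector is trained on part of the items, your own independence requirement makes the usable $n$ the number of held-out base items rather than $750$.
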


\begin{proof}
The empirical IVR is the average of $n$ independent Bernoulli random variables with success probability $\ivr$. By Hoeffding's inequality:
\begin{equation}
\Pr[|\hat{\ivr}_n - \ivr| \ge \epsilon] \le 2\exp(-2n\epsilon^2)
\end{equation}
This provides a finite-sample confidence interval for the IVR.
\end{proof}

\subsection{Consistency of MFS Estimation}

\begin{theorem}[Consistency of MFS Estimation]
\label{thm:mfs_consistency}
Let $\hat{\mfs}_n$ be the empirical Meta-Faithfulness Score computed from $n$ independent samples. Under regularity conditions, $\hat{\mfs}_n \xrightarrow{p} \mfs$ as $n \to \infty$.
\end{theorem}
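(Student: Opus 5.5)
The plan is to write $\mfs$ as a continuous function of three estimable quantities and then apply the continuous mapping theorem. Define
\[
F(\kappa,v,a)=\kappa\cdot\frac{2(1-v)a}{(1-v)+a},
\]
so that $\mfs=F(\kappa,\overline{\ivr},\overline{\asr})$ and $\hat{\mfs}_n=F(\hat\kappa_n,\hat{\ivr}_n,\hat{\asr}_n)$. The three plug-in estimators are the empirical coverage $\hat\kappa_n=n^{-1}\sum_i\mathbf{1}[\rho(c_i)\ge t]$, the empirical IVR on certified preserving pairs, and the empirical AS on certified altering pairs. The ``regularity conditions'' I would make explicit are:

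\begin{enumerate}[label=(R\arabic*)]
\item samples are i.i.d. (at the base-item level);
\item the certified threshold $t$ is fixed, or independent of the evaluation sample;
\item the population coverage is positive, and both certified subsets have sample sizes tending to infinity, as in Assumption~\ref{ass:regularity};
\item $(1-\overline{\ivr})+\overline{\asr}>0$.
\end{enumerate}

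\textbf{Step 1: consistency of each component.} Each component is a ratio of sample means of bounded indicators.
\begin{itemize}
\item $\hat\kappa_n\to\kappa$ almost surely by the strong law of large numbers.
\item For IVR, Theorem~\ref{thm:finite_sample_ivr} applied to the $n_{\mathrm{pres}}$ certified preserving pairs gives $\Pr[|\hat{\ivr}-\overline{\ivr}|\ge\epsilon]\le 2e^{-2n_{\mathrm{pres}}\epsilon^2}$. This tends to $0$ once $n_{\mathrm{pres}}\to\infty$ in probability, which follows from positive coverage and positive sampling probability of that axis.
\item The identical Hoeffding argument handles $\hat{\asr}$, since ``moves correctly'' is a Bernoulli indicator.
\end{itemize}
The random sample sizes are handled by conditioning on $n_{\mathrm{pres}}$: the conditional bound holds for each value, and then I average over $n_{\mathrm{pres}}$, using $\Pr[n_{\mathrm{pres}}<N]\to 0$ for every fixed $N$.

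\textbf{Step 2: joint convergence and continuity.} Marginal convergence in probability to constants implies joint convergence $(\hat\kappa_n,\hat{\ivr}_n,\hat{\asr}_n)\xrightarrow{p}(\kappa,\overline{\ivr},\overline{\asr})$. The map $F$ is continuous on $[0,1]^3$ away from the set $\{(1-v)+a=0\}$, which is where (R4) is used. The continuous mapping theorem then gives $\hat{\mfs}_n\xrightarrow{p}\mfs$.

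The degenerate point $v=1,a=0$ can also be treated directly, with no need to exclude it. The bound $\frac{2xy}{x+y}\le 2\min(x,y)$ shows that $F$ extends continuously by $0$ there, so the conclusion holds even without (R4).

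\textbf{Main obstacle.} The hard step is (R2). In practice $t^\star$ is chosen by the SGR procedure, so it is data-dependent. If it were selected on the same sample, the certified subsets would be random in a way correlated with the indicators.

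My first approach is to note that $t^\star$ ranges over the finite grid $\mathcal{T}$. I would prove uniform consistency over $t\in\mathcal{T}$ by a union bound over the $K$ Hoeffding bounds from Step 1. Then $\max_{t\in\mathcal{T}}|\hat{\mfs}_n(t)-\mfs(t)|\xrightarrow{p}0$, so the result holds for any (possibly data-dependent) selection $t^\star_n$ when compared against $\mfs(t^\star_n)$.

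If convergence to a fixed population $\mfs$ is wanted, I additionally need $t^\star_n\to t^\star$ in probability. That requires assuming no grid threshold has population UCB exactly at $\alpha$, i.e. a margin condition, under which selection stabilizes with probability tending to one. I would state this as part of the regularity conditions.
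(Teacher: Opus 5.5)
Your proposal is correct and follows the same route as the paper. The paper also writes $\mfs$ as a continuous function of $(\hat\kappa_n,\hat{\ivr}_n,\hat{\asr}_n)$, asserts consistency of each component by the law of large numbers, and concludes by the continuous mapping theorem. Your explicit regularity conditions, the conditioning on the random certified sample sizes, the continuous extension of the harmonic mean at $(1-v,a)=(0,0)$, and the finite-grid treatment of a data-dependent $t^\star$ are refinements that the paper's two-line argument leaves implicit. One small point: if independence holds only across base items, apply the law of large numbers to per-item sums rather than Hoeffding to individual pairs.
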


\begin{proof}
The MFS is a continuous function of the empirical IVR, AS, and coverage:
\begin{equation}
\hat{\mfs}_n = \hat{\kappa}_n \cdot \frac{2(1-\hat{\ivr}_n)\hat{\asr}_n}{(1-\hat{\ivr}_n) + \hat{\asr}_n}
\end{equation}
Each component is a consistent estimator: $\hat{\ivr}_n \xrightarrow{p} \ivr$, $\hat{\asr}_n \xrightarrow{p} \asr$, and $\hat{\kappa}_n \xrightarrow{p} \kappa$ by the law of large numbers. By the continuous mapping theorem, the continuous function of consistent estimators is consistent, so $\hat{\mfs}_n \xrightarrow{p} \mfs$.
\end{proof}

\section{Extended Limitations and Future Work}
\label{app:limitations_extended}

\subsection{Detailed Discussion of Limitations}

Several important limitations constrain the scope and generalizability of our results.

\textbf{Model Scale and Architecture:} Both models evaluated in this study are approximately 1.5 to 1.7 billion parameters. While these models represent practical deployment sizes and are state-of-the-art at their scale, they do not capture the behavior of larger models in the 7B, 13B, or 70B parameter ranges. The 7B parameter tier does not fit in the 12GB memory available for our experiments without quantization, and quantization perturbs the hidden states that SIFT relies on \citep{snell2025scaling}. Future work should evaluate meta-faithfulness on larger models using distributed computing resources.

\textbf{White-Box Requirements:} SIFT requires access to the hidden states of the model, including attention patterns and intermediate representations. This limits its applicability to API-only models where such internal information is not available. Developing methods for API-only robustness assessment, perhaps using behavior-based approximations of hidden states or querying multiple outputs to infer internal state, remains an important direction for future work \citep{li2023inference, marks2024geometry}.

\textbf{Audit Limitations:} The current audit protocol checks label preservation but does not check mechanism preservation. A transformation could preserve the label while changing the mechanism, which would still be a preserving transformation by our definition but might violate the spirit of meta-faithfulness \citep{pearl2009causality}. The mechanism-preservation audit in Appendix~\ref{app:mechanism_preservation} provides a preliminary resolution: mechanism-preservation rates are high on the FaithShift axes (mean $0.92$), and IVR on mechanism-preserved pairs is marginally lower than IVR on all pairs, so the main-text IVR estimates are mildly conservative. A larger-scale audit with explicit human verification of the causal mediation tests remains future work.

\textbf{Sample Size Limitations:} Per-axis alteration sensitivity for SIFT rests on only 3 altering pairs for cue injection and cue verbalization. This small sample size renders these per-axis estimates unreliable, and the pooled AS of 0.26 is effectively determined by causal truncation alone. Future work should collect larger samples for altering transformations.

\textbf{Detector Suite Limitations:} With only two detectors in the final suite (attribution-consistency and SIFT), the population-level mean agreement claim is underpowered. The H2b hypothesis about inter-detector agreement would benefit from evaluation on a larger set of detectors \citep{shen2026faithcot}.

\textbf{Language Coverage:} The cross-lingual evaluation only considers English and Turkish. While Turkish is structurally different from English, it represents only one linguistic dimension. Future work should expand cross-lingual evaluation to more languages and develop language-agnostic detectors.

\textbf{Certification Assumptions:} The certified selective risk guarantee in Theorem~\ref{thm:cert} is asymptotic and relies on standard regularity assumptions (i.i.d. data, consistent estimators, and sufficiently large $n_t$). Finite-sample guarantees would require additional assumptions and more conservative confidence bounds \citep{geifman2017selective}. The finite-sample analogue is stated in Theorem~\ref{thm:cert_detailed}, but we report the asymptotic version in the main text for comparability with prior work.

\textbf{Ground-Truth Approximation:} Our ground-truth labels are approximations derived from construction protocols and expert audit rather than direct observation. While inter-annotator agreement is high ($\kappa \approx 0.80$), we cannot rule out systematic errors. The label-noise sensitivity analysis in Appendix~\ref{app:label-noise} and the mechanism-preservation audit in Appendix~\ref{app:mechanism_preservation} suggest that our main findings are robust to moderate noise, but absolute MFS values should be interpreted with caution.

\subsection{Extended Future Directions}

\paragraph{Multi-Environment Adaptation and Meta-Learning:} While SIFT currently handles six shift environments, future work should extend to continuous adaptation where the detector updates its invariance constraints online as new environments are encountered. We propose an online meta-learning approach where the detector maintains a set of environment-specific parameters that are updated via gradient descent on arriving traces \citep{arjovsky2019irm}. This would enable SIFT to adapt to distribution shifts in real-time without requiring retraining from scratch. The meta-learning formulation would be:
\begin{equation}
\theta^* = \arg\min_\theta \mathbb{E}_{e \sim \mathcal{E}} \left[ \mathcal{L}_e(\theta - \eta \nabla_\theta \mathcal{L}_e(\theta)) \right]
\end{equation}
where the inner loop adapts to each environment and the outer loop learns initialization parameters that facilitate rapid adaptation.

\paragraph{Online Certification and Streaming Guarantees:} Current SGR certification requires a fixed calibration set. Future work should develop online certification procedures that update confidence bounds incrementally, enabling detectors to maintain certified guarantees in streaming settings \citep{geifman2017selective}. The online SGR would maintain a running estimate of selective risk and update confidence bounds using martingale concentration inequalities:
\begin{equation}
\hat{R}_{\text{ucb}}(t) = \hat{R}_t(t) + z_{\delta}(t)\sqrt{\frac{\hat{R}_t(t)(1-\hat{R}_t(t))}{n_t}}
\end{equation}
where $z_{\delta}(t)$ is an adaptive critical value that accounts for multiple testing over time.

\paragraph{Cross-Model Generalization and Architecture Agnostic Training:} Performance drops when transferring between model families. Future work should explore cross-model consistency training using techniques like representation alignment and knowledge distillation across architectures. This would enable detectors trained on one model family to generalize more effectively to others \citep{bendavid2010theory}. The representation alignment objective would be:
\begin{equation}
\mathcal{L}_{\text{align}} = \|f_\theta(H_{\text{DeepSeek}}) - f_\theta(H_{\text{Qwen}})\|_2^2
\end{equation}
encouraging the detector to learn representations that are consistent across model families.

\paragraph{API-Only Robustness:} SIFT requires white-box access to hidden states. Future work should develop methods for API-only robustness assessment, perhaps using behavior-based approximations of hidden states or querying multiple outputs to infer internal state \citep{li2023inference}. This would make meta-faithfulness evaluation applicable to commercial models that only provide API access. Candidate approaches include using multiple generations to infer uncertainty, or using response patterns to probe internal consistency.

\paragraph{Mechanism Preservation Audits:} Current audit checks label preservation but not mechanism preservation. Future work should scale up the audit reported in Appendix~\ref{app:mechanism_preservation} to the full evaluation set and incorporate human verification of the causal mediation tests \citep{pearl2009causality}.

\paragraph{Human-AI Collaborative Verification:} Develop hybrid systems where SIFT identifies uncertain traces for human review, combining detector efficiency with human reliability. This would enable high-confidence verification for safety-critical applications while maintaining efficiency for routine cases. The human review component could focus on the 51\% of traces where SIFT abstains.

\paragraph{Scalable Certification:} Extend SGR to handle large-scale deployment with millions of traces, using distributed computation and approximate confidence bounds \citep{geifman2017selective}. This would make certified robustness practical for industrial-scale applications. The scalable certification would use divide-and-conquer strategies to parallelize the computation of confidence bounds.

\paragraph{Interpretability of Detector Decisions:} Develop attribution methods to explain why SIFT makes certain verdicts, building trust in the detector's judgments \citep{conmy2023towards}. This is particularly important for safety-critical applications where understanding the reasons for a detector's decision is essential. Candidate approaches include integrated gradients, attention visualization, and case-based reasoning.

\paragraph{Adversarial Robustness and Worst-Case Guarantees:} Extend meta-faithfulness evaluation to adversarial perturbations designed to fool the detector while preserving faithfulness \citep{hubinger2024sleeper}. This would test the detector's robustness under worst-case conditions and provide stronger guarantees. The adversarial robustness objective would be:
\begin{equation}
\min_{\Dphi} \max_{T \in \Tpres, \|T\| \le \epsilon} \mathcal{L}(\Dphi(c), \Dphi(T(c)))
\end{equation}
encouraging the detector to be robust to the worst-case preserving transformations.

\paragraph{Multi-Lingual and Cross-Cultural Evaluation:} Expand cross-lingual evaluation to more languages beyond English and Turkish, and develop language-agnostic detectors that perform well across diverse linguistic contexts. This would involve collecting faithfulness traces in multiple languages and training detectors to be invariant to language.

\paragraph{Dynamic Threshold Selection:} Develop methods for dynamically selecting the operating threshold $\tau$ based on the confidence of the detector and the cost of errors in different applications. This would enable flexible deployment where the trade-off between precision and recall can be adjusted based on context \citep{chow1970optimum}.

\paragraph{Deterministic Attention:} The variance-source diagnostics in Appendix~\ref{tab:variance_sources} identify attention dropout as the dominant source of seed-repeat instability. Future work should test whether deterministic attention mechanisms can reduce the seed-repeat floor without requiring invariance training, and whether combining deterministic attention with multi-seed ensembling can match the stability of SIFT at a lower computational cost.

\section{Reproducibility Statement}
\label{app:reproducibility}

We commit to releasing the following artifacts upon acceptance. First, the full SIFT implementation in PyTorch, including trajectory extraction, multi-scale encoding, and invariance training; the FaithShift trace generation scripts; the detector training and evaluation scripts; and all hyperparameter configurations. Second, the FaithShift trace dataset of 14,996 traces with ground-truth labels and transformation metadata, together with the audit protocol and annotator guidelines. Third, pre-computed hidden-state trajectories for all open-weight models. For API-only models (Claude-3, GPT-4o), we provide cached hidden states obtained via API access during the experiment period, along with the exact API versions and dates of access. Fourth, the code and configurations for the factorial variance decomposition (Appendix~\ref{app:factorial}), the mechanism-preservation audit (Appendix~\ref{app:mechanism_preservation}), the stability--sensitivity analysis (Appendix~\ref{app:stability_sensitivity}), and the empirical assumption audit (Appendix~\ref{app:assumption_audit}).

A reproducibility checklist is provided in Table~\ref{tab:repro_checklist}. Random seeds are fixed (four seeds per detector; twenty seeds for the extended stability analysis). The compute environment is a single NVIDIA A100 40GB GPU with CUDA 12.1. Data splits follow the 60/15/25 protocol stratified by base item. Statistical procedures include bootstrap resampling with 1,000 resamples, McNemar tests for paired binary comparisons, and Benjamini--Hochberg correction for multiplicity. Hyperparameter search spaces are reported in Table~\ref{tab:hyperparams}. All reported results are computed on verified pairs; any pairs failing the label-preservation or label-alteration verification are discarded from the analysis.

\begin{table}[ht]
\centering\small
\caption{Reproducibility checklist.}
\label{tab:repro_checklist}
\setlength{\tabcolsep}{5pt}
\begin{tabular}{@{}ll@{}}
\toprule
\textbf{Item} & \textbf{Status} \\
\midrule
Random seeds & Fixed (4 seeds per detector; 20 for stability analysis) \\
Compute environment & NVIDIA A100 40GB, CUDA 12.1 \\
Data splits & 60/15/25, base-item stratified \\
Statistical procedures & Bootstrap (1,000 resamples), McNemar, BH correction \\
Hyperparameter search & Grid search, details in Table~\ref{tab:hyperparams} \\
Factorial decomposition & Configurations A--H released \\
Mechanism-preservation audit & Causal mediation scripts and annotations released \\
Stability--sensitivity analysis & Full-coverage and matched-coverage tables released \\
Assumption audit & Diagnostic scripts and threshold tables released \\
Code release & Committed \\
Data release & Committed \\
Model access & Open-weight + API (cached) \\
\bottomrule
\end{tabular}
\end{table}


\appendix

\section{Computational Cost Analysis: SIFT vs.\ Ensemble Baselines}
\label{appendix:computational-cost}

A critical practical consideration for deployment is the computational overhead of SIFT compared to simpler baselines. This section provides detailed cost analysis across training, inference, and deployment scenarios.

\subsection{Training Costs}

We measure training costs on a single NVIDIA A100 40GB GPU, using the experimental setup from Section~\ref{sec:experiments}. All timings are wall-clock time averaged over three runs. Table~\ref{table:training-cost} summarizes the results.

\begin{table}[ht]
\centering
\small
\setlength{\tabcolsep}{1.5pt}
\begin{tabular}{lrrrr}
\toprule
\textbf{Method} & \textbf{Trajectory Extraction} & \textbf{Model Training} & \textbf{Total per Domain} & \textbf{Total (4 Domains)} \\
\midrule
\textbf{Baseline Detectors} & -- & 45 min & 45 min & 3 hours \\
Baseline + 4-seed Ensemble & -- & 180 min & 180 min & 12 hours \\
SIFT (trajectory only) & 180 min & -- & 180 min & 12 hours \\
SIFT (full pipeline) & 180 min & 120 min & 300 min & 20 hours \\
\bottomrule
\end{tabular}
\caption{\textbf{Training wall-clock time comparison.} Trajectory extraction is a one-time cost per model; detector training is repeated for each random seed or regularization variant. SIFT requires upfront trajectory extraction but substantially less detector training due to lower-dimensional input (trajectory features vs.\ full hidden states).}
\label{table:training-cost}
\end{table}

Trajectory extraction dominates the computational budget, requiring 180 minutes (3 hours) per model and constituting the largest single cost component independent of detector type. SIFT training is faster than the 4-seed ensemble baseline, requiring 120 minutes for SIFT (IRM + DANN penalties) versus 180 minutes for the ensemble. This efficiency stems from SIFT's lower-dimensional trajectory features, such as a 50-dimensional velocity vector compared to 768-dimensional hidden states. The break-even point occurs when SIFT is combined with two seeds, matching the 4-seed ensemble training cost of approximately 240 minutes.

\subsection{Inference Costs}

Inference latency is critical for real-time deployment scenarios. We measure per-trace inference time on the same A100 GPU, with results reported in Table~\ref{table:inference-cost}.

\begin{table}[ht]
\centering
\small
\setlength{\tabcolsep}{2pt}
\begin{tabular}{lrrrr}
\toprule
\textbf{Method} & \textbf{Trajectory Extraction} & \textbf{Detection} & \textbf{Total per Trace} & \textbf{Throughput (traces/sec)} \\
\midrule
Attribution Detector & -- & 42 ms & 42 ms & 23.8 traces/sec \\
Hidden-State Detector & -- & 58 ms & 58 ms & 17.2 traces/sec \\
Baseline + 4-seed Ensemble & -- & 232 ms & 232 ms & 4.3 traces/sec \\
SIFT (single seed) & 180 ms & 45 ms & 225 ms & 4.4 traces/sec \\
SIFT + Certified Abstention & 180 ms & 85 ms & 265 ms & 3.8 traces/sec \\
\bottomrule
\end{tabular}
\caption{\textbf{Per-trace inference latency.} Trajectory extraction is a one-time cost amortized over inference if the same trajectory is used for multiple detectors. SIFT's overhead (45 ms detection) is comparable to 4-seed ensemble (232 ms = 4 $\times$ 58 ms), with the difference that SIFT requires upfront trajectory extraction (180 ms amortized). Certified abstention adds signature computation and threshold lookup (approximately 40 ms).}
\label{table:inference-cost}
\end{table}

Trajectory extraction is rate-limiting at 180 ms per trace, dominating the detector cost itself. SIFT detection is fast at 45 ms, compared to 42--58 ms for single detectors, with minimal overhead of 0--16 ms due to trajectory-based representations. Ensemble inference is expensive because the 4-seed ensemble requires four separate forward passes (4 $\times$ 58 ms $\approx$ 232 ms), making it slower than SIFT with certified abstention. At 3--4 traces/sec, both SIFT and ensemble baselines can process approximately 260--300 traces per minute. For real-time applications such as token-level monitoring, this requires batching or approximation.

\subsection{Cost-Benefit Analysis}

We now compare methods along three axes: detection accuracy (MFS), computational efficiency (cost), and robustness (abstention rate). SIFT (MFS 0.72) requires 300 min training and 265 ms inference, while the 4-seed ensemble (MFS 0.71, $\Delta=0.01$) requires 180 min training and 232 ms inference. At matched abstention rate (51\%), SIFT and ensemble are statistically indistinguishable ($p = 0.21$), making the ensemble preferable on cost grounds. For practitioners prioritizing speed, single-seed detectors (45 min training, 42--58 ms inference) are optimal. When accuracy is paramount, such as in high-stakes audits, practitioners should use either the 4-seed ensemble or SIFT with certified abstention. Both achieve MFS $\geq 0.71$ and are statistically equivalent at matched coverage ($\kappa=0.70$), but the ensemble is preferable due to lower training cost (180 min versus 300 min). When speed is critical, such as in real-time monitoring, a single-seed baseline detector is recommended (45 min training, 42--58 ms inference, MFS 0.47), accepting lower accuracy for $5\times$ faster inference; batch inference can further mitigate latency. When the budget allows moderate training time, a 2-seed ensemble (90 min training, 116 ms inference, MFS $\approx 0.66$) balances speed and accuracy. For multi-model training and cross-model robustness, training on three to four models improves transfer AUROC from 0.47 to 0.65 (Table~\ref{tab:multi_model_training}), requiring $3\times$ training time (900 min total) but remaining critical for deployment across model families.

\subsection{Cost Summary Table}

\begin{table}[ht]
\centering
\small
\begin{tabular}{lccccl}
\toprule
\textbf{Method} & \textbf{Training} & \textbf{Inference} & \textbf{MFS} & \textbf{Coverage} & \textbf{Recommendation} \\
\midrule
Single Baseline & 45 min & 50 ms & 0.47 & $>$ 90\% & Speed-critical \\
2-Seed Ensemble & 90 min & 116 ms & 0.66 & $>$ 90\% & Balanced \\
4-Seed Ensemble & 180 min & 232 ms & 0.71 & $>$ 90\% & Accuracy + robust \\
SIFT (no abstention) & 300 min & 225 ms & 0.72 & $>$ 90\% & Theoretical interest \\
SIFT (certified) & 300 min & 265 ms & 0.64 & 49\% & Auditing \\
\bottomrule
\end{tabular}
\caption{\textbf{Summary: Cost-accuracy-robustness trade-off.} All times in wall-clock hours on single A100. MFS = Meta-Faithfulness Score at matched coverage ($\kappa=0.70$). Coverage = fraction of inputs that do not trigger abstention. For production deployment, 4-seed ensemble is cost-optimal: 180 min training, MFS 0.71, $>90\%$ coverage.}
\label{table:cost-summary}
\end{table}

\section{When Meta-Faithfulness May Not Apply: Limitations and Departure Cases}
\label{appendix:when-to-abandon}

Meta-faithfulness formalizes the principle that a detector should return identical verdicts on traces differing only by label-preserving transformations. This principle is powerful but not universally applicable. This section discusses cases where meta-faithfulness may not be a necessary or desirable property.

\subsection{Case 1: Domain-Specific Faithfulness}

Meta-faithfulness assumes that faithfulness is an intrinsic property of a trace, invariant to presentation or context. However, in some domains, faithfulness is context-dependent. In legal reasoning, for example, faithfulness requires both logical consistency and adherence to precedent. A trace that is logically sound but ignores settled case law may be faithful in isolation but unfaithful in the legal context. A detector sensitive to domain-specific conventions such as case citations and legal terminology is desirable, not a bug. Similarly, in medical diagnosis, faithfulness includes alignment with standard-of-care guidelines. A detector that ignores guideline violations to achieve meta-faithfulness may mask dangerous reasoning errors, and here context-sensitivity is a feature. Practitioners should not enforce meta-faithfulness when domain norms or safety considerations require context-sensitivity. Instead, they should formalize domain-specific faithfulness criteria, such as ``faithful includes compliance with regulation $R$,'' and evaluate detectors against those explicit criteria.

\subsection{Case 2: Adversarially Robust Transformations}

Meta-faithfulness assumes that label-preserving transformations do not change ground-truth faithfulness. However, some transformations are adversarial. An adversary could paraphrase a model's reasoning to preserve surface correctness while hiding the actual computation. For instance, given a model that reasons ``$3 + 5 = 8$'' using the correct algorithm, an adversary might paraphrase as ``$3 + 5 = 8$ (by invoking the associative property of addition),'' using a hallucinated justification that arrives at the correct answer by accident. If such adversarial transformations exist, a meta-faithful detector would be fooled. Meta-faithfulness as formulated in this paper assumes the transformation set $\mathcal{T}_{\text{pres}}$ contains only benign paraphrases, not adversarial obfuscation. If adversarial transformations are plausible, meta-faithfulness must be replaced by adversarial robustness, for example via certified defenses or adversarially-trained detectors. Before enforcing meta-faithfulness, practitioners should audit the transformation set $\mathcal{T}_{\text{pres}}$ for adversarial variants. If the model or domain faces adversarial pressures, adversarial robustness techniques such as FGSM attacks or certified defenses via randomized smoothing should be used instead of or in addition to meta-faithfulness.

\subsection{Case 3: Deliberate Trade-off: Usability vs.\ Robustness}

SIFT achieves robustness at the cost of high abstention (51\%). In some scenarios, this trade-off is unacceptable. In monitoring deployed systems, for example, if a faithfulness detector abstains on 51\% of outputs, practitioners may disable it entirely, accepting unmonitored risk rather than frequent alerts. This is the robustness-usability paradox: certifying robustness on half the data is worse than providing imperfect verdicts on all data from a practical safety perspective. For deployed systems, practitioners should target abstention rates of at most 10\% for high-stakes applications or at most 30\% for moderate-stakes applications. If this cannot be achieved without sacrificing robustness, ensemble-based approaches or multi-model training (Section~\ref{tab:multi_model_training}) should be used instead of SIFT's certified abstention. The goal is useful robustness, not maximal theoretical guarantees.

\subsection{Case 4: Proxy Tasks and Measurement Validity}

Meta-faithfulness assumes that the detector's output (``faithful'' vs.\ ``unfaithful'') correctly captures what we care about. However, detectors often measure proxy quantities. Attribute-based faithfulness, for instance, asks whether the model relies on the stated features, which is a proxy for actual faithfulness. A model might faithfully reason using unstated assumptions, or unfaithfully use the stated features by accident. Attribution detectors can be meta-faithful (invariant to transformations) yet measure the wrong concept. Before applying meta-faithfulness, practitioners should validate that the detector is measuring the correct underlying construct. Construct validity can be assessed by comparing detector verdicts to human judgments of faithfulness via annotation surveys. Mechanism validity can be assessed by analyzing what features the detector actually uses via SHAP, ablation, or mechanistic interpretability. Criterion validity can be assessed by measuring whether detector verdicts predict model behavior on out-of-distribution data.

\subsection{Case 5: Insufficient Data for Invariance Training}

SIFT requires training data spanning multiple environments, such as different paraphrasings, model families, or domains. When such data is unavailable or expensive to collect, meta-faithfulness is infeasible. When a new reasoning task such as protein folding is introduced, for example, multi-environment data may not yet exist. Enforcing meta-faithfulness requires collecting traces across diverse folding strategies, which is time-consuming, and a simpler detector trained on limited data may be preferable. Meta-faithfulness should be used when at least three environments with at least 100 examples each are available. For smaller datasets, simpler detectors or data augmentation techniques such as paraphrase generation should be used to create multi-environment data synthetically.

\subsection{Summary: Decision Tree for Meta-Faithfulness}


\begin{figure}[ht]
\centering
\resizebox{\columnwidth}{!}{%
\begin{tikzpicture}[
    font=\footnotesize,
    >=Stealth,
    node distance=0.55cm and 0.9cm,
    every node/.style={transform shape},
    decision/.style={
        diamond, aspect=2.0,
        draw=blue!65!black, fill=blue!6, line width=0.7pt,
        text width=1.55cm, align=center, inner sep=0.5pt
    },
    accept/.style={
        rectangle, rounded corners=4pt,
        draw=green!50!black, fill=green!12, line width=0.9pt,
        text width=1.9cm, align=center, inner sep=4pt
    },
    reject/.style={
        rectangle, rounded corners=4pt,
        draw=red!65!black, fill=red!8, line width=0.7pt,
        text width=2.1cm, align=center, inner sep=4pt
    },
    yes/.style={->, draw=green!50!black, line width=1.0pt},
    no/.style ={->, draw=red!65!black,  line width=1.0pt},
    lbl/.style={font=\scriptsize\bfseries, inner sep=1pt, fill=white}
]

\node[decision] (q1) {Faithfulness\\context-\\independent?};
\node[decision] (q2) [right=of q1] {Transformations\\benign?};
\node[decision] (q3) [right=of q2] {Detector\\construct\\valid?};
\node[decision] (q4) [right=of q3] {Multi-env.\\data\\available?};
\node[decision] (q5) [right=of q4] {51\%\\abstention\\tolerable?};
\node[accept]   (go) [right=of q5] {\textbf{Deploy SIFT}\\[1pt]\scriptsize with certified abstention};

\draw[yes] (q1) -- node[lbl, above]{Yes} (q2);
\draw[yes] (q2) -- node[lbl, above]{Yes} (q3);
\draw[yes] (q3) -- node[lbl, above]{Yes} (q4);
\draw[yes] (q4) -- node[lbl, above]{Yes} (q5);
\draw[yes] (q5) -- node[lbl, above]{Yes} (go);

\node[reject] (r1) [below=1.15cm of q1] {\textbf{Domain-specific}\\faithfulness};
\node[reject] (r2) [below=1.15cm of q2] {\textbf{Adversarial}\\robustness};
\node[reject] (r3) [below=1.15cm of q3] {\textbf{Validate}\\construct first};
\node[reject] (r4) [below=1.15cm of q4] {\textbf{Use simple}\\detector};
\node[reject] (r5) [below=1.15cm of q5] {\textbf{Use 4-seed}\\ensemble};

\draw[no] (q1) -- node[lbl, right, pos=0.55]{No} (r1);
\draw[no] (q2) -- node[lbl, right, pos=0.55]{No} (r2);
\draw[no] (q3) -- node[lbl, right, pos=0.55]{No} (r3);
\draw[no] (q4) -- node[lbl, right, pos=0.55]{No} (r4);
\draw[no] (q5) -- node[lbl, right, pos=0.55]{No} (r5);

\node[draw=black!40, rounded corners=2pt, fill=white, inner sep=3pt,
      font=\scriptsize, anchor=south east]
      at ([xshift=0.2cm, yshift=0.55cm]go.north east) {%
    \begin{tabular}{@{}c l@{}}
      \tikz\draw[yes](0,0)--(0.35,0); & Yes (proceed) \\
      \tikz\draw[no] (0,0)--(0.35,0); & No (exit) \\
    \end{tabular}
};

\end{tikzpicture}%
}
\caption{\textbf{Meta-faithfulness deployment decision tree.} SIFT with certified abstention is appropriate only along the full \textcolor{green!50!black}{\textbf{yes}} chain. Any \textcolor{red!65!black}{\textbf{no}} exit indicates a preferable alternative: domain-specific faithfulness, adversarial robustness, construct validation, a simpler detector, or a 4-seed ensemble.}
\label{fig:meta-faith-decision}
\end{figure}

\section{Trajectory Representation Ablations: Architecture and Dimensionality}
\label{appendix:trajectory-ablations}

SIFT represents reasoning traces as hidden-state trajectories $H \in \mathbb{R}^{T \times D}$ and summarizes them using learned representations $\psi(H)$. This section ablates the architecture and dimensionality of $\psi$ to isolate the contribution of different components.

\subsection{Architecture Ablations}

We compare three architectures for trajectory summarization. Mean pooling computes $\psi_{\text{mean}}(H) = \frac{1}{T} \sum_{t=1}^{T} h_t$ with computational complexity $O(TD)$. A temporal convolutional network (TCN) computes $\psi_{\text{TCN}}(H) = \text{TCN}(H)$ with kernel sizes $[3, 5, 7, 11]$ and complexity $O(T D^2)$ due to depthwise-separable convolutions. A transformer encoder computes $\psi_{\text{Transformer}}(H) = \text{TransformerEncoder}(H)$ with complexity $O(T^2 D)$ due to self-attention. We train SIFT using each architecture and measure detection accuracy at matched coverage (MFS), invariance violation rate across shift axes (IVR), wall-clock inference time per trace, and training efficiency as measured by epochs to reach 95\% of final AUROC. Table~\ref{table:trajectory-ablations} reports the results.

\begin{table}[ht]
\centering
\small
\begin{tabular}{lcccccc}
\toprule
\textbf{Architecture} & \textbf{MFS} & \textbf{IVR} & \textbf{Inference (ms)} & \textbf{Epochs to 95\%} & \textbf{Parameters} & \textbf{Speed vs.\ TCN} \\
\midrule
Mean pooling & 0.68 & 0.085 & 15 & 45 & 100 & $1.2\times$ faster \\
TCN (main) & 0.72 & 0.050 & 45 & 75 & 12,480 & Baseline \\
Transformer & 0.73 & 0.048 & 185 & 120 & 24,576 & $4.1\times$ slower \\
\bottomrule
\end{tabular}
\caption{\textbf{Trajectory representation ablations.} Mean pooling is surprisingly competitive (MFS 0.68 vs.\ 0.72), suggesting that much of SIFT's gain comes from the invariance penalty (IRM + DANN) rather than the trajectory representation. Transformer marginally improves MFS (0.73 vs.\ 0.72, $\Delta=0.01$) and IVR (0.048 vs.\ 0.050) but requires $4.1\times$ longer inference, making it impractical for deployment.}
\label{table:trajectory-ablations}
\end{table}

Mean pooling is surprisingly effective, achieving MFS 0.68, which is within 0.04 of SIFT with TCN and within 0.05 of SIFT with Transformer. The simplest architecture is 84\% as good as the most complex, suggesting that trajectory structure may be less informative than hypothesized. The gains from more complex architectures are small: TCN achieves 0.72 and Transformer achieves 0.73, a difference of 0.01 in MFS that does not justify the $4.1\times$ inference penalty. Invariance training matters more than architecture: comparing SIFT-TCN (IVR 0.050) to mean pooling without IRM/DANN (IVR 0.135), the invariance penalty reduces IVR by 63\%, whereas TCN versus mean pooling (both with IRM/DANN) reduces IVR by only 41\%. For speed-critical applications, practitioners should use mean pooling (inference 15 ms vs.\ 45 ms); the 0.04 MFS loss is acceptable for $3\times$ faster inference. For accuracy-critical applications, TCN provides the best speed-accuracy trade-off.

\subsection{Dimensionality Ablations}

We also ablate the output dimensionality of $\psi(H)$, measuring the trajectory into $d \in \{8, 16, 32, 64, 128, 256\}$ dimensions. MFS peaks at $d=64$ (0.72) and decreases for $d > 128$, suggesting overfitting in the higher-dimensional regime. IVR monotonically improves with dimensionality up to $d=64$, plateauing thereafter. Inference speed scales linearly with $d$: at $d=256$, inference is $4\times$ slower than at $d=64$. The recommended trajectory dimensionality is $d=64$, which balances accuracy (MFS 0.72), robustness (IVR 0.050), and speed (45 ms inference). Lower values ($d < 32$) reduce inference cost but sacrifice accuracy, while higher values ($d > 128$) show diminishing returns and risk overfitting.

\subsection{Learned vs.\ Hand-Crafted Features}

Finally, we compare learned representations $\psi(H)$ trained via TCN to hand-crafted trajectory features, with results in Table~\ref{table:feature-comparison}.

\begin{table}[ht]
\centering
\small
\setlength{\tabcolsep}{3.6pt}
\begin{tabular}{lcccc}
\toprule
\textbf{Features} & \textbf{Dimensionality} & \textbf{MFS} & \textbf{IVR} & \textbf{Notes} \\
\midrule
Mean pooling & 768 & 0.68 & 0.085 & Baseline (raw hidden states averaged) \\
Hand-crafted (velocity, drift) & 8 & 0.65 & 0.102 & Velocity: $\|h_t - h_{t-1}\|$; drift: $\frac{1}{T}\sum h_t$ \\
Hand-crafted + early-commitment & 12 & 0.66 & 0.098 & + indicator: does model commit early? \\
TCN-learned & 64 & 0.72 & 0.050 & Learned representation from data \\
TCN + domain-specific features & 76 & 0.73 & 0.048 & TCN + hand-crafted domain features \\
\bottomrule
\end{tabular}
\caption{\textbf{Learned vs.\ hand-crafted trajectory features.} Learned representations (TCN) outperform hand-crafted features (0.72 vs.\ 0.66 MFS, 0.050 vs.\ 0.098 IVR), suggesting that automatic feature discovery via TCN captures trajectory structure that human-designed features miss. Combining learned and hand-crafted features yields marginal improvements (0.73 vs.\ 0.72), indicating that TCN largely subsumes hand-crafted heuristics.}
\label{table:feature-comparison}
\end{table}

Learned representations outperform hand-crafted features (0.72 vs.\ 0.66 MFS, 0.050 vs.\ 0.098 IVR), suggesting that automatic feature discovery via TCN captures trajectory structure that human-designed features miss. Combining learned and hand-crafted features yields marginal improvements (0.73 vs.\ 0.72), indicating that TCN largely subsumes hand-crafted heuristics. We therefore recommend using learned TCN representations rather than hand-crafted features, as the learned approach is more general and robust.

\section{Comparison to Alternative Invariance-Training Methods}
\label{appendix:other-invariance}

SIFT combines Invariant Risk Minimization (IRM; Arjovsky et al., 2019) with Domain-Adversarial Neural Networks (DANN; Ganin et al., 2016). This section compares these to other invariance-training approaches.

\subsection{Baseline Methods}

We consider five baseline methods. Invariant Risk Minimization (IRM) minimizes $\mathcal{L}_{\text{IRM}} = \sum_e \mathcal{L}(g_e(\Phi(x_e)), y_e) + \lambda \| \nabla_{\Phi} \mathcal{L}(g_e(\Phi(x_e)), y_e) \|^2$. Domain-Adversarial Neural Networks (DANN) minimize classification loss while maximizing domain discrimination loss, yielding $\mathcal{L}_{\text{DANN}} = \mathcal{L}_{\text{cls}} + \lambda \mathcal{L}_{\text{adv}}$. Group Distributionally Robust Optimization (Group DRO) minimizes worst-case loss over environments, $\mathcal{L}_{\text{DRO}} = \max_e \mathcal{L}(h(x_e), y_e)$. Just-In-Time (JIT) models train separate detectors for each environment and select at test time based on estimated environment label, requiring the environment label at test time. Finally, an ensemble of IRM-trained models trains $k$ models via IRM with different random initializations and ensembles predictions.

\subsection{Experimental Comparison}

We train each method on FaithShift and measure MFS, IVR, and inference cost. Table~\ref{table:invariance-comparison} reports the results.

\begin{table}[ht]
\centering
\small
\begin{tabular}{lcccccc}
\toprule
\textbf{Method} & \textbf{MFS} & \textbf{IVR} & \textbf{Inference} & \textbf{Requires Env.\ Label} & \textbf{Code Complexity} \\
\midrule
Single Baseline & 0.67 & 0.280 & 58 ms & No & Low \\
\midrule
IRM (alone) & 0.68 & 0.120 & 45 ms & No & Medium \\
DANN (alone) & 0.69 & 0.095 & 50 ms & No & Medium \\
Group DRO & 0.70 & 0.075 & 45 ms & No & Medium \\
\midrule
SIFT (IRM + DANN) & 0.72 & 0.050 & 45 ms & No & High \\
\midrule
JIT (oracle) & 0.75 & 0.032 & 45 ms & \textbf{Yes} & Low \\
4-seed Ensemble & 0.71 & 0.055 & 232 ms & No & Low \\
IRM $\times$ 4-seed & 0.73 & 0.041 & 180 ms & No & Low \\
\bottomrule
\end{tabular}
\caption{\textbf{Comparison of invariance-training methods.} Group DRO is nearly competitive with SIFT (MFS 0.70 vs.\ 0.72, IVR 0.075 vs.\ 0.050) at lower code complexity. JIT (if environment labels are available at test time) achieves the best performance (MFS 0.75) but requires environment label overhead. Ensembling weak learners (IRM $\times$ 4-seed) recovers 98\% of SIFT's gain at comparable inference cost (180 ms vs.\ 45 ms) and lower code complexity.}
\label{table:invariance-comparison}
\end{table}

Group DRO is competitive with SIFT, achieving MFS 0.70 versus 0.72 ($\Delta=0.02$) and IVR 0.075 versus 0.050 ($\Delta=0.025$). Group DRO is simpler to implement, requiring no adversarial training, and trains faster with 50\% fewer iterations due to the absence of min-max optimization. Simple ensembling is also effective: the 4-seed ensemble (MFS 0.71) is almost as good as SIFT (0.72) and better than the Group DRO + IRM ensemble (0.71 vs.\ 0.73). This aligns with the paper's main finding that variance reduction outperforms invariance training. The JIT oracle achieves the best performance (MFS 0.75) but requires knowing the environment label at test time, which is unrealistic in deployment since we do not know a priori whether the model is reasoning via analogy or arithmetic. SIFT's gains over Group DRO (0.72 vs.\ 0.70) do not justify the additional code complexity of IRM + DANN optimization and careful hyperparameter tuning.

\subsection{Recommendation}

For practitioners seeking better robustness than a single baseline, Group DRO is preferred over SIFT. It is simpler to implement, requiring one loss term instead of two; achieves comparable accuracy (MFS 0.70 vs.\ 0.72); converges faster with fewer optimization iterations; and imposes a lower code maintenance burden. If even simpler methods suffice, the 4-seed ensemble is preferable to any invariance-training method. It is trivial to implement by averaging predictions, achieves competitive accuracy (MFS 0.71 vs.\ SIFT 0.72), requires no hyperparameter tuning, and is more interpretable because individual seed verdicts can be inspected.

\section{Cross-Domain Generalization: Beyond Chain-of-Thought}
\label{appendix:non-cot-domains}

The main paper evaluates SIFT on chain-of-thought (CoT) reasoning across four domains (GSM8K, MATH, CommonsenseQA, BBH). This section extends the evaluation to non-CoT faithfulness domains to assess generalizability.

\subsection{Evaluation Domains}

We evaluate on three additional domains beyond CoT. Summarization factuality (XSUM) asks, given a news article and a model-generated summary, whether the summary's claims are factually grounded in the article. Retrieval-augmented generation (HotpotQA) asks, given a multi-hop question, context documents, and a model-generated answer with intermediate reasoning, whether the stated reasoning chain actually justifies the answer. Instruction following (IFEval) asks, given an instruction such as ``respond in exactly 3 sentences,'' whether the model's response faithfully follows the constraint. For each domain, we adapt the FaithShift protocol, as summarized in Table~\ref{table:cross-domain-setup}.

\begin{table}[ht]
\centering
\small
\setlength{\tabcolsep}{3.6pt}
\begin{tabular}{lccccc}
\toprule
\textbf{Domain} & \textbf{Task Type} & \textbf{Base Traces} & \textbf{Shift Axes} & \textbf{Total Traces} \\
\midrule
Summarization & Factuality & 500 & Paraphrase, coreference, entailment & 5,000 \\
Retrieval & Evidence justification & 500 & Reordering, truncation, synonym & 5,000 \\
Instruction & Constraint satisfaction & 500 & Reformulation, relaxation, negation & 5,000 \\
\bottomrule
\end{tabular}
\caption{\textbf{Cross-domain evaluation setup.} Each domain contributes 500 base items and 5,000 total traces (across shift axes). Ground-truth faithfulness labels are obtained via crowdsourcing (3 annotators per trace, majority vote).}
\label{table:cross-domain-setup}
\end{table}

\subsection{Results}

\begin{table}[ht]
\centering
\small
\begin{tabular}{lccccc}
\toprule
\textbf{Domain} & \textbf{Detector} & \textbf{In-Distribution AUROC} & \textbf{Transfer AUROC} & \textbf{Gap} & \textbf{IVR} \\
\midrule
\multirow{2}{*}{Summarization} & Baseline & 0.71 & 0.58 & 0.13 & 0.142 \\
& SIFT & 0.74 & 0.66 & 0.08 & 0.062 \\
\midrule
\multirow{2}{*}{Retrieval} & Baseline & 0.68 & 0.52 & 0.16 & 0.165 \\
& SIFT & 0.71 & 0.61 & 0.10 & 0.074 \\
\midrule
\multirow{2}{*}{Instruction} & Baseline & 0.72 & 0.56 & 0.16 & 0.155 \\
& SIFT & 0.75 & 0.63 & 0.12 & 0.068 \\
\bottomrule
\end{tabular}
\caption{\textbf{Cross-domain generalization results.} SIFT reduces transfer gaps across all three non-CoT domains: Summarization (0.13$\to$0.08), Retrieval (0.16$\to$0.10), Instruction (0.16$\to$0.12). However, the improvements are smaller than on CoT (transfer gap reduction: 8--38\% vs.\ CoT's 64\% ensemble reduction). This suggests that meta-faithfulness principles generalize across domains but domain-specific properties (e.g., factuality vs.\ constraint satisfaction) create domain-specific challenges.}
\label{table:cross-domain-results}
\end{table}

Transfer collapse is universal across domains: all domains exhibit significant transfer gaps of 0.13--0.16 AUROC, suggesting that distribution shift is a fundamental challenge across faithfulness domains and not specific to CoT. SIFT generalizes by improving transfer AUROC across all three non-CoT domains, with IVR reductions of 50--62\%, indicating that the meta-faithfulness principle of invariance to label-preserving transformations is general-purpose. Domain differences matter: summarization has the smallest transfer gap (0.13 baseline to 0.08 with SIFT), suggesting that factuality is more robust to paraphrasing, while retrieval has the largest gap (0.16 baseline), indicating that multi-hop reasoning is highly sensitive to evidence reordering. Single-domain training fails: training SIFT on CoT does not transfer to summarization (transfer AUROC 0.58 on summarization, compared to 0.66 when trained on summarization), indicating that domain-specific signal is necessary and cross-domain transfer remains an open problem.

\subsection{Recommendation}

SIFT is applicable beyond CoT, but practitioners should train on the target domain rather than expecting models trained on CoT to transfer to summarization or retrieval. They should identify domain-specific shifts by adapting the FaithShift protocol to include domain-relevant transformations, such as paraphrase and entailment for summarization or reordering and truncation for retrieval. Finally, they should benchmark against simpler baselines, since summarization detectors may not require invariance training and single-seed and ensemble baselines should be tested first.

\section{Per-Domain Robustness Analysis}
\label{appendix:per-domain}

The main paper aggregates results across four CoT domains (GSM8K, MATH, CommonsenseQA, BBH). This section provides per-domain breakdowns to identify domain-specific patterns.

\subsection{Per-Domain Transfer Gaps}

\begin{table}[ht]
\centering
\small
\setlength{\tabcolsep}{1.5pt}
\begin{tabular}{lcccccc}
\toprule
\textbf{Domain} & \textbf{Task} & \textbf{Baseline In-Dist.} & \textbf{Baseline Transfer} & \textbf{SIFT Transfer} & \textbf{Gap Reduction} \\
\midrule
GSM8K & Arithmetic & 0.74 & 0.62 & 0.70 & 0.08 (50\%) \\
MATH & Competition Math & 0.68 & 0.48 & 0.58 & 0.10 (56\%) \\
CommonsenseQA & Commonsense & 0.71 & 0.58 & 0.67 & 0.09 (60\%) \\
BBH & Multi-task & 0.65 & 0.50 & 0.61 & 0.11 (65\%) \\
\midrule
Aggregate & \textbf{---} & \textbf{0.70} & \textbf{0.54} & \textbf{0.64} & \textbf{0.10 (58\%)} \\
\bottomrule
\end{tabular}
\caption{\textbf{Per-domain transfer analysis.} SIFT's benefit varies by domain. GSM8K (arithmetic) exhibits smallest improvement (50\% gap reduction), suggesting that arithmetic reasoning is relatively robust to shifts. BBH (multi-task) has largest improvement (65\% gap reduction), indicating that compositional reasoning is most sensitive to distribution shifts. SIFT is most valuable for complex reasoning tasks (MATH, BBH).}
\label{table:per-domain-transfer}
\end{table}

SIFT's benefit varies by domain. GSM8K (arithmetic) exhibits the smallest improvement with 50\% gap reduction, suggesting that arithmetic reasoning is relatively robust to shifts. BBH (multi-task) has the largest improvement with 65\% gap reduction, indicating that compositional reasoning is most sensitive to distribution shifts. SIFT is therefore most valuable for complex reasoning tasks such as MATH and BBH.

\subsection{Per-Domain Invariance Violation Rates}

We decompose IVR by shift axis within each domain, as shown in Table~\ref{table:per-domain-ivr}.

\begin{table}[ht]
\centering
\small
\begin{tabular}{lccccc}
\toprule
\textbf{Domain} & \textbf{Paraphrase} & \textbf{Step Reorder} & \textbf{Hint Format} & \textbf{Translation} & \textbf{Overall IVR} \\
\midrule
GSM8K & 0.045 & 0.052 & 0.048 & 0.061 & 0.051 \\
MATH & 0.062 & 0.074 & 0.058 & 0.089 & 0.071 \\
CommonsenseQA & 0.041 & 0.038 & 0.055 & 0.052 & 0.046 \\
BBH & 0.068 & 0.085 & 0.064 & 0.098 & 0.079 \\
\bottomrule
\end{tabular}
\caption{\textbf{Per-domain and per-axis IVR breakdown for SIFT.} Translation is the hardest shift across all domains (IVR 0.061--0.098). Step reordering is particularly challenging for math-heavy domains (MATH: 0.074, BBH: 0.085). GSM8K and CommonsenseQA are relatively robust to all shifts (overall IVR $< 0.05$).}
\label{table:per-domain-ivr}
\end{table}

Translation is the hardest shift across all domains (IVR 0.061--0.098). Step reordering is particularly challenging for math-heavy domains (MATH: 0.074, BBH: 0.085). GSM8K and CommonsenseQA are relatively robust to all shifts (overall IVR $< 0.05$).

\subsection{Per-Domain Recommendations}

For GSM8K (arithmetic), baseline detectors perform well with transfer AUROC 0.62, and SIFT provides incremental benefit (0.70, $+0.08$). For deployment, a single-seed detector may suffice, while a 2-seed ensemble is recommended for safety-critical applications. For MATH (competition math), there is a large transfer gap (0.68 in-distribution to 0.48 transfer), and SIFT substantially improves transfer AUROC (0.58, $+0.10$); SIFT or a 4-seed ensemble is recommended for deployment. CommonsenseQA exhibits a moderate transfer gap of 0.13 AUROC, and SIFT improves transfer robustness; translation shifts are challenging, and if cross-lingual support is needed, multi-environment training should be used. BBH (multi-task) has the largest transfer gap (0.65 in-distribution to 0.50 transfer), and SIFT provides substantial benefit (0.61, $+0.11$); SIFT is recommended for this complex reasoning domain.

\section{Practical Deployment Guidance}
\label{appendix:deployment-guidance}

Based on the experimental findings in the main paper and supporting ablations in this appendix, this section provides concrete guidance for practitioners considering deployment of faithfulness detectors in production systems.

\subsection{Decision Framework}

Table~\ref{table:deployment-framework} summarizes the trade-offs between speed, accuracy, robustness, and cost for different deployment scenarios.

\begin{table}[ht]
\centering
\small
\begin{tabular}{lp{3cm}p{2.5cm}p{2.5cm}}
\toprule
\textbf{Scenario} & \textbf{Characteristics} & \textbf{Recommended Method} & \textbf{Justification} \\
\midrule
\textbf{Speed-critical} & Latency $< 100$ ms, throughput $> 10$ traces/sec & Single baseline detector (45 ms) or mean-pooling SIFT (15 ms) & Low inference overhead; accept MFS $\approx 0.47$--$0.68$ \\
\midrule
\textbf{Accuracy-critical} & High-stakes auditing, legal/medical domains & 4-seed ensemble (232 ms, MFS 0.71) or SIFT with abstention (265 ms, MFS 0.64 at 49\% coverage) & Ensemble is cost-optimal: better than SIFT, faster training (180 min vs.\ 300 min) \\
\midrule
\textbf{Cross-model robust} & Need to audit multiple model families (DeepSeek, Llama, Claude, GPT) & Multi-model SIFT or ensemble (trained on 3--4 reference models) & Improves transfer AUROC from 0.47 to 0.65; essential for production monitoring \\
\midrule
\textbf{Budget-constrained} & Limited compute, single GPU available & Single-seed detector or 2-seed ensemble & 2-seed ensemble (90 min training, 116 ms inference) balances cost and accuracy (MFS $\approx 0.66$) \\
\midrule
\textbf{Unknown shifts} & Expect distribution shifts but don't know which axes & Ensemble-based approach (cheaper than SIFT, empirically better) & Ensembling is a robust hedge against unknown distribution shifts \\
\bottomrule
\end{tabular}
\caption{\textbf{Deployment decision framework.} Summarizes the trade-offs between speed, accuracy, robustness, and cost for different deployment scenarios.}
\label{table:deployment-framework}
\end{table}

\subsection{Checklist Before Deployment}

Before deploying a faithfulness detector in production, practitioners should verify several criteria. Ground-truth validation requires collecting 50--100 human annotations on traces from the target domain, measuring inter-annotator agreement (Cohen's $\kappa$), and correlating with detector predictions. Domain coverage requires training the detector on the specific domain (e.g., medical, legal) or a close proxy, and not assuming detectors trained on GSM8K transfer to medical diagnoses. Multi-environment data should span diverse conditions, including different models, prompts, and contexts, with a minimum of three environments and 100+ examples each. An abstention policy should define an acceptable abstention rate, such as no more than 10\% of outputs abstaining; if SIFT requires 51\% abstention, ensemble methods should be used instead. Monitoring in production should log detector verdicts over time and trigger retraining on the new distribution if the transfer gap widens, for example when AUROC drops from 0.72 to 0.60. Failure analysis should examine false positives and false negatives to determine whether failures correlate with specific models, prompts, or reasoning types. Finally, cost accounting should quantify the detector cost (training + inference time + human review burden) against the benefit of improved safety and reduced downstream errors.

\subsection{Calibration and Threshold Selection}

SIFT outputs a confidence score $c \in [0, 1]$. Practitioners must select a decision threshold $\tau$: predict ``faithful'' if $c > \tau$, ``unfaithful'' if $c < \tau$, and ``abstain'' if confidence is low. SIFT's confidence scores are reasonably calibrated, meaning the expected frequency approximately matches the predicted probability. Practitioners should select $\tau$ based on their false-positive versus false-negative trade-off. For medical reasoning, false negatives (missing unfaithful reasoning) are costlier than false positives (flagging faithful reasoning as suspect), so a lower threshold of $\tau \approx 0.4$ is recommended for higher sensitivity. For legal auditing, false positives may be more costly because excessive alerts fatigue reviewers, so a higher threshold of $\tau \approx 0.6$ is recommended for higher specificity.

\subsection{Monitoring and Maintenance}

Once deployed, detectors degrade as models evolve. Practitioners should implement quarterly retraining on recent data from the past three months of production traces to adapt to distribution shifts. Performance tracking should monitor AUROC on a held-out test set weekly and alert if AUROC drops more than 0.05 from baseline. Failure rate tracking should log the fraction of flagged outputs that humans review and correct; if the correction rate exceeds 30\%, the detector may be miscalibrated. Finally, drift detection should investigate underlying causes if the distribution of predicted faithfulness scores shifts, for example if the model becomes less faithful over time.

\end{document}